\theoremstyle{plain}
\newtheorem{theorem}{\protect\theoremname}
\newtheorem{corollary}{Corollary}
\theoremstyle{plain}
\newtheorem{lemma}{\protect\lemmaname}
\theoremstyle{plain}
\newtheorem{proposition}{\protect\propositionname}
\theoremstyle{definition}
\newtheorem{assumption}{Assumption}
\newtheorem{assumptionB}{}
\newtheorem{assumptionA}{Assumption}
\providecommand{\definitionname}{Definition}
\providecommand{\lemmaname}{Lemma}
\providecommand{\propositionname}{Proposition}
\providecommand{\theoremname}{Theorem}
\newcommand{\+}[1]{\mathbf{#1}}
\renewcommand{\P}{\mathbb{P}}
\renewcommand{\d}{\mathrm{d}}
\newcommand{\F}{\mathrm{F}}
\newcommand{\R}{\mathbb{R}}
\newcommand{\E}{\mathbb{E}}
\renewcommand{\O}{\mathcal{O}}
\newcommand{\triplenorm}[1]{{\left\vert\kern-0.25ex\left\vert\kern-0.25ex\left\vert #1 
    \right\vert\kern-0.25ex\right\vert\kern-0.25ex\right\vert}}
\DeclareMathOperator{\Poisson}{Poisson}
\DeclareMathOperator{\diag}{diag}
\DeclareMathOperator{\tr}{tr}
\DeclareMathOperator*{\argmax}{arg\,max}
\DeclareMathOperator*{\argmin}{arg\,min}
\renewcommand{\hat}{\widehat}
\renewcommand{\epsilon}{\varepsilon}
\newcommand{\leqc}{\lesssim}
\newcommand{\leqcP}{\overset{\P}{\lesssim}}
\newcommand{\geqc}{\gtrsim}
\newcommand{\geqcP}{\overset{\P}{\gtrsim}}
\newcommand{\eqc}{\asymp}
\newcommand{\norm}[1]{\left\| #1 \right\|}
\newcommand{\smallnorm}[1]{\| #1 \|}
\newcommand{\brackets}[1]{\left( #1 \right)}
\newcommand{\smallbrackets}[1]{( #1 )}
\newcommand{\curlybrackets}[1]{\left\{ #1 \right\}}
\newcommand{\smallcurlybrackets}[1]{\{ #1 \}}
\newcommand{\squarebrackets}[1]{\left[ #1 \right]}
\newcommand{\hU}{\hat{\+U}}
\newcommand{\hS}{\hat{\+S}}
\newcommand{\hV}{\hat{\+V}}
\newcommand{\hL}{\hat{\+\Lambda}}
\newcommand{\bU}{\bar{\+U}}
\newcommand{\bS}{\bar{\+S}}
\newcommand{\bV}{\bar{\+V}}
\newcommand{\bL}{\bar{\+\Lambda}}
\title{Intensity Profile Projection: A Framework for Continuous-Time Representation Learning for Dynamic Networks}
\author{%
Alexander Modell$^{1}$ \quad Ian Gallagher$^{2}$ \quad Emma Ceccherini$^2$ \quad Nick Whiteley$^2$ \\ \textbf{Patrick Rubin-Delanchy}$^2$\\
$^1$Imperial College London, U.K. \quad $^2$University of Bristol, U.K.\\
\texttt{a.modell@imperial.ac.uk}, \quad
\texttt{\{ian.gallagher,emma.ceccherini,}\\\texttt{nick.whiteley,patrick.rubin-delanchy\}@bristol.ac.uk}
}
\begin{document}

\maketitle

\begin{abstract}
We present a new representation learning framework, Intensity Profile Projection, for continuous-time dynamic network data.
Given triples $(i,j,t)$, each representing a time-stamped ($t$) interaction between two entities ($i,j$), our procedure returns a continuous-time trajectory for each node, representing its behaviour over time. The framework consists of three stages: estimating pairwise intensity functions, e.g. via kernel smoothing; learning a projection which minimises a notion of intensity reconstruction error; and constructing evolving node representations via the learned projection. The trajectories satisfy two properties, known as structural and temporal coherence, which we see as fundamental for reliable inference. Moreoever, we develop estimation theory providing tight control on the error of any estimated trajectory, indicating that the representations could even be used in quite noise-sensitive follow-on analyses. The theory also elucidates the role of smoothing as a bias-variance trade-off, and shows how we can reduce the level of smoothing as the signal-to-noise ratio increases on account of the algorithm `borrowing strength' across the network. 
\end{abstract}




\section{Introduction}
Making sense of patterns of connections occurring over time is a common theme of modern data analysis and is often approached in one of two ways. On the one hand, we may see dynamic network data as a \emph{graph}, in which connections between the same entities over time are somehow treated as one, e.g. through weighting. This view evokes methodological ideas such as community detection \cite{newman2006modularity,abbe2017community}, topological data analysis \cite{wasserman2018topological}, or manifold learning \cite{rubin2020manifold,10.1214/20-STS787}. On the other, we may see the data as a set of \emph{point processes} \citep{perry2013point}, each modelling the event times of connections between two entities. This view evokes temporal notions such as trend, changepoints and periodicity. 



In this paper, we develop a representation learning framework for continuous-time dynamic network data in which ideas from both the graph and temporal domains can be combined. The framework obtains a continuously evolving trajectory for each node which represents its continuously evolving behaviour in the network. These trajectories can be used for data exploration, inference and prediction tasks such as spatio-temporal clustering, behavioural analytics, detecting network trends and periodicities, and forecasting. Our framework provides two basic guarantees which could reasonably be viewed as minimum requirements for these tasks:
\begin{enumerate}
\item \emph{structural coherence}: when two nodes exhibit similar behaviour at a point in time, their representations at that time are close;
\item \emph{temporal coherence}: when a node exhibits similar behaviours at two points in time, its representations at those times are close.
\end{enumerate}
Despite this, almost all existing embedding algorithms fail to satisfy both properties, and there is a perception that some trade-off between the two is necessary \citep{xue2022dynamic, rauber2016visualizing}. To our knowledge, unfolded spectral embedding \citep{gallagher2021spectral,jones2020multilayer} is the only exception, but the method is limited to discrete time.


At its core, our procedure is a spectral method and can be implemented at scale using a sparse singular value decomposition. We give a preliminary motivation for the algorithm as minimising a notion of reconstruction error, before developing more rigorous estimation theory: a non-asymptotic, uniform error bound under minimal assumptions on the data generating process, which draws on recent developments in entrywise eigenvector estimation for random matrices \cite{abbe2020entrywise, lei2019unified, fan2018eigenvector, xie2021signal, jones2020multilayer}. This tight control of the behaviour of estimated trajectories means that even quite noise-sensitive follow-on analyses, such as topological data analysis, could plausibly be shown to be consistent.

With appropriate development, the ideas of this paper could be brought to bear on several problems. The first is building a coherent narrative about the nodes based on an embedding. In contact-tracing data for interactions between children at school (Section~\ref{sec:real_data}), we see trajectories mixing during lunch hours and then \emph{returning} to their earlier positions, which represent physical classrooms. Both types of coherence are necessary for this description to be possible. We cannot see this, for example, just by looking at the pattern of clustering over time. Given richer and larger datasets it could be possible to automate this ``story-telling'' process. Among many possible applications, such a technology could transform how we use contact-tracing data in the future. Second, the jump from discrete to continuous time has substantial implications for mathematical modelling, since it makes notions such as continuity and smoothness possible. This work could pave the way to describing structural dynamics in formal mathematical and quantitative terms, for example, what we mean by ``communities splitting'' \citep{gross2019rise}, including the \emph{exact time} at which this occurs, or by ``network polarisation'' \citep{esteban1994measurement}, and how it could be measured, e.g. as a type of derivative. Finally, in criminal investigations, dynamic networks are often analysed to locate an entity, such as a victim of human-trafficking, whose pseudonym or other identifier is changing \citep{szekely2015building}. Similarly, in dynamic networks of corporate contracting, it is useful to detect when one company is acting like another in the past, e.g. when a shell company takes over the illegal operations of a sanctioned company \citep{ms_blog}. Since two nodes acting the same way, at different epochs, are still embedded to the same position, our framework could lead to novel matching and tracking technologies.




\paragraph{Related work.} While modelling and performing inference on continuous-time dynamic networks has a well-established literature \citep{butts20084, vu2011continuous, perry2013point, passino2022mutually}, the majority of existing methods for representation learning obtain a single, static representation of each node \citep{dubois2013stochastic, corneli2016block, matias2018semiparametric, bhattacharyya2018spectral, junuthula2019block, arastuie2020chip, yang2017decoupling, huang2022mutually, nguyen2018continuous}, and we only aware of two existing methods which learn continuously evolving node representations from the data we consider \citep{rastelli2021continuous, artico2023fast}. Representation learning for discrete-time dynamic networks has a more established literature, and algorithms broadly fall under community detection methods \citep{fu2009dynamic, xu2014dynamic, matias2017statistical, wilson2019modeling}, fitting latent position models \cite{sarkar2006dynamic, sewell2015latent, friel2016interlocking}, spectral methods \cite{levin2017central, liu2018global, jones2020multilayer, gallagher2021spectral, cape2021spectral} and word-embedding-based methods \cite{du2018dynamic, mahdavi2018dynnode2vec}. For a specific choice of intensity estimator (the histogram), our method can be viewed as a weighted graph analogue of unfolded spectral embedding \cite{gallagher2021spectral,jones2020multilayer}, a spectral method for discrete-time and multilayer networks, but those papers consider different data and models.
Given how limited the options are for handling continuous time, in our method comparison we also include some discrete-time methods which could reasonably be used as alternatives.

\section{Intensity Profile Projection} 
{\bf Data.} We consider dynamic network data, denoted $\mathcal{G}$, representing instantaneous undirected interactions between nodes over time, which we define formally as $\mathcal{G} = (\mathcal{V}, \mathcal{E})$ on a time domain $\mathcal{T} = (0,T]$, containing a vertex set $\mathcal{V} = [n]$ and a set of triples $\mathcal{E} = \curlybrackets{\brackets{i_e,j_e,t_e}}_{e\geq 1}$, each corresponding to an undirected interaction event, where $i_e < j_e \in \mathcal{V}, t_e \in \mathcal{T}$. We let $\*E_{ij} := \smallcurlybrackets{t : (i,j,t) \in \*E}$ denote the interaction events between nodes $i$ and $j$.

{\bf Model.} We assume the interaction events $\*E_{ij}$ are driven by an independent inhomogeneous Poisson process with intensity $\lambda_{ij}(t)$. Informally:
\begin{equation*}
  \lambda_{ij}(t) \d t = \P \curlybrackets{\text{interaction between nodes $i$ and $j$ in $(t, t + \d t]$}}.
\end{equation*}
We represent these intensities in a symmetric time-varying matrix $\+\Lambda(\cdot): \mathcal{T} \to \R_+^{n \times n}$.

{\bf Procedure.} Intensity Profile Projection can be summarised as follows.
\begin{enumerate}
  \item {\bf Intensity estimation.} Construct intensity estimates $\hat \lambda_{ij}(\cdot)$ of $\lambda_{ij}(\cdot)$ from $\*E_{ij}$ for all $i<j$.
  \item {\bf Subspace learning.} Compute the top $d$ eigenvectors $\hU_d = \brackets{\hat u_1,\ldots, \hat u_d}$ of
  \begin{equation}
  \label{eq:Sigma_int}
    \hat{\+\Sigma} := \frac{1}{T}\int_0^T \hat{\+\Lambda}^2(t) \d t,
  \end{equation}
  where $\hL(t)$ has symmetric entries $\hat \lambda_{ij}(t)$, and rows denoted $\hat \Lambda_i(t)$ called \emph{intensity profiles}.
    \item {\bf Projection.} For a query node $i$ at time $t$, project the intensity profile $\hat \Lambda_i(t)$ onto the subspace spanned by $\hat u_1,\ldots,\hat u_d$, to obtain $\hat X_i(t) = \hU_d^\top \hat\Lambda_i(t)$.   

\end{enumerate}
While we develop more principled statistical justifications for the procedure in future sections, it is inspired by a simple reconstruction argument. For an arbitrary $d$-dimensional subspace spanned by the orthonormal columns of a matrix $\+V_d \in \R^{n \times d}$, let
\begin{equation*}
  \hat r_i(t; \+V_d) := \norm{\+V_d\+V_d^\top \hat \Lambda_i(t) - \hat\Lambda_i(t)}_2
\end{equation*}
denote the reconstruction error of node $i$ at time $t$, and define the \emph{integrated residual sum of squares} as
\begin{equation*}
  \hat R^2(\+V_d) := \int_0^T \sum_{i=1}^n  \hat r_i^2(t; \+V_d) \:\d t.
\end{equation*}

\begin{lemma}
\label{lemma:min_IRSS}
  Among all $d$-dimensional subspaces of $\R^n$, the column span of $\hat{\+U}_d$ minimises the integrated residual sum of squares criterion $\hat R^2$.
\end{lemma}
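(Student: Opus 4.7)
The plan is to rewrite the criterion in closed form and reduce it to a standard eigenvalue extremal problem.

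First, I would write $P := \+V_d \+V_d^\top$ for the orthogonal projector onto the column span of $\+V_d$, so $\hat r_i^2(t;\+V_d) = \|(I_n - P)\hat\Lambda_i(t)\|_2^2$. Since $\hat\Lambda(t)$ is symmetric, its $i$-th row (viewed as a column vector) coincides with its $i$-th column, and thus $\sum_{i=1}^n \|(I_n - P)\hat\Lambda_i(t)\|_2^2 = \|(I_n - P)\hat\Lambda(t)\|_F^2$. Using cyclicity of the trace together with the fact that $I_n - P$ is an idempotent symmetric matrix, this equals $\tr\bigl[(I_n - P)\hat\Lambda(t)^2\bigr]$.

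Next I would integrate in $t$, exchange the integral and the trace by linearity, and apply the definition of $\hat{\+\Sigma}$ to obtain
\begin{equation*}
\hat R^2(\+V_d) \;=\; \int_0^T \tr\bigl[(I_n - P)\hat\Lambda(t)^2\bigr]\,\d t \;=\; T\cdot\tr\bigl[(I_n - P)\hat{\+\Sigma}\bigr] \;=\; T\bigl(\tr(\hat{\+\Sigma}) - \tr(\+V_d^\top \hat{\+\Sigma}\,\+V_d)\bigr).
\end{equation*}
Since the first term is independent of $\+V_d$, minimising $\hat R^2(\+V_d)$ over $d$-dimensional subspaces is equivalent to maximising $\tr(\+V_d^\top \hat{\+\Sigma}\,\+V_d)$ over $n\times d$ matrices with orthonormal columns.

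The matrix $\hat{\+\Sigma}$ is symmetric and positive semi-definite (being an integral of squares of symmetric matrices), so the Ky Fan maximum principle (equivalently, the Courant--Fischer theorem applied iteratively) states that this trace is maximised precisely when the columns of $\+V_d$ span the top-$d$ eigenspace of $\hat{\+\Sigma}$, giving $\+V_d = \hU_d$ as one such maximiser. This proves the claim.

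None of the steps is a genuine obstacle; the only thing to be careful about is the symmetry argument used to identify rows with columns, which is what allows the cross-term to collapse into $\hat\Lambda(t)^2$ and hence into the integrand defining $\hat{\+\Sigma}$. Without symmetry one would instead pick up $\hat\Lambda(t)\hat\Lambda(t)^\top$, which is harmless here but worth noting.
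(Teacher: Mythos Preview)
Your proof is correct and follows essentially the same route as the paper: both reduce the problem to maximising $\tr(\+V_d^\top \hat{\+\Sigma}\,\+V_d)$ over orthonormal $\+V_d$ and then invoke Courant--Fischer/Ky Fan. The only cosmetic difference is that the paper reaches this trace via the Pythagorean identity $\|\hat\Lambda_i(t)\|_2^2 = \|\+V_d\+V_d^\top\hat\Lambda_i(t)\|_2^2 + \|(\+I-\+V_d\+V_d^\top)\hat\Lambda_i(t)\|_2^2$, whereas you go directly through $\|(I_n-P)\hat\Lambda(t)\|_F^2 = \tr[(I_n-P)\hat\Lambda(t)^2]$; your explicit remark on where symmetry of $\hat\Lambda(t)$ enters is a nice touch that the paper leaves implicit.
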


Lemma~\ref{lemma:min_IRSS} may be viewed as a dynamic analogue to the classical Eckart-Young theorem on low-rank matrix approximation \cite{eckart1936approximation}. A proof is given in Section~\ref{sec:eckart_young_proof} of the appendix.

\subsection{Intensity estimation}
The choice of intensity estimator is left fully open, but our theory makes two important recommendations. First, there are computational gains to be made using sparse estimators for subspace learning. Second, the procedure borrows strength across the network, and can give precise representations even when the individual intensity estimates are noisy (e.g. inconsistent). In our experiments, we focus on standard non-parametric estimators such as the histogram or kernel smoothers, and choose kernels with finite support to induce sparse estimates.

\begin{algorithm}[t]
  \caption{Approximate Intensity Profile Projection}\label{alg:clustering}
  \vspace{0.2em}
\begin{flushleft}
  \textbf{Input:} Continuous time dynamic graph $\*G$, dimension $d$.
\end{flushleft}
  \vspace{-0.5em}
 \begin{algorithmic}[1]
   \STATE Construct intensity estimates $\hat \lambda_{ij}(\cdot)$ of $\lambda_{ij}(\cdot)$ from $\*E_{ij}$ for all $i<j$.
   \STATE Compute the top $d$ left singular vectors $\hat u_1,\ldots, \hat u_d$ of
   \begin{equation*}
   \squarebrackets{\hL(t_1) \:\; \hL(t_2) \:\; \cdots \:\; \hL(t_B)}
   \end{equation*}
   where $t_1 < \cdots < t_B$ are equally spaced points on $(0,T]$.
   \STATE Define the trajectory of node $i$ as
   \begin{equation*}
     \hat X_i(t) := \hU_d^\top \hat\Lambda_i(t)
   \end{equation*}
   where $\hat{\+U}_d := \brackets{\hat u_1,\ldots, \hat u_d}$.
 \end{algorithmic}
    \textbf{Output:} Node trajectories $\hat X_1(t), \ldots, \hat X_n(t)$.
  \label{alg:sparse_IPP}
  \end{algorithm}

\subsection{Subspace learning}
The subspace learning step of our procedure involves the computation of an integral, and computing the eigendecomposition of the resulting dense matrix $\hat{\+\Sigma}$, both of which may be infeasible for large networks. If a sparse intensity estimator is employed in step 1 of the procedure and we approximate the integral \eqref{eq:Sigma_int} using a numerical quadrature scheme, then step 2 can be rephrased as a single sparse, truncated singular value decomposition, which can be computed quickly for very large networks using a efficient solver \cite{lehoucq1995analysis,baglama2005augmented}.

Consider the numerical approximation
\begin{equation}
\label{eq:quadrature_approx}
    \hat{\+\Sigma} \approx \frac{1}{B}\sum_{b=1}^B \hat{\+\Lambda}^2(t_b)
\end{equation}
where $t_1 < \cdots < t_B$ are equally spaced points on $(0,T]$.
The top $d$ eigenvectors of the right-hand-side of \eqref{eq:quadrature_approx} are then equal\footnote{Up to signs, rotations in the eigenspaces in the case of repeated eigenvalues, and assuming a gap between the $d$th and $(d+1)$th eigenvalues.} to the top $d$ left singular vectors of the matrix
\begin{equation}
\label{eq:lambda_unfolded}
  \squarebrackets{\hL(t_1) \:\; \hL(t_2) \:\; \cdots \:\; \hL(t_B)},
\end{equation}
the row concatenation of $\hL(t_1), \ldots, \hL(t_B)$.
This procedure is presented in Algorithm~\ref{alg:sparse_IPP}, and we discuss its computational complexity in Section~\ref{sec:time_complexity} of the appendix.

\subsection{Projection}
 
The inductive nature of the Intensity Profile Projection allows us to obtain representations $\hat X_i(t)$ on demand, for example, the full trajectory for a particular node, or the representations of the entire graph at a point in time.
It is possible to obtain representations for intensity profiles outside the training sample, corresponding to new nodes or times outside the training domain, allowing online inference. In practice, one will need to retrain occasionally, i.e. return to step 2, although we leave the discussion of this computational and statistical trade-off for future work (see, for example, \cite{levin2018out} in the context of static networks).

\section{Estimation theory}

In this section, we develop estimation theory showing the sense in which $\hat X_i(t)$ is a ``good'' estimator of $X_i(t) := \+U_d^\top \Lambda_i(t)$ where $\+U_d = (u_1,\ldots,u_d) \in \R^{n\times d}$ is the matrix containing the top-$d$ orthonormal eigenvectors of
\begin{equation*}
  \+\Sigma := \frac{1}{T}\int_0^T \+\Lambda^2(t) \d t.
\end{equation*}




In this section, we assume, without loss of generality, that $\*T = (0,1]$. We now introduce some quantities which appear in our main theorem. Firstly, we assume that each $\lambda_{ij}(\cdot)$ is Lipschitz with constant $L$, and is upper bounded by $\lambda_{\max}$.
Secondly, we define the (reduced) condition number and the eigengap,
\begin{equation*}
  \kappa := \frac{\sigma_1}{\sigma_d},\qquad  \text{and} \qquad \delta := \sigma_d - \sigma_{d+1},
\end{equation*}
respectively, where $\sigma_1^2 \geq \cdots \geq \sigma_n^2$ are eigenvalues of $\+\Sigma$.
Finally, we introduce the subspace coherence parameter
\begin{equation*}
  \mu := \sqrt{\frac{n}{d}}\norm{\+U_d}_{2,\infty},
\end{equation*}
which is small when, informally, information about a single entry of $\+\Sigma$ is ``spread out'' across the matrix \cite{candes2012exact}.

Rather than attempt to develop a theoretical framework encompassing all intensity estimators, we choose arguably the most rudimentary, the histogram, and we expect more powerful estimators will only improve matters. This choice of estimator is also attractive because it allows us pinpoint the crucial practical considerations at play. 

\paragraph{Notation.} We say an event $E$ occurs \emph{with overwhelming probability} if $\P(E) \geq 1 - n^{-c}$ for any constant $c>0$. We use $a\leqc b$ to denote that $a \leq Cb$ where $C>0$ is a universal constant which, when qualified with the prior probabilistic statement, may depend on the constant $c$. Additionally, we write $a \eqc b$ if $a \leqc b$ and $a \geqc b$.

We now state the assumptions we require for our theorem. Our first assumption is that the intensities are bounded.

\begin{assumption}[Bounded intensities]
\label{assump:bounded_intensities}
    The intensities are upper bounded by a constant which doesn't depend on the other quantities in the problem; i.e. $\lambda_{\max} \leqc 1$.
\end{assumption}

Our second assumption is on the population integrated residuals. It ensures that the intensity profiles  $\Lambda_1(t),\ldots,\Lambda_n(t)$ do not deviate ``too much'' from a common low-dimensional subspace. 
\begin{assumption}[Small population residuals]
\label{assump:residuals}
  The population residuals satisfy $$r_1(t),\ldots,r_n(t) \leqc \sqrt{\frac{d}{n}}\mu\delta\log^{5/2} n$$ for all $t\in [0,1]$.
\end{assumption}

Our third assumption is a technical condition on the eigengap which, broadly speaking, ensures that there is ``enough signal''.
\begin{assumption}[Enough signal]
\label{assump:enough_signal}
The eigengap satisfies
  $
    \delta \log(\delta / \sqrt{n\lambda_{\max}}) \geqc \kappa n \lambda_{\max}.
  $
\end{assumption}

Our final assumption is on the bin size, and ensures that the bins are not chosen ``too small''.
\begin{assumption}[Large enough bins]
\label{assump:bin_size}
The number of bins satisfies $M \leqc n\lambda_{\max} / \log^3 n$.
\end{assumption}

These assumptions are weaker than those typically required in the literature (e.g. on stochastic block models and random dot product graphs \cite{abbe2017community, athreya2017statistical}). To emphasise this point, consider the following stronger alternative assumptions which imply Assumptions~\ref{assump:bounded_intensities}, \ref{assump:residuals} and \ref{assump:enough_signal}:
\begin{assumptionA}
\label{assump:comparable_intensities}
    The intensities $\lambda_{ij}(t)$ are of comparable order, i.e. $\lambda_{ij}(t) \eqc \rho$ for some $\rho \leqc 1$ and all $i,j\in[n],t\in(0,1]$.
\end{assumptionA}
\begin{assumptionA}
\label{assump:low_rank_well_conditioned}
    The matrix $\+\Sigma$ has rank $d \eqc 1$; is incoherent, i.e. $\mu \eqc 1$; and its non-zero eigenvectors are of comparable order, i.e. $\sigma_1 \eqc \sigma_d > \sigma_{d+1} = 0$.
\end{assumptionA}
%
It is immediate that Assumption~\ref{assump:comparable_intensities} implies Assumption~\ref{assump:bounded_intensities} and under Assumption~\ref{assump:low_rank_well_conditioned}, the population residuals are all exactly zero, $\kappa \eqc 1$ and $\delta \eqc n\rho$, which implies Assumptions~\ref{assump:residuals} and \ref{assump:enough_signal}. 

Assumption~\ref{assump:bin_size} requires that the expected number of events involving each node in each bin is at least of the order $\log^3 n$. This is analogous to the $\log n$ degree growth required for perfect clustering under the binary stochastic block model. Since the latter is an information-theoretic bound \cite{abbe2015community} and the additional logarithmic powers in our work stem from the sub-exponential tails of the Poisson distribution, we do not think this assumption can be weakened. 


We now state our main theorem, which under Assumptions~\ref{assump:bounded_intensities}-\ref{assump:bin_size}, provides a non-asymptotic bound on the error between the learned representations and their population counterparts, which holds uniformly over the whole node-set and the time domain.

\begin{theorem}
\label{thm:main_theorem}
  Suppose that $\hat\lambda_{ij}(t)$ are histogram estimates with $M$ equally-spaced bins and that Assumptions~\ref{assump:bounded_intensities}-\ref{assump:bin_size} hold. Then with overwhelming probability, there exists an orthogonal matrix $\+W$ such that
  \begin{equation}
  \label{eq:main_bound}
    \max_{i \in [n]} \sup_{t\in(0,1]} \norm{\+W \hat X_i(t) - X_i(t)}_2 \leqc\frac{n^{3/2} L \lambda_{\max}}{M\delta} + \mu \sqrt{M \lambda_{\max} d} \cdot \log^{5/2} n.
  \end{equation}
\end{theorem}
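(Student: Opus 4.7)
The starting point is the decomposition, for any orthogonal $\+W\in\R^{d\times d}$,
\begin{equation*}
\+W\hat X_i(t) - X_i(t) \;=\; (\+W\hU_d^\top - \+U_d^\top)\Lambda_i(t) \;+\; \+W\hU_d^\top[\hL_i(t) - \Lambda_i(t)].
\end{equation*}
Introducing $\bL_i(t):=\E\hL_i(t)$, the second summand further splits into a deterministic histogram-bias piece $\+W\hU_d^\top[\bL_i(t)-\Lambda_i(t)]$, which is bounded by $\|\bL_i(t)-\Lambda_i(t)\|_2\leqc\sqrt{n}L/M$ and absorbed into the first term of \eqref{eq:main_bound} using $\delta\leqc\sigma_1\leqc n\lambda_{\max}$, and a mean-zero Poisson-noise piece $\+W\hU_d^\top[\hL_i(t)-\bL_i(t)]$. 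The plan is for the eigenvector-perturbation term together with the histogram bias to produce the first rate $n^{3/2}L\lambda_{\max}/(M\delta)$, and the mean-zero noise piece to produce the second rate $\mu\sqrt{Md\lambda_{\max}}\log^{5/2}n$.

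For the eigenvector term, I would deploy an entrywise eigenvector expansion in the spirit of \cite{abbe2020entrywise}: there exists an orthogonal $\+W$ such that
\begin{equation*}
\+W\hU_d^\top - \+U_d^\top \;\approx\; \+\Sigma_d^{-1}\+U_d^\top(\hat{\+\Sigma} - \+\Sigma),
\end{equation*}
with a controlled higher-order remainder, where $\+\Sigma_d = \diag(\sigma_1^2,\ldots,\sigma_d^2)$. Writing $\hat{\+\Sigma}-\+\Sigma = (\hat{\+\Sigma}-\bar{\+\Sigma}) + (\bar{\+\Sigma}-\+\Sigma)$ with $\bar{\+\Sigma}:=\E\hat{\+\Sigma}$, the deterministic part is handled via the uniform histogram bias $\|\bL(t)-\+\Lambda(t)\|_\infty\leqc L/M$ together with the telescoping identity $\bL^2-\+\Lambda^2 = \+\Lambda B + B\+\Lambda + B^2$ for $B:=\bL-\+\Lambda$. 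Crucially, Assumption~\ref{assump:residuals} allows $\Lambda_i(t)$ to be replaced by $\+U_d X_i(t)$ up to a small residual, which keeps the dimensional factors in check and delivers the $n^{3/2}L\lambda_{\max}/(M\delta)$ rate. The mean-zero part $\hat{\+\Sigma}-\bar{\+\Sigma}$ is controlled via matrix Bernstein applied to $\frac{1}{M}\sum_b\hL(t_b)^2$, contributing only lower-order terms to the variance bound.

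For the mean-zero projected noise $\+U_d^\top[\hL_i(t)-\bL_i(t)]$ at fixed $(i,t)$, each of its $d$ coordinates is a centred weighted sum of $n$ independent Poisson contributions with weights $(\+U_d)_{jk}$, of variance $\O(M\lambda_{\max}\|\+u_k\|_2^2)=\O(M\lambda_{\max})$ and per-summand sub-exponential norm $\O(\mu\sqrt{d/n}\cdot M)$. Bernstein's inequality, summed over the $d$ coordinates and union-bounded over the $n$ nodes and $M$ bins (the sup over $t$ collapses to a max over bins because histograms are piecewise constant), yields the claimed rate $\mu\sqrt{Md\lambda_{\max}}\log^{5/2}n$. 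The main obstacle I foresee is the coupling between $\hU_d$ and $\hL_i(t)-\bL_i(t)$: projecting the noise with the \emph{data-dependent} $\hU_d$ prevents a direct Bernstein argument. The standard remedy is a leave-one-node-out construction — introduce an auxiliary $\hU_d^{(i)}$ computed from a modified process in which all arrivals incident to $i$ are independently re-sampled, so that $\hU_d^{(i)}\perp[\hL_i(t)-\bL_i(t)]$, and bound the swap error using the same row-wise entrywise eigenvector machinery. Assumptions~\ref{assump:enough_signal} and~\ref{assump:bin_size} are exactly what ensures the higher-order terms in the expansion remain dominated by these leading-order bounds.
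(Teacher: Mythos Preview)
Your high-level architecture—split into bias and variance, bound the projected Poisson noise by Bernstein, and decouple $\hU_d$ from the noise via a leave-one-node-out construction—matches the paper's. But the paper organises the argument quite differently, and the difference is not cosmetic.

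The paper does \emph{not} work with $\hat{\+\Sigma}$ and $\+\Sigma$ directly. Instead it introduces the piecewise-constant intermediate $\bar X_i(t)=\bU^\top\bar\Lambda_i(t)$ and separates cleanly into a deterministic bias $\|\bar X_i(t)\+W_2 - X_i(t)\|$ (handled by a functional Wedin bound on the operator $\*A=\int\+\Lambda(t)\,\cdot\,dt$) and a variance $\|\hat X_i(t)-\bar X_i(t)\+W_1\|$. For the variance it passes to the \emph{unfolded} rectangular matrix $\hL=[\hL(t_1)\cdots\hL(t_M)]$, observes that $\hat X_i(t)$ is literally a row of $\hV\hS$, and reduces everything to a $\|\cdot\|_{2,\infty}$ bound on $\hV\hS-\bV\bS\+W$ via a four-term decomposition. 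The point of the unfolding is that $\hL$ has (block-wise) independent entries, so both the Abbe--Fan--Wang--Zhong machinery and the leave-one-out surgery apply cleanly.

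Your plan to run an entrywise expansion on $\hat{\+\Sigma}$ runs into a structural obstacle: the entries of $\hat{\+\Sigma}$ are \emph{quadratic} in the Poisson counts, so $\hat{\+\Sigma}$ has neither row-wise independence nor the bounded-dependence structure that \cite{abbe2020entrywise} requires. Likewise, in your leave-one-out step, re-sampling the edges incident to node $i$ perturbs \emph{every} entry of $\hat{\+\Sigma}$ (via the inner sum $\sum_k\hat\lambda_{ik}\hat\lambda_{kj}$ with $k=i$), not just row and column $i$, so bounding $\|\hU_d-\hU_d^{(i)}\|$ directly from $\hat{\+\Sigma}$ is awkward. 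Both issues disappear once you recognise that $\hU_d$ coincides with the left singular vectors of the unfolded $\hL$ and carry out the perturbation analysis there instead; that is precisely the paper's route. For the eigenvector term $(\+W\hU_d^\top-\+U_d^\top)\Lambda_i(t)$ you actually only need spectral (not $2,\infty$) control of $\hU_d-\+U_d\+W$, so Davis--Kahan suffices and your invocation of an entrywise expansion is unnecessary there; the subtle place where $2,\infty$-type control and leave-one-out are genuinely needed is the coupled term $(\hU_d-\+U_d\+W)^\top(\hL_i(t)-\bL_i(t))$, which the paper isolates as its term \eqref{line4} and attacks via the unfolded matrix and Propositions~\ref{prop:leave_one_out_quantities}--\ref{prop:V_projections}. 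A smaller point: $\E\hat{\+\Sigma}\neq\bar{\+\Sigma}$ in your notation—there is an extra diagonal of order $n\lambda_{\max}$ coming from $\E E^2$—which you would need to track if you insisted on comparing $\hU_d$ directly to the eigenvectors of $\+\Sigma$; the paper's intermediate $\bU$ sidesteps this.
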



The proof of Theorem~\ref{thm:main_theorem} is given in Section~\ref{sec:proof_of_main_theorem} of the appendix. As a corollary to Theorem~\ref{thm:main_theorem}, we state a simplified version of this result in which we replace Assumptions~\ref{assump:bounded_intensities}, \ref{assump:residuals} and \ref{assump:enough_signal} with the stronger Assumptions~\ref{assump:comparable_intensities} and \ref{assump:low_rank_well_conditioned}. Since the Lipschitz constant $L$ scales with the order of the intensities, and we define the quantity $L_0$ satisfying $L = \rho L_0$ which is invariant to the rescaling of intensities.

\begin{corollary}
\label{cor:strong_theorem}
  Suppose that $\hat\lambda_{ij}(t)$ are histogram estimates with $M$ equally-spaced bins and that Assumptions~\ref{assump:comparable_intensities}, \ref{assump:low_rank_well_conditioned} and \ref{assump:bin_size} hold. Then with overwhelming probability, there exists an orthogonal matrix $\+W$ such that
  \begin{equation}
  \label{eq:main_bound_cor}
    \max_{i \in [n]} \sup_{t\in(0,1]} \norm{\+W \hat X_i(t) - X_i(t)}_2 \leqc\frac{n^{1/2} \rho L_0}{M} + \sqrt{M \rho} \cdot \log^{5/2} n.
  \end{equation}
\end{corollary}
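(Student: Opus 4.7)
The plan is straightforward: the stronger hypotheses are designed precisely so that every abstract parameter appearing in the bound of Theorem~\ref{thm:main_theorem} collapses to an explicit order in $(n,\rho)$. My approach is therefore (i) to verify that Assumptions~\ref{assump:comparable_intensities} and \ref{assump:low_rank_well_conditioned} imply Assumptions~\ref{assump:bounded_intensities}--\ref{assump:enough_signal} so that Theorem~\ref{thm:main_theorem} applies, and then (ii) to substitute the resulting orders of $d, \mu, \delta, \lambda_{\max}, L$ into \eqref{eq:main_bound} and simplify.

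For (i), Assumption~\ref{assump:comparable_intensities} immediately gives $\lambda_{\max} \eqc \rho \leqc 1$, which is Assumption~\ref{assump:bounded_intensities}. Assumption~\ref{assump:low_rank_well_conditioned} forces each intensity profile $\Lambda_i(t)$ to lie in the column span of $\+U_d$ for every $t$ (because $\rank(\+\Sigma) = d$ pins down the range of every $\+\Lambda^2(t)$, and hence of each $\+\Lambda(t)$, to $\mathrm{col}(\+U_d)$), so every population residual $r_i(t)$ is exactly zero and Assumption~\ref{assump:residuals} is trivially satisfied. Together with $\kappa \eqc 1$ and the order of $\delta$ computed below, Assumption~\ref{assump:enough_signal} also follows by a routine check.

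The only non-cosmetic step is pinning down $\delta$. Since $\+\Sigma$ has rank $d \eqc 1$ with $\sigma_{d+1} = 0$ and $\sigma_1 \eqc \sigma_d$, we have $\delta = \sigma_d \eqc \sigma_1$. Under Assumption~\ref{assump:comparable_intensities} each entry of $\+\Sigma$ equals $\frac{1}{T}\int_0^T \sum_k \lambda_{ik}(t)\lambda_{kj}(t)\,dt \eqc n\rho^2$, so $\|\+\Sigma\|_F^2 \eqc n^2 \cdot (n\rho^2)^2 = n^4\rho^4$. On the other hand $\|\+\Sigma\|_F^2 = \sum_{i=1}^d \sigma_i^4 \eqc d\,\sigma_1^4 \eqc \sigma_1^4$, giving $\sigma_1 \eqc n\rho$, hence $\delta \eqc n\rho$.

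Collecting $\lambda_{\max} \eqc \rho$, $L = \rho L_0$, $\mu \eqc 1$, $d \eqc 1$ and $\delta \eqc n\rho$, the first term of \eqref{eq:main_bound} simplifies to $\tfrac{n^{3/2}\cdot \rho L_0 \cdot \rho}{M \cdot n\rho} = \tfrac{n^{1/2}\rho L_0}{M}$ and the second to $\sqrt{M\rho}\,\log^{5/2} n$, which is exactly \eqref{eq:main_bound_cor}. There is no genuine obstacle here: the corollary is a repackaging of Theorem~\ref{thm:main_theorem} in which the abstract parameters are pinned down to their concrete orders under the sharper structural hypotheses, and the only sub-step requiring real (but standard) argument is the Frobenius-norm calculation of $\sigma_1$.
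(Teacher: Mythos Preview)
Your proposal is correct and follows exactly the approach the paper takes: verify that Assumptions~\ref{assump:comparable_intensities} and \ref{assump:low_rank_well_conditioned} imply Assumptions~\ref{assump:bounded_intensities}--\ref{assump:enough_signal} (with $\kappa \eqc 1$, $\delta \eqc n\rho$, and zero population residuals), then substitute into \eqref{eq:main_bound}. The paper does not write out a separate proof for the corollary but records these implications in the paragraph preceding it; your Frobenius-norm computation of $\delta \eqc n\rho$ simply fills in a detail the paper leaves implicit.
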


\begin{figure}
    \begin{center}
        
    \includegraphics[width=1\textwidth]{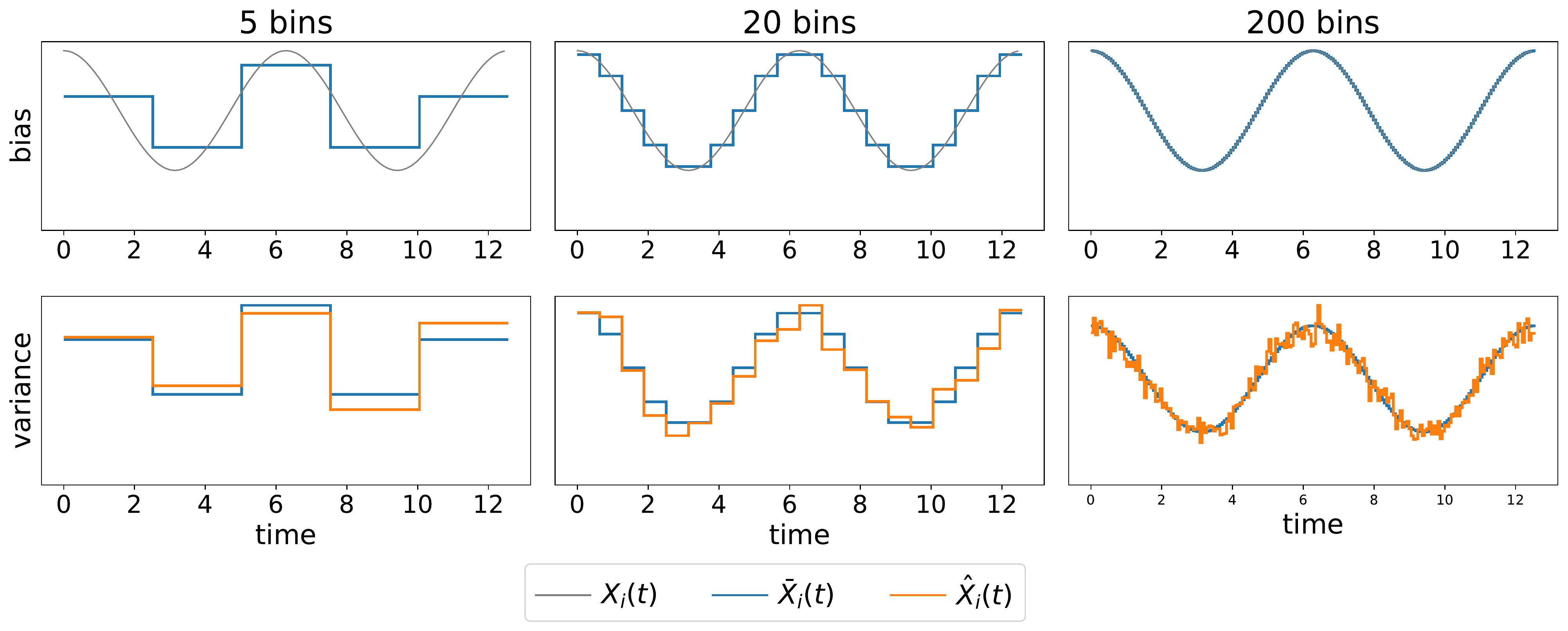}
\end{center}
\caption{A bias-variance trade-off. We simulate a network with common intensities $\lambda_{ij}(t) = 0.7 \times \smallcurlybrackets{2 + \cos(t)}$ for all $i,j$, and apply Intensity Profile Projection with a histogram intensity estimator with 5, 20, and 200 bins. In the `bias' plots, the gray lines shows an estimand $X_i(t)$, while the blue lines shows its histogram approximation. The discrepancy between the gray line and the blue line corresponds the bias of the Intensity Profile Projection estimator. In the `variance' plots, the blues lines are as in the `bias' plots and the orange line show the estimate obtains using Intensity Profile Projection into one dimension. The discrepancy between the blue line and the orange line corresponds the variance of the Intensity Profile Projection estimator.}
\label{fig:bias_var}
\end{figure}

\subsection{A bias-variance trade-off}
The first term in the bound corresponds to the bias between $\bar X_i(t)$ and $X_i(t)$, where $\bar X_i(t)$ is a histogram approximation to $X_i(t)$ (modulo orthogonal transformation, see Section~\ref{sec:proof_of_main_theorem} of the appendix). The second term corresponds to the variance of the estimate.


Theorem~\ref{thm:main_theorem} gives some theoretical guidance on how to select the number of bins in the histogram estimator. 
For simplicity, we consider the setting of Corollary~\ref{cor:strong_theorem}. Ignoring logarithmic terms in $n$, the bound in \eqref{eq:main_bound_cor} is optimised by choosing
\begin{equation*}
  M \eqc \brackets{n\rho L_0^2}^{1/3}.
\end{equation*}



Figure~\ref{fig:bias_var} illustrates this bias-variance trade-off with an example. We simulate a dynamic network with 100 nodes with common intensities $\lambda_{ij}(t) = 0.7 \times \smallcurlybrackets{2 + \cos(t)}$, for all $i,j$, on the time domain $(0,4\pi]$.

The top row shows the population representation $X_i(t)$ of a single node (gray) and its histogram approximation $\bar X_i(t)$ (blue) for a variety of bin sizes. The more bins that are chosen, the smaller the bias and the more $\bar X_i(t)$ resembles $X_i(t)$.
The bottom rows shows the histogram approximation $\bar X_i(t)$, and the estimate $\hat X_i(t)$ (orange) obtained using Intensity Profile Projection. The fewer bins that are chosen, the smaller the variance and the more that $\hat X_i(t)$ resembles $\bar X_i(t)$.

\section{Structural and temporal coherence} \label{sec:structure_preservation}

\begin{figure}
    \begin{center}   
    \includegraphics[width=1\textwidth]{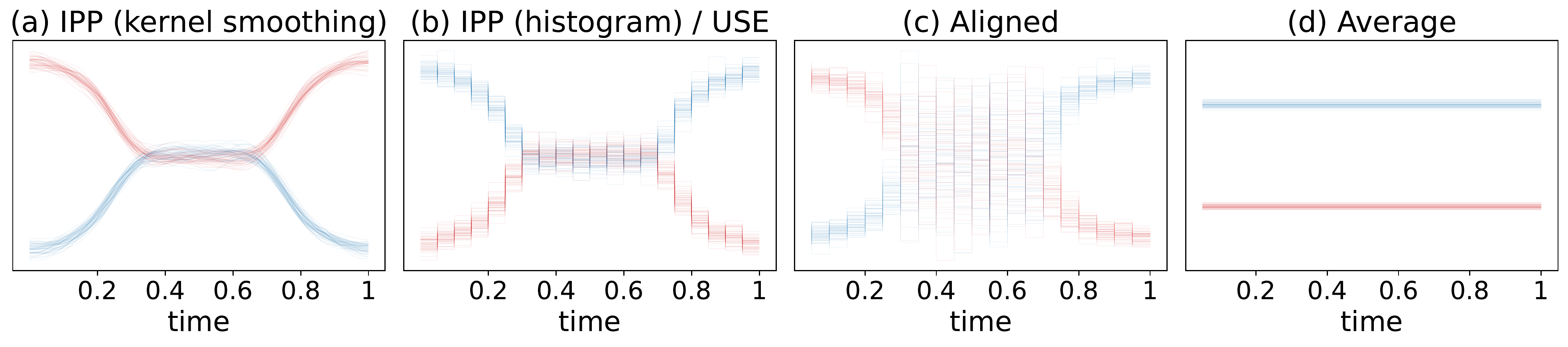}
        
    \includegraphics[width=1\textwidth]{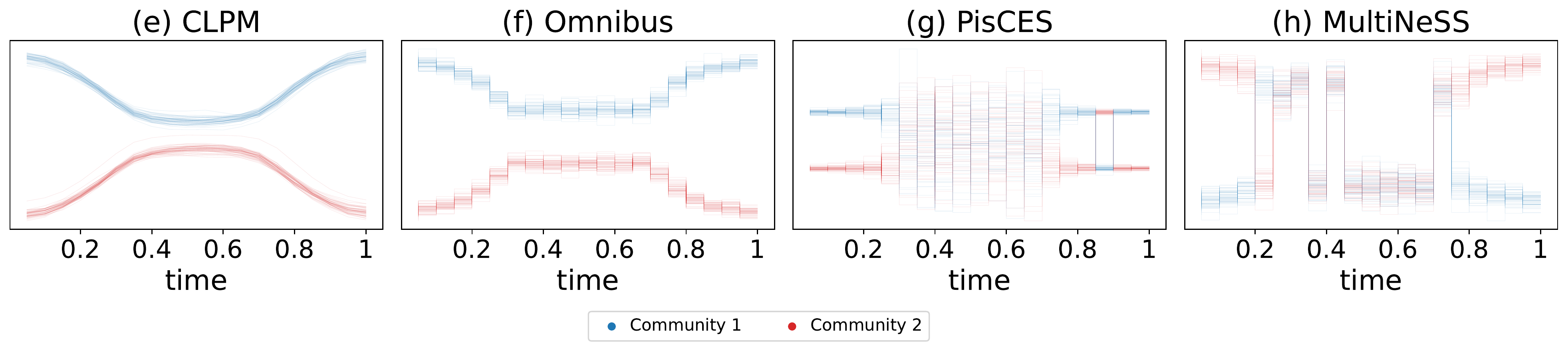}
\end{center}
\caption{One-dimensional PCA visualisation of the two-dimensional node representations, obtained using a collection of methods, for a network simulated from a bifurcating block model. Colours correspond to the community membership of the node.}
\label{fig:SBM_geom1}
\end{figure}




For many practical inference tasks, it is desirable for a representation learning procedure to possess the following two properties:
\begin{itemize}
    \item {\bf Structural coherence.} If two nodes exhibit statistically indistinguishable behaviour at a given time, then their representations at that time are close. That is, if $\Lambda_i(t) = \Lambda_j(t)$, then $\hat X_i(t) \approx \hat X_j(t)$;
    \item {\bf Temporal coherence.} If a node exhibits statistically indistinguishable behaviour at two distinct points in time, then its representations at both these times are close. That is, if $\Lambda_i(s) = \Lambda_i(t)$, then $\hat X_i(s) \approx \hat X_i(t)$.
\end{itemize}

It has been observed in a recent survey of \cite{xue2022dynamic} that almost all existing dynamic network embedding procedures possess only one of these properties, but not both.

In the following lemma, we formally define $\hat X_i(s) \approx \hat X_j(t)$ to mean that $X_i(s) = X_j(t)$, referring to equality ``up to statistical noise'' in the sense of Theorem~\ref{thm:main_theorem}.

\begin{lemma}
Intensity Profile Projection is both structurally and temporally coherent.
\end{lemma}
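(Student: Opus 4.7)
The plan is to exploit the fact that, at the population level, $X_i(t) = \+U_d^\top \Lambda_i(t)$ is a fixed linear image of the intensity profile $\Lambda_i(t)$: the matrix $\+U_d$ depends on the whole intensity trajectory but not on the particular pair $(i,t)$. Both properties therefore reduce to a one-line linearity argument, after which Theorem~\ref{thm:main_theorem} supplies the promotion from the population representations to their estimates.

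For structural coherence I would argue that whenever $\Lambda_i(t)=\Lambda_j(t)$, applying $\+U_d^\top$ to both sides gives
\[
X_i(t) \;=\; \+U_d^\top \Lambda_i(t) \;=\; \+U_d^\top \Lambda_j(t) \;=\; X_j(t).
\]
Similarly, for temporal coherence, whenever $\Lambda_i(s)=\Lambda_i(t)$, applying $\+U_d^\top$ to both sides gives $X_i(s) = X_i(t)$. These are the exact population equalities required by the interpretation of $\approx$ stated just before the lemma.

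To convert these into statements about the estimates $\hat X_i$ ``up to statistical noise'' I would invoke Theorem~\ref{thm:main_theorem}: with overwhelming probability there exists an orthogonal $\+W$ such that
\[
\max_{k\in[n]} \sup_{u\in(0,1]} \norm{\+W \hat X_k(u) - X_k(u)}_2 \;\leqc\; \epsilon_n,
\]
with $\epsilon_n$ denoting the right-hand side of \eqref{eq:main_bound}. A single triangle inequality, using $X_i(s)=X_j(t)$ from the previous step, then yields
\[
\norm{\+W \hat X_i(s) - \+W \hat X_j(t)}_2 \;\le\; \norm{\+W \hat X_i(s) - X_i(s)}_2 + \norm{X_j(t) - \+W \hat X_j(t)}_2 \;\leqc\; \epsilon_n,
\]
and since $\+W$ is an isometry the same bound holds for $\norm{\hat X_i(s) - \hat X_j(t)}_2$. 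The analogous computation with $(i,s)$ and $(i,t)$ in place of $(i,s)$ and $(j,t)$ handles temporal coherence.

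There is no real obstacle: the lemma is essentially a bookkeeping consequence of (i) linearity of the projection defining $X_i(t)$, and (ii) the uniform entrywise error bound already established. The only minor subtlety worth flagging in the write-up is that $\hat X_i$ is only identified up to an orthogonal transformation inherited from the eigendecomposition of $\hat{\+\Sigma}$, so the comparison between $\hat X_i(s)$ and $\hat X_j(t)$ must be performed under a common $\+W$; this is automatic because Theorem~\ref{thm:main_theorem} provides a single $\+W$ that works uniformly over $(k,u)$.
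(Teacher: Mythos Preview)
Your argument is correct and matches the paper's approach: the paper's one-line justification is exactly that $X_i(t)=\+U_d^\top\Lambda_i(t)$ is a fixed function of $\Lambda_i(t)$, so equal intensity profiles (across nodes or across times) force equal population representations, which is precisely the definition of $\approx$ adopted immediately before the lemma. Your additional step invoking Theorem~\ref{thm:main_theorem} and the triangle inequality to bound $\norm{\hat X_i(s)-\hat X_j(t)}_2$ is a perfectly reasonable elaboration of what ``up to statistical noise'' means, but it is not strictly needed given how the paper has set up the definition.
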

This follows from the simple observation that $X_i(t)$ is a fixed function of $\Lambda_i(t)$ for all $i$ and $t$. To the best of our knowledge, Intensity Profile Projection is the only existing continuous-time procedure which satisfies these desiderata.

\subsection{Simulated example: a bifurcating block model}
To illustrate these properties, we simulate a two-community dynamic stochastic block model (i.e. where $\+\Lambda(t)$ is block structured) in which the intra-community intensities and inter-community intensities are initially distinct, they then gradually merge, remain indistinguishable for some time, and finally diverge. We refer to this model as a \emph{bifurcating block model} and provide full details of the simulation in Section~\ref{sec:dim_details} of the appendix.

We apply Intensity Profile Projection to the simulated network, using both a histogram intensity estimator, and a kernel smoother, to produce two-dimensional representations. For visualisation, we reduce the dimension from two to one using a dynamic adaptation of principal component analysis (see Section~\ref{sec:vis} of the appendix), and the resulting representations are shown in Figures~\ref{fig:SBM_geom1}(a) and (b).

In both cases, the estimated trajectories mirror the underlying dynamics of the network: the two communities are in well separated to begin with, gradually merge, remain relatively constant before returning to the positions in which they started.

We now illustrate the potential pitfalls of some more naive approaches for embedding dynamic networks. We find that most existing methodology can be viewed as some combination of the two techniques:
\begin{itemize}
    \item {\bf Alignment} \citep{cape2021spectral}. Obtain a sequence of static snapshots of the network, embed each of the networks snapshots separately and subsequently align the embedding from window $t+1$ with the embedding from window $t$.
    \item{\bf Averaging} \citep{bhattacharyya2018spectral}. Obtain a static summary of the network by averaging it over time, and to embed this to obtain constant node representations.
\end{itemize}



Alignment is structurally coherent, however can fail to be temporally coherent. Averaging is temporally coherent, but can fail to be structurally coherent. To illustrate this point, we apply both approaches, using adjacency spectral embedding into two dimensions, orthogonal Procrustes alignment and linear interpolation, to a network simulated from the bifurcating block model. Figures~\ref{fig:SBM_geom1}(c) and (d) show visualisations of the trajectories obtained from each approach. 




\subsection{Method comparison}
\label{sec:method_comparison}

In this section, we demonstrate how our procedure compares to some existing methods on the simulated data described above. Due to the limited number of continuous-time methods, we include a number of discrete-time methods (Omnibus, PisCSE and MultiNeSS) which we give as an input a discrete sequence of snapshots $\+A(1),\ldots,\+A(M)$ of our simulated continuous-time networks. We compare the following methods:
\begin{itemize}
    \item{\bf IPP (kernel smoothing)}. Algorithm~\ref{alg:sparse_IPP} applied with intensities estimated using kernel smoothing.
    \item{\bf IPP (histogram) / USE} \cite{jones2020multilayer}. Algorithm~\ref{alg:sparse_IPP} applied with intensities estimated using a histogram estimator. Equivalent to a weighted extension of the Unfolded Spectral Embedding algorithm of \cite{jones2020multilayer}.
    \item {\bf CLPM} \cite{rastelli2021continuous}. Fits a continuous latent position model $\log\lambda_{ij}(t) = \beta - \smallnorm{Z_i(t) - Z_j(t)}^2$ with a penalty on large velocities in the latent space.
    \item {\bf Omnibus} \cite{levin2017central}. Approximately factorises the matrix $\+A$ with blocks $\+A[k,l] = \tfrac{1}{2}(\+A(k) + \+A(l))$, using a spectral decomposition.
    \item {\bf PisCES} \cite{liu2018global}. Minimises the objective function
    \begin{equation*}
         \sum_{k=1}^M \norm{\+L(k) - \+L^\star(k)}_F^2 + \alpha \sum_{k=1}^{M-1} \norm{\+L^\star(k) - \+L^\star(k+1)}_{\F}^2,
    \end{equation*}
    for $\+L^\star(1),...,\+L^\star(M)$, where $\alpha \in [0,1]$ and $\+L(k)$ are the Laplacian normalisations of $\+A(k)$. Then, approximately factorises each $\+L^\star(1),...,\+L^\star(M)$ using spectral decompositions.
    \item {\bf MultiNeSS} \cite{macdonald2022latent}. Fits a latent position model $\+A_{ij}(k) \sim Q \smallcurlybrackets{\cdot; f(Z_i(k), Z_j(k)), \phi}$, where $Q(\cdot; \theta, \phi)$ is a parametric distribution.
\end{itemize}

We use an embedding dimension of $d=2$ for all methods, and for visualisation we reduce this to one using PCA. Additional details such as hyperparameter selection, where applicable, are given in the Section~\ref{sec:dim_details} of the appendix.

The CLPM and Omnibus methods produce representations which are temporally coherent, however both fail to capture the complete merging of the communities, shown by Figures~\ref{fig:SBM_geom1}(e) and (f), and are therefore not structurally coherent. The PisCES and MultiNeSS methods produce representations which are structurally coherent, however both are unstable when the communities are indistinguishable, shown by Figures~\ref{fig:SBM_geom1}(g) and (h), and are therefore not temporally coherent.

\begin{figure}[p]
  
  \vspace{-2em}
    \begin{center}        
    \includegraphics[width=1\textwidth]{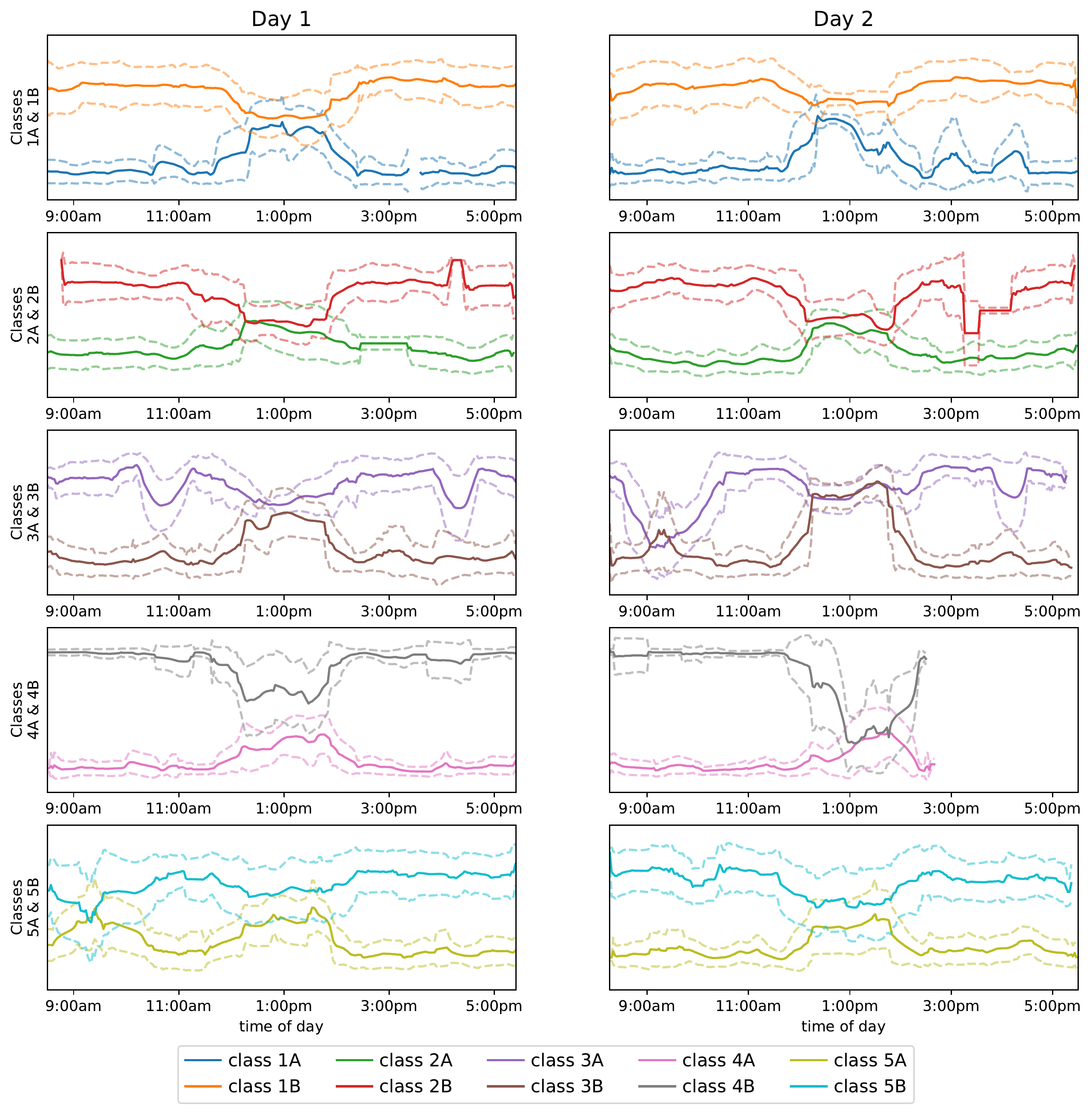}
\end{center}
\caption{One-dimensional PCA visualisation of the 30-dimensional node representations for pairs of classes in the same year group. The solid lines show the average trajectory for each class, and the dashed line show one standard deviation above and below.}
\label{fig:Lyon_PCA}
\end{figure}

\begin{figure}[p]
  
    \begin{center}        
    \includegraphics[width=1\textwidth]{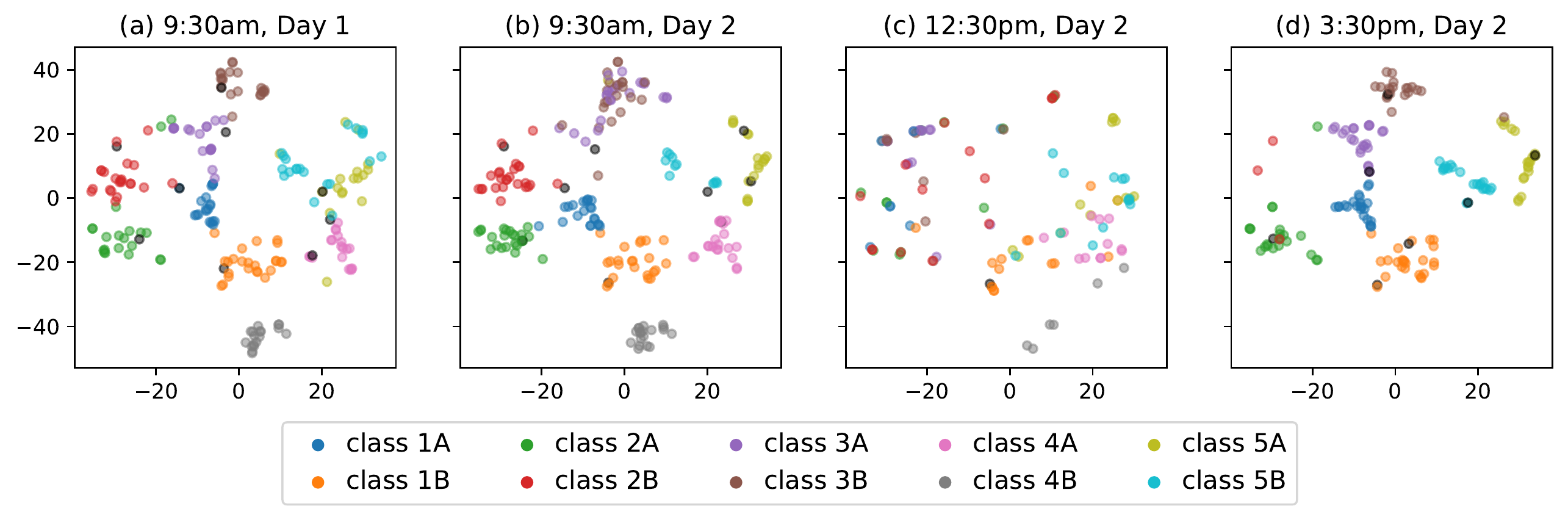}
\end{center}
\caption{Two-dimensional t-SNE visualisation of the 30-dimensional node representations of all pupils and teachers evaluated at 9:30am on Day 1, and 9:30am, 12:30pm and 3:30pm on Day 2.}
\label{fig:Lyon_tSNE}
\end{figure}

\section{Real data} \label{sec:real_data}


We demonstrate Intensity Profile Projection on a dataset containing the face-to-face interactions of the pupils of a primary school in Lyon over two days in October 2009 \cite{stehle2011high}.
During the study, discreet radio-frequency identification devices were worn by 232 pupils and 10 teachers which recorded their face-to-face interactions. When two participants were in close proximity over an interval of 20 seconds, the timestamped interaction event was recorded. The school contains five year groups, each divided into two classes, and each class has an assigned room and an assigned teacher. The school day runs from 8:30am to 4:30pm, with a lunch break from 12:00pm to 2:00pm, and no data was gathered on contacts taking place outside the school or during sports activities. For more details about the study and dataset, we refer the reader to \cite{stehle2011high}.

We apply Intensity Profile Projection to the data corresponding to each day of the study using a kernel smoother with an Epanechnikov kernel, choosing a bandwidth of 5 minutes and computing 30 dimensional trajectories.

To visualise the node trajectories, we first rescale them to have unit norm, which has the effect of removing information about the ``activeness'' of a node from its representation (see, for example, \cite{qin2013regularized}), and apply two dimension reduction techniques. The first is principal component analysis (PCA), which we adapt to our dynamic setting by projecting the (centered) representations onto the the direction of maximum average variance over the time domain. This visualisation gives us a temporally coherent view on the trajectories (more details are given in Section~\ref{sec:vis} of the appendix). In Figure~\ref{fig:Lyon_PCA}, we visualise the trajectories of each pair of classes in each year group using PCA, and for clarity, we just plot the average trajectory for each class, along with one standard deviation above and below.

The second is t-Distributed Stochastic Neighbor Embedding (t-SNE), a popular non-linear dimension-reduction tool which provides enough flexibility to visualise the whole set of representations at each point in time. Figure~\ref{fig:Lyon_tSNE} shows t-SNE visualisations of the node representations at a collection of times throughout the study. In Section~\ref{sec:additional_real_data} of the appendix, we include analogous figures for aligned spectral embedding and Omnibus embedding for comparison.


Figure~\ref{fig:Lyon_PCA} clearly shows the mixing of classes during the lunch hours, and from Figures~\ref{fig:Lyon_tSNE}, we see that the the representations are much more fragmented during the lunch hour (12:30pm, Day 2) than they are during lessons at the other times, where they form tighter clusters corresponding to classes.

While it is reassuring that the geometry of the trajectories reflects the \emph{known} class and timetable structures of the school, it also allows us to uncover structure in the data that \emph{was not known} from the report on the study. For example, classes 5A and 5B (olive and cyan, respectively) merge into a single cluster at approximately 9:30am on Day 1, and classes 3A and 3B (brown and pink, respectively) do the same at approximately 9:30am on Day 2. One might conjecture that this corresponds to a joint lesson, which is taken by the students of both classes in a year group.


\section{Discussion}
We have presented an algorithmic framework to learn continuous-time, low-dimensional trajectories representing the evolving behaviours of nodes in a dynamic network. We view our framework as providing a platform on which novel inference procedures can be developed, particularly combining graph and temporal concepts. For example, in dynamic networks with continuously evolving community structure, it might be interesting to develop procedures for detecting branching points (see bifurcating block model example, Section~\ref{sec:structure_preservation}), or measures of polarisation and cohesion in the network via the velocities of the trajectories. More generally, we believe there is much left to understand and exploit in the time-evolving topology and geometry of these representations.

A limitation of our framework is the need for bandwidth and dimension selection. These decisions are difficult because they are trade-offs,  bias versus variance in the case of bandwidth selection (as seen here), and statistical versus computational in the case of dimension selection (see e.g. \cite{loffler2021optimality}). In the presence of a specific supervised downstream task, both decisions could be assisted by cross-validation. In unsupervised settings with reasonably-sized networks, our method is very fast, allowing expedient exploration of different choices.

Our theory suggests selecting a dimension which corresponds to an ``eigengap'' in the spectrum. In practice, it is possible than the spectrum does not decay quickly and no eigengap is present. This likely corresponds to the violation of Assumption~\ref{assump:enough_signal}. Some possible solutions are to take an entrywise transformation of the intensity profiles, such as the square root, to temper heavy tails \citep{gallagher2023spectral}, or to employ a robust subspace estimator, such as robust PCA \citep{candes2011robust}.

Our method might be viewed as a dynamic analogue of adjacency spectral embedding for static graphs \cite{sussman2012consistent} and, as a result, in future research it could be profitable to find dynamic analogues of other variants of spectral embedding, e.g. applying Laplacian normalisation \cite{chung1997spectral, von2007tutorial, tang2018limit} or regularisation \cite{qin2013regularized, zhang2018understanding}.


We believe improved dynamic network analysis can be used for societal good, in applications such as cyber-security, or combating human-trafficking, fraud, and corruption. However, one should also be aware of the risks, particularly to individual privacy and targeted influence. 


\bibliographystyle{unsrtnat}
\bibliography{camera_ready_bib}


\newpage
\appendix

\begin{spacing}{1.7}
    \center{{\Large \textbf{Appendix to ``Intensity Profile Projection: A Framework for Continuous-Time Representation Learning for Dynamic Networks''}}}
\end{spacing}

\section{Details of the simulated example and method comparison of Sections 4.1 and 4.2}
\label{sec:dim_details}

We simulate data according to the following generative process, which might be viewed as describing as a dynamic, two-community stochastic block model. Assign to each node $i$ a variable $z_i \in \{1,2\}$ denoting its community (which does not change with time). If nodes $i$ and $j$ are in the same community, i.e., $z_i = z_j$, the point process $\mathcal{E}_{ij}$ follows a homogeneous Poisson process with (fixed) intensity $\eta_0$. Otherwise, $\mathcal{E}_{ij}$ follows an inhomogeneous Poisson process with intensity
\[\lambda_{ij}(t) = \begin{cases} 
      \eta_0\exp\{\eta_1(t-s_1)\} & t < s_1,\\
      \eta_0 & s_1 \leq t < s_2,\\
      \eta_0\exp\{-\eta_1(t-s_2)\} & t \geq s_2,
   \end{cases}\]
where $0 < s_1 < s_2 < T$. This model describes two communities gradually coming together until fully merging by time $s_1$, before splitting at time $s_2$ and then gradually drifting apart. We simulate from this model using the parameters $T=1$, $n =100$, $z_1, \ldots, z_{50} = 1$, $z_{51}, \ldots, z_{100} = 2$, $\eta_0 = 100$, $\eta_1 = 10$, $s_1  = 0.3$ and $s_{2}  = 0.7$. 

In our method comparison, we used an embedding dimension of $d=2$ for all methods, unless otherwise stated. For the discrete-time methods, we construct a series of 20 snapshots of the continuous-time network, each a weighted static network whose edge weights are the number of events which occur on the edge in the corresponding time window. The selection of hyperparameters for each method is outlined below:
 
\begin{itemize}
    \item \textbf{Intensity Profile Projection (histogram)}: We used a bin size of $\frac{1}{M} = \frac{1}{20}$.
    
    \item \textbf{Intensity Profile Projection (kernel smoothing)}: We used a Epanechnikov kernel with bandwidth $0.1$ and applied the approximate Intensity Profile Projection algorithm with $B=20$. Different values of bandwidth gave similar results in terms of embedding structure; we chose this bandwidth to achieve the desired smoothness. 
    
    \item \textbf{CLPM} \citep{rastelli2021continuous}: The dimension is automatically computed by the algorithm as $d=2$. The hyperparameters are chosen equal to the ones used in ``Simulation C'' in \cite{rastelli2021continuous} which is a similar simulated example with two communities. We used 19 changes point which correspond to 20 windows. The implementation was obtained from the Github repository \href{https://github.com/marcogenni/CLPM}{\tt https://github.com/marcogenni/CLPM}. 
    
    \item \textbf{PisCES} \citep{liu2018global}: 
    The dimension is automatically selected by the algorithm as $d =2$. The smoothing parameter is chosen with cross-validation which results in equivalent log-likelihood values for $\alpha$ from 0.00001 to 0.001. We choose $\alpha = 0.001$ which is the larger value for which the algorithm converges. The implementation was obtained from the Github repository \href{https://github.com/xuranw/PisCES}{\tt https://github.com/xuranw/PisCES}. 

    \item \textbf{MultiNeSS} \citep{macdonald2022latent}:
    The dimension is automatically selected by the algorithm as $d = 2$ for all windows except windows 5 to 16 for which $d=1$ is selected. For these windows, we set missing the second dimension to zeros. The implementation is obtained from the \texttt{multiness} R package (available on CRAN) and hyperparameters are set to their default values.

\end{itemize}

\begin{figure}[p]
    \begin{center}        
    \vspace{-1em}
    \includegraphics[width=0.77\textwidth]{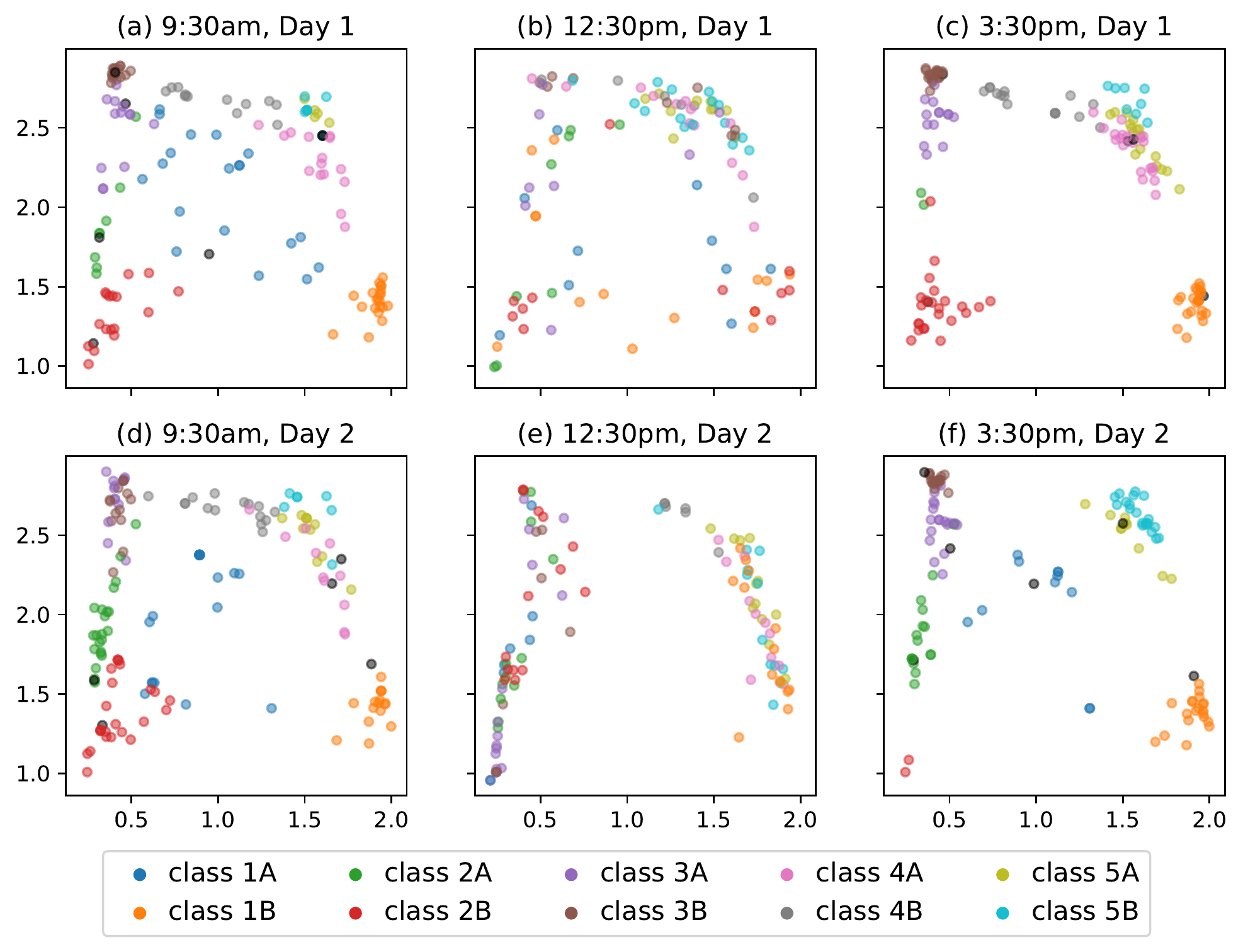}
    \vspace{-1em}
\end{center}
\caption{The first two dimensions of the spherical coordinates of the coordinates $\hat{X}_i(t)$ using the histogram intensity estimator for times corresponding to the morning, lunchtime and afternoon across both days. The colours indicate classes with black points representing teachers.}
\label{fig:Lyon_Ys_hist}
\end{figure}

\begin{figure}[p]
    \begin{center} 
    \includegraphics[width=0.77\textwidth]{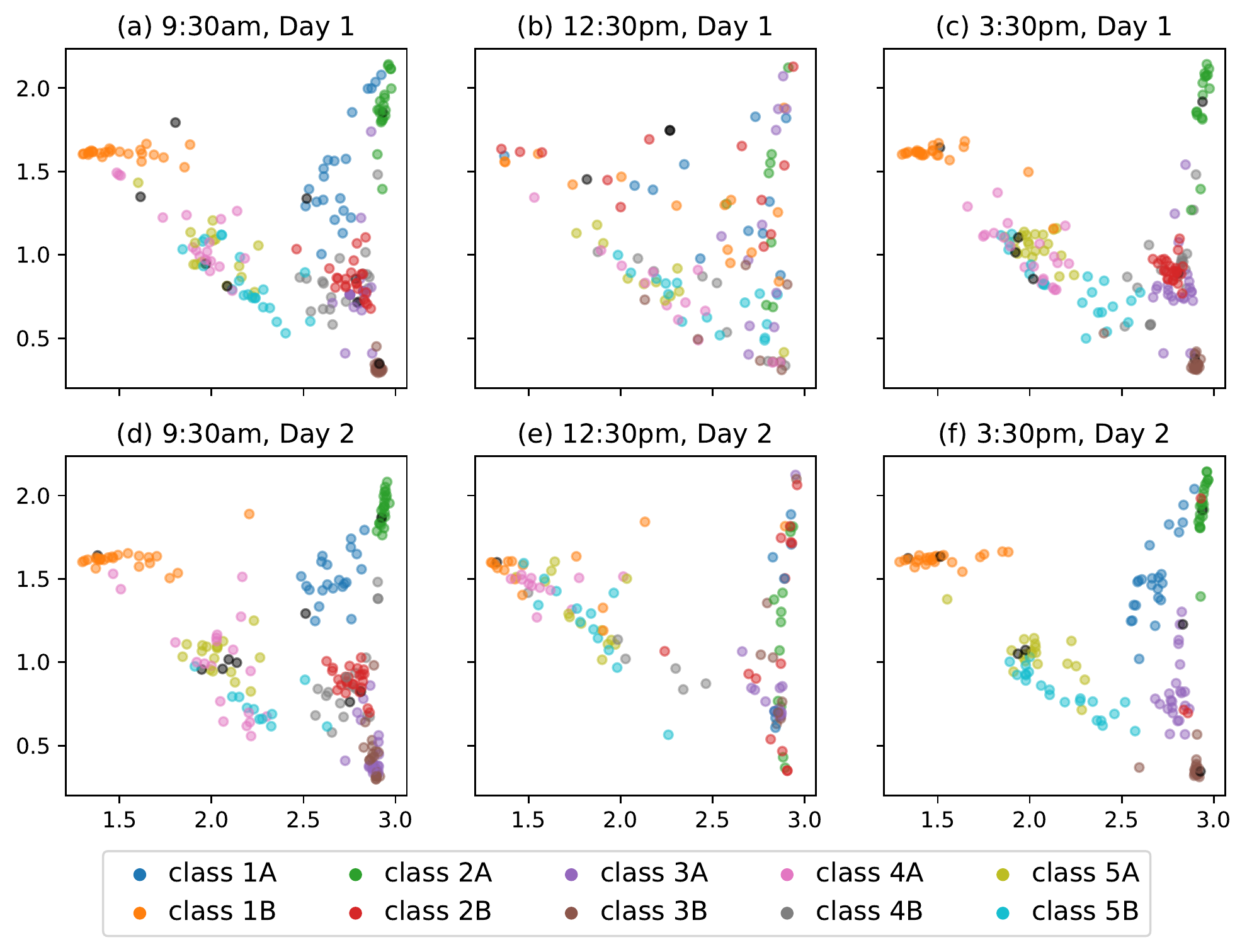}
    \vspace{-1em}
\end{center}
\caption{The first two dimensions of the spherical coordinates of the trajectories $\hat{X}_i(t)$ using the Epanechnikov kernel smoother for times corresponding to the morning, lunchtime and afternoon across both days. The colours indicate classes with black points representing teachers.}
\label{fig:Lyon_Ys_epan}
\end{figure}

\begin{figure}[p]
    \begin{center}  
    \vspace{-2em}
    \includegraphics[width=0.85\textwidth]{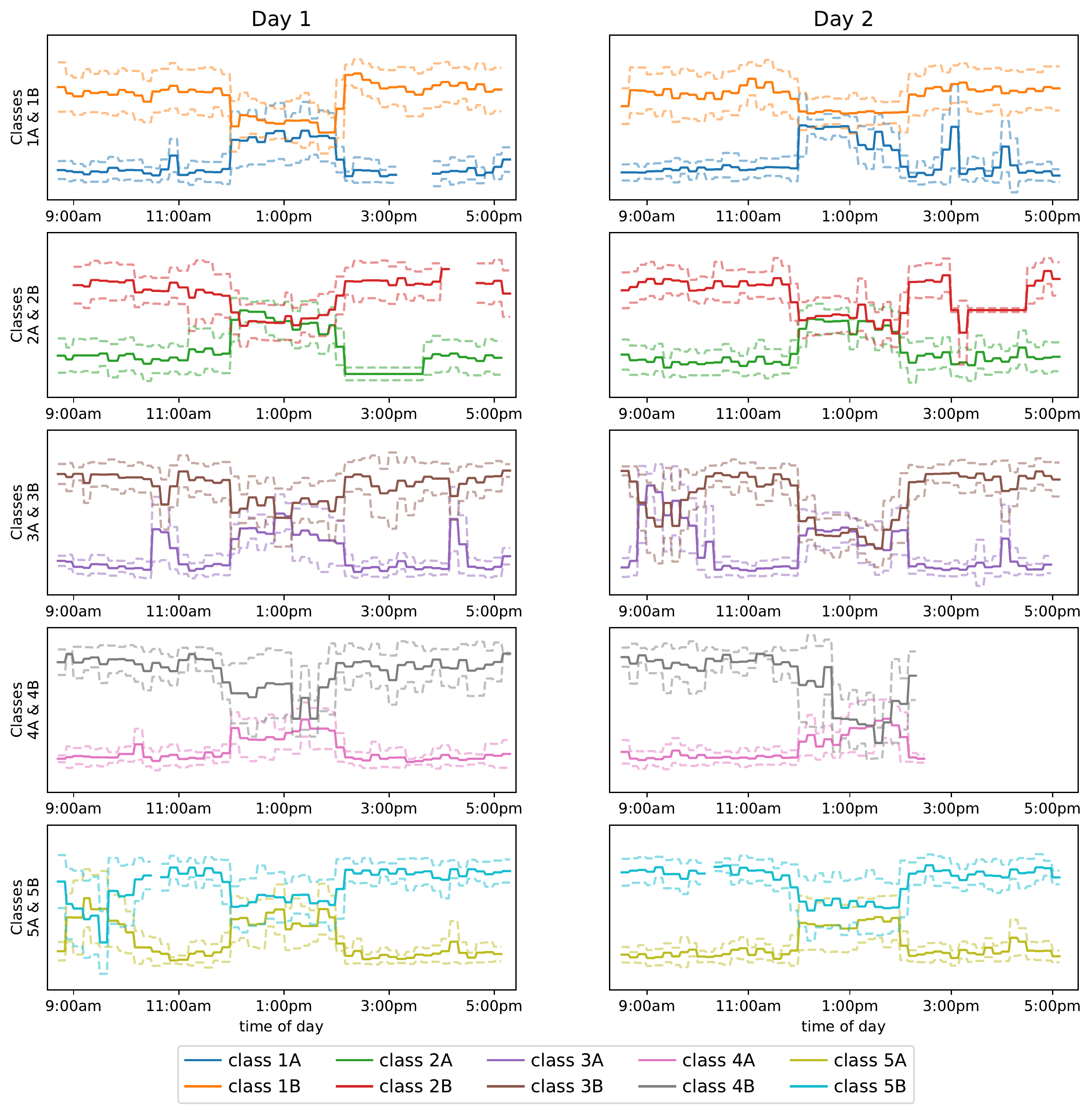}
    \vspace{-1em}
\end{center}
\caption{One-dimensional PCA visualisation of the 30-dimensional node representations for pairs of classes in the same year group. The solid lines show the average trajectory for each class, and the dashed line shows one standard deviation above and below.}
\label{fig:Lyon_PCA_hist}
\end{figure}

\begin{figure}[!p]
    \begin{center}        
    \includegraphics[width=\textwidth]{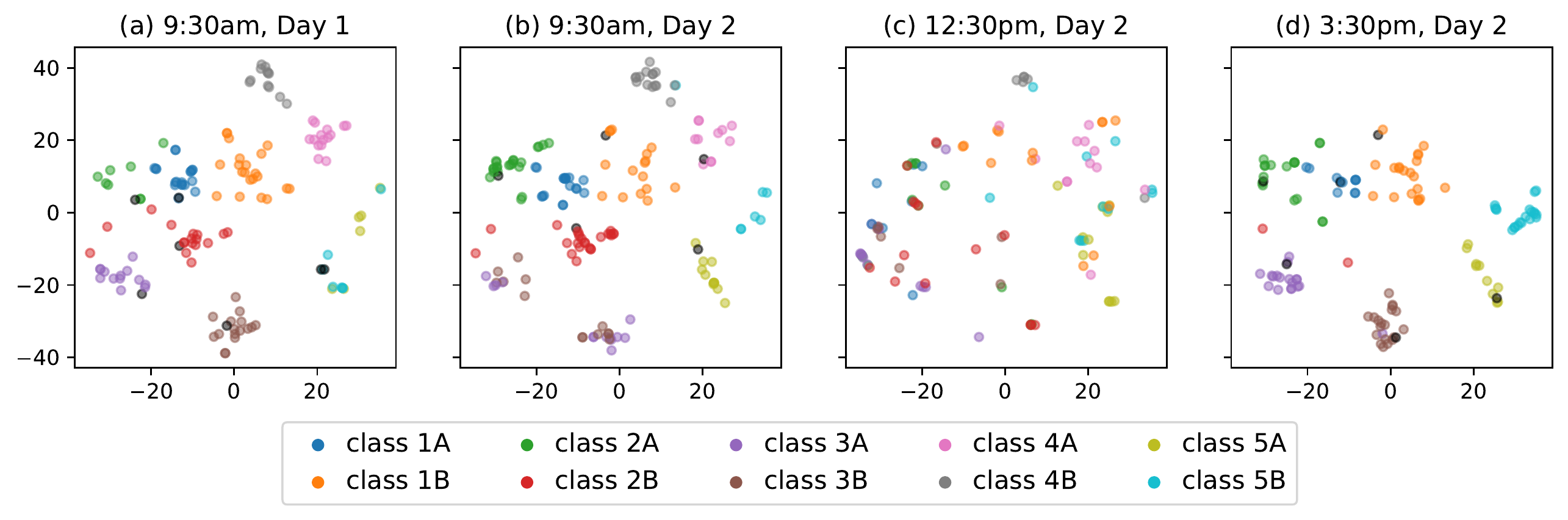}
\end{center}
\caption{Two-dimensional t-SNE visualisation of the 30-dimensional node representations of all pupils and teachers evaluated at 9:30am on Day 1, and 9:30am, 12:30pm and 3:30pm on Day 2.}
\label{fig:Lyon_tSNE_hist}
\end{figure}

\begin{figure}[!p]
    \begin{center}  
    \vspace{-2em}
    \includegraphics[width=\textwidth]{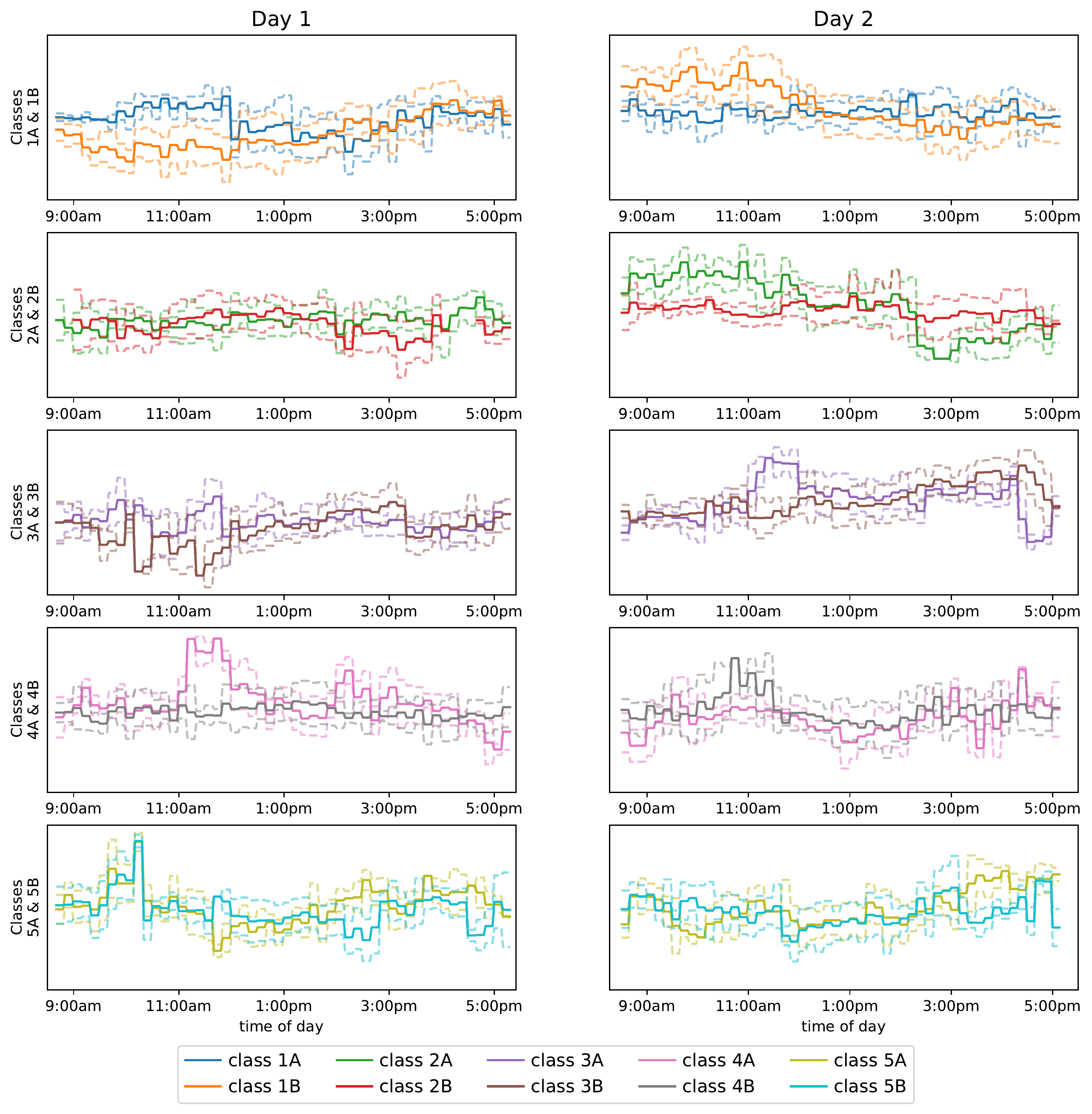}
    \vspace{-1em}
\end{center}
\caption{One-dimensional PCA visualisation of the 30-dimensional node representations for pairs of classes in the same year group, obtained using aligned spectral embedding. The solid lines show the average trajectory for each class, and the dashed line shows one standard deviation above and below.}
\label{fig:Lyon_PCA_ASE}
\end{figure}

\begin{figure}[!p]
    \begin{center}  
    \vspace{-2em}
    \includegraphics[width=\textwidth]{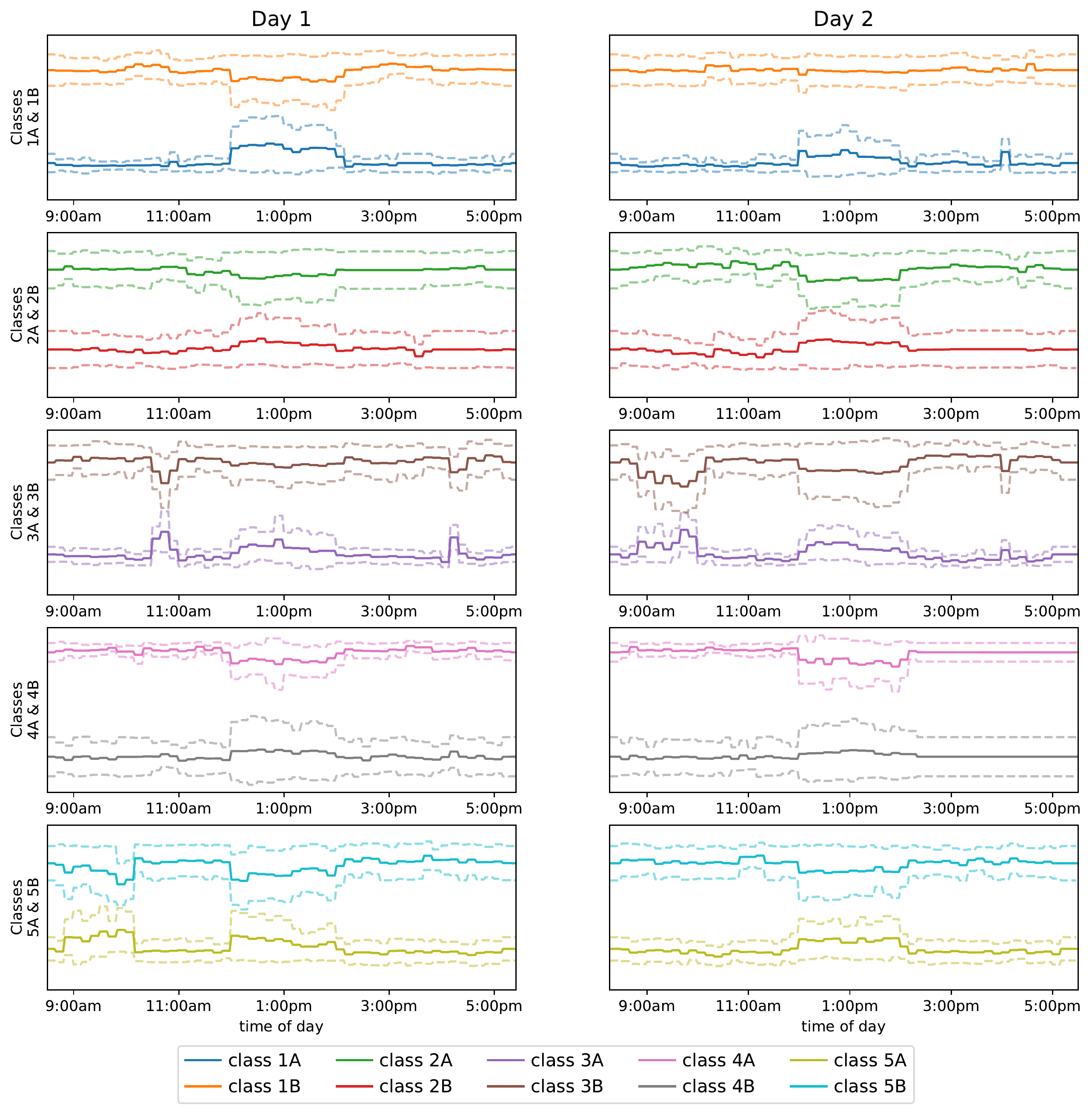}
    \vspace{-1em}
\end{center}
\caption{One-dimensional PCA visualisation of the 30-dimensional node representations for pairs of classes in the same year group, obtained using the Omnibus spectral embedding. The solid lines show the average trajectory for each class, and the dashed line shows one standard deviation above and below.}
\label{fig:Lyon_PCA_omni}
\end{figure}


\FloatBarrier

\section{Additional real data analysis}
\label{sec:additional_real_data}



We provide further experiments and details of the Intensity Profile Projection analysis of the Lyon primary school dataset described in Section~\ref{sec:real_data}. As a comparison to the analysis in the paper, we apply Intensity Profile Projection to the data corresponding to each day of the study with a histogram intensity estimator, choosing a bin size of 10 minutes and computing 30-dimensional trajectories.

Figures~\ref{fig:Lyon_Ys_hist} and \ref{fig:Lyon_Ys_epan} show the first two spherical coordinates \citep{passino2022spectral} of the trajectories obtained using the histogram intensity estimator and the Epanechnikov kernel smoother, respectively. The six plots correspond to the morning, lunchtime and afternoon across both days. 

Figure~\ref{fig:Lyon_PCA_hist} (equivalent to Figure~\ref{fig:Lyon_PCA}) visualises the trajectories of each pair of classes in each year group using PCA where we plot the average trajectory for each class, along with one standard deviation above and below. Since every trajectory using the histogram intensity estimator is piece-wise constant, so are the resulting PCA averages. The pairs of trajectories merge and split in a similar way to those obtained using the kernel smoother.

Figure~\ref{fig:Lyon_tSNE_hist} shows t-SNE visualisations of the node representations at a collection of times throughout the study. The plots are very similar to the equivalent Figure~\ref{fig:Lyon_tSNE} for the kernel smoother with almost identical clusters of students before and after lunch, albeit placed differently by the t-SNE algorithm.

We apply the aligned spectral embedding method and the Omnibus spectral embedding method, described in Section~\ref{sec:method_comparison} on the main text,  to snapshots of the data.  Figures~\ref{fig:Lyon_PCA_ASE} and \ref{fig:Lyon_PCA_omni} visualises the respective trajectories of each pair of classes in each year group using PCA where we plot the average trajectory for each class, along with one standard deviation above and below. These representations fail to capture the behaviour of the network and fail in the same ways as they did for the simulated data in Section~\ref{sec:structure_preservation}.

\section{Visualisation}
\label{sec:vis}

In this section, we give a short overview of the two dimension reduction techniques employed for visualisation in this paper.

For the trajectory visualisation in Figures~\ref{fig:SBM_geom1} and \ref{fig:Lyon_PCA}, we use a principal component analysis which we extend to the dynamic setting by computing a projection using the leading eigenvectors of the \emph{average} covariance matrix, which we apply to the (globally centered) trajectories. This has a similar flavour to our Intensity Profile Projection algorithm, and since we reduce dimension using a common projection, it gives a temporally coherent view of the trajectories.

The second visualisation technique we apply is t-SNE \citep{van2008visualizing}, using the Flt-SNE implementation \citep{linderman2019fast}, which we used to obtain Figure~\ref{fig:Lyon_tSNE}. This visualisation method is not naturally extended to dynamic data, so we initialise the algorithm using the aforementioned dynamic extension PCA, which results in the visualisations at different times being approximately aligned. 

\section{Computational complexity}
\label{sec:time_complexity}

In this section, we given a brief discussion of the computational complexity of Algorithm~\ref{alg:sparse_IPP}. Suppose we use a kernel with finite support of width $2h$, then the expected time complexity of the kernel intensity estimate for a single edge at a single point in time is $O(h\lambda_{\max})$. Evaluation of $\hat{\+\Lambda}(t)$ is $O(n^2 h \lambda_{\max})$ and of the matrix \eqref{eq:lambda_unfolded} is $O(Bn^2 h \lambda_{\max})$. Consider the setting of Corollary~\ref{cor:strong_theorem}, where $\lambda_{ij}(t) \eqc \rho$ for all $i, j,t$ and $L \eqc \rho L_0$ where $\mu, d, L_0 \eqc 1$ are fixed, our theory suggests choosing $h \eqc (n\rho)^{-1/3}$, and if we suppose $n\rho \eqc \log^3 n$, the sparsest regime our theory allows, then evaluation of \eqref{eq:lambda_unfolded} is $O(B n \log^3 n)$, i.e. log-linear in $n$.

In practice the top singular vectors of \eqref{eq:lambda_unfolded} can be computed using the Augmented Implicitly Restarted Lanczos Bidiagonalization algorithm \citep{baglama2005augmented} implemented in the \texttt{irlba} package in R, or the \texttt{irlbpy} in Python. The time complexity of this algorithm has not been studied theoretically although in practice it can be incredibly fast. For example, the package author performs a simulated experiment in which they compute the first 2 singular vectors of a sparse 10M x 10M matrix with 1M non-zero entries which takes approximately 6 seconds on a computer with two Intel Xeon CPU E5-2650 processors (16 physical CPU cores) running at 2 GHz equipped with 128 GB of ECC DDR3 RAM (see \url{https://bwlewis.github.io/irlba/comparison.html}). 

\section{Proof of Lemma~\ref{lemma:min_IRSS}}
\label{sec:eckart_young_proof}

We begin by writing
\begin{align*}
  \argmin_{\+V \in \O(n,d)} \hat R^2(\+V) &= \argmin_{\+V \in \O(n,d)} \int_0^T \hat r_i^2(t; \+V) \: \d t\\
  &= \argmin_{\+V \in \O(n,d)} \int_0^T \sum_{i=1}^n \norm{\+V\+V^\top \hat\Lambda_i(t) - \hat{\Lambda}_i(t)}_2^2 \: \d t \\
  &= \argmin_{\+V \in \O(n,d)} \int_0^T \sum_{i=1}^n \norm{(\+I - \+V\+V^\top) \hat\Lambda_i(t)}_2^2 \: \d t,
\end{align*}
and since $(\+I - \+V \+V^\top)$ is the projection onto the orthogonal complement of the columns space of $\+V$, we have that
\begin{equation*}
  \norm{\hat\Lambda_i(t)}_2^2 = \norm{\+V\+V^\top \hat\Lambda_i(t)}_2^2 + \norm{(\+I - \+V\+V^\top) \hat\Lambda_i(t)}_2^2.
\end{equation*}
Therefore, minimising $\hat R^2(\+V)$ is equivalent to maximising 
\begin{equation*}
  \int_0^T \sum_{i=1}^n \norm{\+V\+V^\top \hat\Lambda_i(t)}_2^2 \: \d t = \int_0^T \sum_{i=1}^n \norm{\+V^\top \hat\Lambda_i(t)}_2^2
\end{equation*}
where the equality holds due to the invariance of the Euclidean norm under orthogonal transformations. As a result, we have
\begin{align*}
  \argmin_{\+V \in \O(n,d)} \hat R^2(\+V) &= \argmax_{\+V \in \O(n,d)}\int_0^T \sum_{i=1}^n \norm{\+V^\top \hat\Lambda_i(t)}_2^2 \\
  &= \argmax_{\+V \in \O(n,d)}\int_0^T \norm{\hL(t) \+V}_{\F}^2 \\
  &= \argmax_{\+V \in \O(n,d)}\int_0^T \tr \curlybrackets{\+V^\top \hL^2(t) \+V} \: \d t \\
  &= \argmax_{\+V \in \O(n,d)} \tr \curlybrackets{\+V^\top \brackets{\int_0^T \hL^2(t) \: \d t} \+V} \\
  &= \argmax_{\+V \in \O(n,d)} \tr \curlybrackets{\+V^\top \hat{\+\Sigma} \+V} \\
  &= \hat{\+U}
\end{align*}
where the final equality follows from the Courant-Fisher min-max theorem. This concludes the proof.

\section{Proof of Theorem~\ref{thm:main_theorem}}
\label{sec:proof_of_main_theorem}


\subsection{Prerequisites}

\subsubsection{Additional notation}

In this proof, we use the notation $a_n \leqcP b_n$ to mean $a_n \leqc b_n$ with overwhelming probability.

\subsubsection{Symmetric dilation with change of basis ``trick''}
\label{sec:symmetric_dilation}

Symmetric dilation is a proof technique which allows statements about the eigenvectors of a symmetric matrix to be easily extended to hold for the singular vectors of a (potentially rectangular) asymmetric matrix. Let $\+M$ be an $n_1 \times n_2$ matrix with non-zero singular values $\{\sigma_i\}_{i=1}^{r}$ and corresponding orthonormal left singular vectors $\smallcurlybrackets{u_i}_{i=1}^{r}$ and right singular vectors $\smallcurlybrackets{v_i}_{i=1}^{r}$. Its \emph{symmetric dilation} is the $n \times n$ matrix (with $n=n_1 + n_2$) constructed as
\begin{equation*}
  \*D(\+M) = \begin{pmatrix}
    \+0 & \+M \\ \+M^\top & \+0
  \end{pmatrix}.
\end{equation*}
One can easily verify that $\*D(\+M)$ has eigenvalues $\smallcurlybrackets{\pm \sigma_i}_{i=1}^r$ and eigenvectors $\smallcurlybrackets{(u_i^\top, \pm v_i^\top)^\top}_{i=1}^r$. 
We stack the first $d$ left and right singular vectors into matrices $\+U\in\R^{n_1 \times d}$ and $\+V\in\R^{n_2 \times d}$, and stack the first $2d$ eigenvectors of $\*D(\+M)$ into a matrix
\begin{equation*}
  \bar{\+U} = \frac{1}{\sqrt{2}} \begin{pmatrix}
    \+U & \+U \\ \+V & -\+V
  \end{pmatrix}.
\end{equation*}
We then have
\begin{equation*}
  \norm{\+U}_{2,\infty} \vee \norm{\+V}_{2,\infty} = \norm{\bar{\+U}}_{2,\infty}, \qquad \text{ and } \qquad \norm{\+M}_2 = \norm{\*D(\+M)}_2.
\end{equation*}
While this standard construction is very useful when $n_1 \eqc n_2$, it can lead to suboptimal bounds when $n_2 \gg n_1$, or $n_1 \gg n_2$, due to an issue about incoherence, which was first raised in \cite{fan2018eigenvector}. The \emph{incoherence} of a subspace $\*U_0$ spanned by the orthonormal columns of a matrix $\+U_0 \in \R^{n_0 \times d}$ is
\begin{equation*}
  \mu\brackets{\+U_0} = \sqrt{\frac{n_0}{d}}\norm{\+U_0}_{2,\infty}.
\end{equation*}
To obtain a good entrywise eigenvector bound under a signal-plus-noise matrix model it is typically necessary that $\mu\smallbrackets{\bar{\+U}} \eqc 1$. Observe that
\begin{equation*}
  \mu\smallbrackets{\bar{\+U}} = \sqrt{\frac{n_1+n_2}{2d}}\norm{\bar{\+U}}_{2,\infty} = \sqrt{\frac{n_1+n_2}{2d}}\brackets{\norm{\+U}_{2,\infty} \vee \norm{\+V}_{2,\infty}} = \sqrt{\frac{n_1+n_2}{2n_1}}\mu(\+U) + \sqrt{\frac{n_1+n_2}{2n_2}}\mu(\+V).
\end{equation*}
If $\mu(\+U),\mu(\+V) \eqc 1$ and $n_1 \eqc n_2$, then $\mu(\bar{\+U}) \eqc 1$ and it is typically possible to obtain good bounds. However, when $n_2 \gg n_1$, we have $\mu(\bar{\+U}) \gg 1$, and a good bound can typically not be obtained. The imbalance of $n_1$ and $n_2$ can cause similar issues when obtaining spectral norm bounds.

This issue can be overcome by changing to a basis which balances the contribution from its first $n_1$ and second $n_2$ elements of each column. Specifically, let $\pi_1 = \sqrt{2n_1/(n_1+n_2)}$ and $\pi_2 = \sqrt{2n_2/(n_1+n_2)}$, and consider the basis $\tilde e_1,\ldots, \tilde e_{n_1+n_2}$, such that
\begin{equation*}
  e_i = \begin{cases}
    \pi_1 \tilde e_i & \text{ if } i \in \curlybrackets{1,\ldots,n_1} \\
    \pi_2 \tilde e_i & \text{ if } i \in \curlybrackets{n_1 + 1,\ldots,n_{1}+n_2},
  \end{cases}
\end{equation*}
where $\{e_i\}_{i=1}^{n_1+n_2}$ are the standard basis vectors in $\R^{n_0}$. Let $\triplenorm{\cdot}_\eta$ denote a norm with respect to the column basis $\{\tilde e_i\}_{i=1}^{n_1+n_2}$, then one can verify that
\begin{equation*}
  \triplenorm{\*D(\+M)}_2 = \norm{\+M}, \qquad \text{ and } \qquad \triplenorm{\bar{\+U}}_{2,\infty} = \pi_1 \norm{\+U}_{2,\infty} \vee \pi_2\norm{\+V}_{2,\infty}.
\end{equation*}
As a result, if $\tilde\mu(\+U_0) = \sqrt{n_0 / d}\triplenorm{\+U_0}_{2,\infty}$, then
\begin{equation*}
  \tilde\mu(\bar{\+U}) = \mu(\+U) \vee \mu(\+V),
\end{equation*}
regardless of the relative sizes of $n_1$ and $n_2$. We use this symmetric dilation with change-of-basis ``trick'' to apply some existing theorems for symmetric matrices to our setting.

\subsubsection{Concentration inequalities}

In this section, we state a collection of lemmas which we will make use of throughout the proof. We begin with a tail bound for a Poisson random variable.
\begin{lemma}
  \label{lemma:poisson_tail_bound}
    Let $X \sim \Poisson(\lambda)$. Then
    \begin{equation*}
      \P\brackets{|X - \lambda| \geq t} \leq 2\exp \brackets{- \frac{t^2}{2(\lambda + t/3)}}.
    \end{equation*}
    For $t \geq \lambda$,
    \begin{equation*}
      \P(|X - \lambda| \geq t) \leq 2e^{-3t/8}.
    \end{equation*}
  \end{lemma}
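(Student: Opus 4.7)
The plan is a textbook application of the Chernoff--Cram\'er method, exploiting the clean closed form of the centred Laplace transform of a Poisson variable. First, I would record that for $X \sim \Poisson(\lambda)$ and any $s \in \R$,
\begin{equation*}
    \E \exp\brackets{s(X - \lambda)} = \exp\brackets{\lambda(e^{s} - 1 - s)},
\end{equation*}
so that Markov's inequality applied to $\exp\smallbrackets{s(X-\lambda)}$ yields, for any $s > 0$, the upper-tail Chernoff bound $\P(X - \lambda \geq t) \leq \exp\smallbrackets{\lambda(e^s - 1 - s) - s t}$, with an analogous bound on the lower tail obtained by replacing $s$ with $-s$.

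Next I would invoke the standard inequality $e^{s} - 1 - s \leq \tfrac{s^2/2}{1 - s/3}$, valid on $[0,3)$ and lying at the heart of Bernstein's inequality, to upper bound the Laplace exponent, and minimise the resulting expression over $s$. The optimum is attained at $s = t/(\lambda + t/3)$, producing the Bernstein-type bound $\P(X - \lambda \geq t) \leq \exp\smallbrackets{-t^2/(2(\lambda + t/3))}$. For the lower tail, the stronger estimate $e^{-s} - 1 + s \leq s^2/2$ (valid on $s \geq 0$) gives the sub-Gaussian bound $\exp\smallbrackets{-t^2/(2\lambda)}$, which is automatically dominated by the upper-tail bound for all $t \geq 0$, so that a union bound across the two tails supplies the factor of $2$ appearing in the statement.

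Finally, the second claim will follow by simple arithmetic on the first: whenever $t \geq \lambda$ we have $\lambda + t/3 \leq t + t/3 = 4t/3$, hence $t^2/(2(\lambda + t/3)) \geq 3t/8$, and the bound $2e^{-3t/8}$ drops out by monotonicity of the exponential. I do not anticipate any substantial obstacle, since the lemma is just Bennett--Bernstein specialised to the Poisson distribution; the only care needed is in optimising the dual parameter and in verifying that the lower-tail bound is dominated by the upper-tail one, so that the ``$2$'' in the statement is legitimate.
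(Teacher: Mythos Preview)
Your argument is correct. The direct Chernoff--Cram\'er computation using the exact Poisson Laplace transform, the termwise inequality $e^{s}-1-s \leq \tfrac{s^2/2}{1-s/3}$ on $[0,3)$, the optimisation at $s = t/(\lambda + t/3)$, the sub-Gaussian lower-tail bound, and the arithmetic for $t\geq\lambda$ all go through as you describe.

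The paper's proof sketch takes a slightly different route: it writes $\Poisson(\lambda)$ as the limit of a sum of $k$ independent $\Bernoulli(\lambda/k)$ variables, invokes the classical Bernstein inequality for bounded summands as a black box, and then sends $k\to\infty$ (the paper writes ``$k\to 0$'', an evident typo). Both arguments are really the same Chernoff calculation under the hood; the difference is packaging. Your approach is self-contained and avoids the limiting step, at the cost of re-deriving the key exponential inequality. The paper's approach is shorter if one is willing to cite Bernstein's inequality, but requires the reader to accept that the bound survives passage to the limit. Either is entirely acceptable here.
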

  The bound can be established by approximating the Poisson distribution with mean $\lambda$ as the sum of $k$ Bernoulli random variables with mean $\lambda / k$, applying Bernstein's inequality, and taking $k \to 0$.
  
  Our next result is a concentration bound which adapts Lemma A.1 of \cite{xie2021entrywise} and can be proved using a vector version of the Bernstein inequality (Corollary 4.1 in \cite{minsker2017some}).
\begin{lemma}
\label{lemma:Poisson_Bernstein}
  Let $X_i \sim \Poisson(\lambda_i)$ independently for all $i=1,\ldots, n$, and suppose $\+Q\in\R^{n \times d}$ is a deterministic matrix whose rows we denote $Q_i$. Let $\lambda_{\max} := \max_{i\in[n]} \lambda_i$, then with probability $1 - 28n^{-3}$
  \begin{equation*}
    \norm{\sum_{i=1}^n \brackets{X_i - \lambda_i}Q_i}_2 \leq 3 \log^2 n \norm{\+Q}_{2,\infty} + \sqrt{6 \lambda_{\max} \log n} \norm{\+Q}_{\F}.
  \end{equation*}
\end{lemma}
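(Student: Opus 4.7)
The goal is to concentrate $S := \sum_{i=1}^n Z_i$ where $Z_i := (X_i - \lambda_i)\,Q_i \in \R^d$ are independent, mean-zero random vectors. The right-hand side of the bound has the classical two-term Bernstein form — a variance piece $\sqrt{V \log n}$ with $V = \lambda_{\max}\,\|\+Q\|_{\F}^2$, and a sub-exponential ceiling $U \log n$ with $U \asymp \log n \cdot \|\+Q\|_{2,\infty}$ — so the natural strategy is to (i) identify the Bernstein variance, (ii) use the Poisson tail bound to truncate each summand so it is almost-surely bounded on a high-probability event, and (iii) apply a vector Bernstein inequality to the resulting bounded sum.

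Step 1. Using $\mathrm{Var}(X_i) = \lambda_i$,
\[
    \sum_i \E\|Z_i\|_2^2 \;=\; \sum_i \lambda_i\,\|Q_i\|_2^2 \;\leq\; \lambda_{\max}\,\|\+Q\|_{\F}^2,
\]
which identifies the variance parameter and will produce the $\sqrt{6\,\lambda_{\max}\log n}\,\|\+Q\|_{\F}$ summand.

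Step 2. Fix a truncation threshold $T \asymp \log n$ with a constant large enough that Lemma~\ref{lemma:poisson_tail_bound} together with a union bound over $i \in [n]$ gives
\[
    \P(\mathcal G^c) \;\leq\; O(n^{-3}), \qquad \mathcal G := \{|X_i - \lambda_i| \leq T \text{ for all } i\}.
\]
On $\mathcal G$, the summands satisfy $\|Z_i\|_2 \leq T\,\|\+Q\|_{2,\infty}$, which supplies the Bernstein scale $U = T\,\|\+Q\|_{2,\infty}$.

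Step 3. Apply Minsker's vector Bernstein (Corollary 4.1 in \cite{minsker2017some}) to the centered, bounded vectors $\tilde Z_i - \E \tilde Z_i$, where $\tilde Z_i := Z_i\,\mathbb{1}\{|X_i - \lambda_i| \leq T\}$, at deviation level $n^{-3}$. This yields, on the good event,
\[
    \|S\|_2 \;\leq\; \sqrt{6\,\lambda_{\max}\log n}\;\|\+Q\|_{\F} \;+\; 3 \log^2 n\;\|\+Q\|_{2,\infty}.
\]
The truncation bias $\sum_i \E[Z_i \mathbb{1}\{|X_i - \lambda_i| > T\}]$ is controlled via Cauchy--Schwarz against the Poisson tail probability and is polynomially small, hence absorbable into the above. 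Collecting the failure probabilities of $\mathcal G$ and of the Bernstein inequality yields the stated $28\,n^{-3}$.

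The main obstacle is the crossover between the two regimes of Lemma~\ref{lemma:poisson_tail_bound}. For $\lambda_i \leq T$ the simpler sub-exponential form $2e^{-3T/8}$ applies directly, forcing only $T \asymp \log n$; but when $\lambda_i$ is large, the full Bernstein tail $2\exp(-T^2/(2(\lambda_i + T/3)))$ would naively require $T \asymp \sqrt{\lambda_i \log n}$, which could inflate the ceiling term by a factor $\sqrt{\lambda_{\max}}$. The clean $\log^2 n$ form of the lemma is recovered because the extra contribution from the sub-Gaussian regime is dominated by the already-present variance term $\sqrt{\lambda_{\max} \log n}\,\|\+Q\|_{\F}$, and a careful bookkeeping of universal constants through the MGF bound $\E e^{s(X_i - \lambda_i)} \leq \exp(\lambda_i s^2 / (2(1-s)))$ and Minsker's tail inversion is what lands us on exactly the coefficients $3$ and $\sqrt 6$.
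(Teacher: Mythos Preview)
Your proposal is correct and follows exactly the route the paper indicates: the paper does not spell out a proof of this lemma, stating only that it ``adapts Lemma A.1 of \cite{xie2021entrywise} and can be proved using a vector version of the Bernstein inequality (Corollary 4.1 in \cite{minsker2017some}).'' Your three-step outline --- compute the variance proxy, truncate via the Poisson tail bound of Lemma~\ref{lemma:poisson_tail_bound}, then apply Minsker's vector Bernstein to the bounded summands --- is precisely that adaptation, and your identification of the truncation level $T\asymp\log n$ as the source of the $\log^2 n$ coefficient is on point.

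One remark on your final paragraph: the claim that the sub-Gaussian--regime overflow $\sqrt{\lambda_{\max}\log n}\cdot\log n\cdot\|\+Q\|_{2,\infty}$ is ``dominated by the already-present variance term'' is not true in full generality (take $\+Q$ with a single nonzero row). The cleanest way through is to note that the Poisson MGF bound $\E e^{s(X-\lambda)}\le\exp\{\lambda s^2/(2(1-|s|))\}$ is itself a Bernstein condition with scale $c=1$ independent of $\lambda$, so one can invoke a sub-exponential (rather than bounded) form of the vector Bernstein inequality and bypass the large-$\lambda_i$ truncation issue entirely. Alternatively, in every application the paper makes of this lemma the Poisson means are at most $\lambda_{\max}/M\lesssim 1$, so the regime $\lambda_i>\log n$ simply does not arise and truncation at $T\asymp\log n$ suffices.
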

  
  Next, we state a concentration bound for the spectral norm of random matrices with independent entries which appears as Corollary~3.12 in \cite{bandeira2016sharp}. The original statement of this lemma is for symmetric random matrices, although we general it to arbitrary random matrices using the symmetric dilation with change-of-basis trick described in Section~\ref{sec:symmetric_dilation}. 
  
\begin{lemma}[Corollary~3.12 of \cite{bandeira2016sharp}]
  \label{lemma:Bandeira_concentration2}
    Let $\+X$ be an $n_1\times n_2$ matrix whose entries $x_{ij}$ are independent random variables which obey
    \begin{equation*}
      \E\brackets{x_{ij}} = 0, \quad \text{ and } \quad |x_{ij}| \leq B, \qquad i\in [n_1], j\in[n_2].
    \end{equation*}
    Then there exists a universal constant $c>0$ such that for any $t \geq 0$
    \begin{equation*}
    \P \curlybrackets{\left\| \+X \right\| \geq 4\sqrt{\nu} + t} \leq n \exp \brackets{-\frac{t^2}{cB^2}}.
    \end{equation*}
    where
    \begin{equation*}
      \nu := \max\curlybrackets{\pi_1\max_{i\in[n_1]} \sum_{j=1}^{n_2}\E\brackets{x_{ij}^2}, \pi_2\max_{i\in[n_2]} \sum_{j=1}^{n_1}\E\brackets{x_{ji}^2}}
    \end{equation*}
    and $\pi_k = 2n_k / (n_1 + n_2)$.
  \end{lemma}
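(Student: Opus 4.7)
The plan is to invoke the symmetric dilation with change-of-basis construction already laid out in Section~\ref{sec:symmetric_dilation}, in order to reduce this rectangular statement to the original, symmetric Corollary~3.12 of~\cite{bandeira2016sharp}. Form the symmetric dilation $\mathcal{D}(\+X)$, which is an $n\times n$ symmetric random matrix whose entries in the two off-diagonal blocks are (up to the symmetry constraint) independent, zero-mean, and bounded in absolute value by $B$, with the diagonal blocks identically zero. By the identity $\triplenorm{\mathcal{D}(\+X)}_2 = \|\+X\|$ recorded in Section~\ref{sec:symmetric_dilation}, it suffices to control the operator norm of $\mathcal{D}(\+X)$ when measured in the rescaled basis $\{\tilde e_i\}$.

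Next, represent $\mathcal{D}(\+X)$ in that basis as a symmetric matrix $\widetilde{\+Y}$. The basis change multiplicatively rescales the $(i,j)$-entry of $\mathcal{D}(\+X)$ by a factor depending only on which of the two blocks $i$ and $j$ lie in, so the nonzero entries of $\widetilde{\+Y}$ are again independent (modulo symmetry), zero-mean, and bounded in absolute value by a universal constant multiple of $B$. Carrying out the arithmetic with $\pi_k = 2n_k/(n_1+n_2)$, the maximal row-variance parameter of $\widetilde{\+Y}$ simplifies to exactly
$$\max_i \sum_j \E\, \widetilde{Y}_{ij}^2 \;=\; \nu.$$
At this point one applies Corollary~3.12 of~\cite{bandeira2016sharp} directly to $\widetilde{\+Y}$, yielding $\P\{\|\widetilde{\+Y}\| \geq 4\sqrt{\nu} + t\} \leq n\exp(-t^2/cB^2)$; translating back through $\triplenorm{\mathcal{D}(\+X)}_2 = \|\+X\|$ then gives the stated bound.

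I expect the main obstacle to be purely computational bookkeeping: one must verify carefully that the rescaling factors produced by passing to $\{\tilde e_i\}$ combine so that (i) the variance parameter of $\widetilde{\+Y}$ matches the asymmetric $\nu$ defined in the lemma, with its factors $\pi_1$ and $\pi_2$ correctly attached to the row- and column-variance sums of $\+X$, and (ii) the entrywise $L^\infty$ bound on $\widetilde{\+Y}$ remains within an absolute constant of $B$, so that the universal constant $c$ in the tail is preserved. No new probabilistic ingredient is required beyond Corollary~3.12 itself; the entire argument reduces to an algebraic verification of how the dilation set up in Section~\ref{sec:symmetric_dilation} interacts with the independence and variance structure of $\+X$.
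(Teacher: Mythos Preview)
Your proposal is correct and matches the paper's own approach exactly. The paper does not give a detailed proof of this lemma; it simply states that the original Corollary~3.12 of \cite{bandeira2016sharp} is for symmetric matrices and that the rectangular version follows by applying the symmetric dilation with change-of-basis construction of Section~\ref{sec:symmetric_dilation}, which is precisely the reduction you outline.
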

  
\subsubsection{Weyl's inequality and Wedin's sin$\Theta$ theorem}

The next two lemmas are classical results matrix perturbation theory. Weyl's inequality shows that the singular values of a matrix are stable with respect to small perturbations.
\begin{lemma}[Weyl's inequality]
  Let $\+M,\+E$ be $n_1 \times n_2$ real-valued matrices. Then for every $1 \leq i \leq (n_1 \wedge n_2)$, the $i$th largest singular value of $\+M$ and $\+M + \+E$ obey
  \begin{equation*}
    \left| \sigma_i \brackets{\+M + \+E} - \sigma_i\brackets{\+M} \right| \leq \norm{\+E}_2.
  \end{equation*}
\end{lemma}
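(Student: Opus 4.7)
The plan is to reduce this rectangular singular-value perturbation statement to the Hermitian eigenvalue version of Weyl's inequality via the symmetric dilation trick already set up in Section~\ref{sec:symmetric_dilation}, and then establish the Hermitian version by a one-line application of the Courant--Fischer min--max formula.

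For the reduction, I would form the symmetric dilations $\*D(\+M)$ and $\*D(\+M + \+E)$. Since the dilation construction is linear, $\*D(\+M + \+E) = \*D(\+M) + \*D(\+E)$, and both are $(n_1 + n_2) \times (n_1 + n_2)$ real symmetric matrices. From Section~\ref{sec:symmetric_dilation}, the non-zero eigenvalues of $\*D(\+M)$ are exactly $\curlybrackets{\pm \sigma_j(\+M)}_{j=1}^{n_1\wedge n_2}$, so, ordering eigenvalues in decreasing order, the $i$th eigenvalue of $\*D(\+M)$ coincides with $\sigma_i(\+M)$ for every $i \leq n_1 \wedge n_2$; the same is true of $\*D(\+M + \+E)$. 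Moreover $\norm{\*D(\+E)}_2 = \norm{\+E}_2$. Hence it suffices to establish the Hermitian version $|\lambda_i(\+A + \+B) - \lambda_i(\+A)| \leq \norm{\+B}_2$ for real symmetric $\+A, \+B$ and apply it with $\+A := \*D(\+M)$ and $\+B := \*D(\+E)$.

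For the Hermitian version, recall the Courant--Fischer characterization $\lambda_i(\+A) = \max_{S : \dim S = i} \min_{x \in S, \norm{x}_2 = 1} x^\top \+A x$. For any unit vector $x$, one has $|x^\top \+B x| \leq \norm{\+B}_2$, hence $x^\top(\+A + \+B)x \leq x^\top \+A x + \norm{\+B}_2$. Taking the min over $x$ in a subspace $S$ and then the max over $i$-dimensional subspaces $S$ on both sides yields $\lambda_i(\+A + \+B) \leq \lambda_i(\+A) + \norm{\+B}_2$; exchanging the roles of $\+A$ and $\+A + \+B$ gives the matching lower bound. There is no genuine obstacle in this argument: it is a textbook consequence of the variational characterization, and the only step needing care is lining up the indexing of singular values of $\+M$ with eigenvalues of the $(n_1+n_2) \times (n_1+n_2)$ matrix $\*D(\+M)$, which is immediate from the enumeration $\sigma_1 \geq \cdots \geq \sigma_{n_1 \wedge n_2} \geq 0 = \cdots = 0 \geq -\sigma_{n_1 \wedge n_2} \geq \cdots \geq -\sigma_1$. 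As an alternative that avoids the dilation altogether, one may apply Courant--Fischer directly at the level of singular values, $\sigma_i(\+M) = \max_{U : \dim U = i} \min_{x \in U, \norm{x}_2 = 1} \norm{\+M x}_2$, and combine it with the triangle inequality $\norm{(\+M + \+E)x}_2 \leq \norm{\+M x}_2 + \norm{\+E}_2$ to reach the same conclusion.
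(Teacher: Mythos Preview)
Your argument is correct: both the dilation-plus-Hermitian-Weyl route and the direct Courant--Fischer-for-singular-values route are standard and valid proofs of this inequality. There is nothing to compare against, however, because the paper does not prove this lemma. It is stated in the prerequisites subsection as a classical result from matrix perturbation theory and simply quoted without proof (the companion Wedin $\sin\Theta$ lemma is likewise stated with only a reference to \cite{chen2021spectral}). Your write-up would serve perfectly well as a self-contained justification if one were desired, but the paper itself treats Weyl's inequality as background.
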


One way to measure the distance between two subspaces $\*U$ and $\hat{\*U}$ is via principal angles. Let $\+U,\hat{\+U}$ be matrices whose orthonormal columns span $\*U$ and $\hat{\*U}$ respectively, and let $\smallcurlybrackets{\xi_i}_{i=1}^d$ denote the singular values of $\+U^\top\hat{\+U}$. Then the principal angles $\smallcurlybrackets{\theta_i}_{i=1}^d$ between $\*U$ and $\hat{\*U}$ are defined by  $\xi_i = \cos(\theta_i)$. Let $\sin\Theta(\+U,\hat{\+U}) := \diag(\sin \theta_1,\ldots,\sin\theta_d)$. Another way to measure the distance between $\*U$ and $\hat{\*U}$ is via the difference between the projection operators $\+U\+U^\top$ and $\hat{\+U}\hat{\+U}^\top$, and in fact, these two characterisations are equivalent. Specifically,
\begin{equation*}
  \norm{\sin\Theta(\+U,\hat{\+U})}_2 \equiv \norm{\+U\+U^\top - \hat{\+U}\hat{\+U}^\top}_2.
\end{equation*}
We will use this equivalence without mention throughout the proof.
Wedin's sin$\Theta$ theorem shows that the singular vectors of a matrix are stable with respect to small perturbations.

\begin{lemma}
  Let $\+M$ and $\hat{\+M} = \+M + \+E$ be two $n_1 \times n_2$ real-valued matrices, and denote by $\+U, \hat{\+U}$ (respectively $\+V,\hat{\+V}$) the matrices whose columns contain $d$ orthonormal left (respectively, right) singular vectors, corresponding to the $d$ largest singular values of $\+M$ and $\hat{\+M}$. Let $\delta = \sigma_d(\+M) - \sigma_{d+1}(\+M)$ and suppose that $\smallnorm{\+E} < (1 - 1/\sqrt{2})\delta$, then
  \begin{align*}
    \norm{\sin\Theta\brackets{\+U, \hat{\+U}}}_2 \vee \norm{\sin\Theta\brackets{\+V, \hat{\+V}}}_2 \leq \frac{2 \brackets{\norm{\+E^\top \+U}_2 \vee \norm{\+E \+V}_2}}{\delta} \leq \frac{2\norm{\+E}}{\delta}.
  \end{align*}
\end{lemma}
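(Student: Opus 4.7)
The plan is to reduce this singular-vector perturbation problem to an eigenvector perturbation problem via the symmetric dilation machinery of Section~\ref{sec:symmetric_dilation}, and then invoke a classical Davis--Kahan $\sin\Theta$ bound. Set $\+D = \*D(\+M)$ and $\hat{\+D} = \*D(\hat{\+M}) = \+D + \*D(\+E)$; both are symmetric, with $\norm{\*D(\+E)}_2 = \norm{\+E}_2$. The eigenvalues of $\+D$ are $\pm\sigma_i(\+M)$ (padded with zeros), so its top $d$ eigenvalues are $\sigma_1(\+M),\ldots,\sigma_d(\+M)$, its $(d+1)$-th largest is $\sigma_{d+1}(\+M)$, and hence the spectral gap between the top-$d$ group and the remainder equals $\delta$. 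The top-$d$ eigenspace is spanned by $\bar{\+U}_+ := \tfrac{1}{\sqrt 2}(\+U^\top,\+V^\top)^\top$, while the $d$-dimensional eigenspace corresponding to $\{-\sigma_1(\+M),\ldots,-\sigma_d(\+M)\}$ is spanned by $\bar{\+U}_- := \tfrac{1}{\sqrt 2}(\+U^\top,-\+V^\top)^\top$; define $\hat{\bar{\+U}}_\pm$ analogously from $\hat{\+M}$.

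The key bookkeeping step is to note that, while the projection onto $\bar{\+U}_+$ alone mixes the two sides of the dilation, the projection onto the combined $2d$-dimensional invariant subspace $[\bar{\+U}_+,\bar{\+U}_-]$ is block-diagonal:
\begin{equation*}
  \bar{\+U}_+ \bar{\+U}_+^\top + \bar{\+U}_- \bar{\+U}_-^\top = \begin{pmatrix} \+U\+U^\top & \+0 \\ \+0 & \+V\+V^\top \end{pmatrix},
\end{equation*}
and likewise for the hatted version. Consequently
\begin{equation*}
  \norm{\sin\Theta\bigl([\bar{\+U}_+,\bar{\+U}_-],\,[\hat{\bar{\+U}}_+,\hat{\bar{\+U}}_-]\bigr)}_2 = \norm{\sin\Theta(\+U,\hat{\+U})}_2 \vee \norm{\sin\Theta(\+V,\hat{\+V})}_2.
\end{equation*}
The $2d$ target eigenvalues $\{\pm\sigma_i(\+M)\}_{i\le d}$ of $\+D$ are separated from the rest of its spectrum, namely $\{\pm\sigma_i(\+M)\}_{i>d}\cup\{0\}$, by the same gap $\delta$ (using $\sigma_{d+1}(\+M)\ge 0$). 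Davis--Kahan applied to this $2d$-subspace, combined with Weyl's inequality to lower-bound the perturbed gap by $\delta - \norm{\+E}_2$, therefore yields
\begin{equation*}
  \norm{\sin\Theta(\+U,\hat{\+U})}_2 \vee \norm{\sin\Theta(\+V,\hat{\+V})}_2 \,\leq\, \frac{\norm{\*D(\+E)[\bar{\+U}_+,\bar{\+U}_-]}_2}{\delta - \norm{\+E}_2}.
\end{equation*}

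A direct block computation then gives
\begin{equation*}
  \*D(\+E)[\bar{\+U}_+,\bar{\+U}_-] = \tfrac{1}{\sqrt 2}\begin{pmatrix} \+E\+V & -\+E\+V \\ \+E^\top\+U & \+E^\top\+U \end{pmatrix} = \begin{pmatrix} \+E\+V & \+0 \\ \+0 & \+E^\top\+U \end{pmatrix}\cdot \tfrac{1}{\sqrt 2}\begin{pmatrix} \+I & -\+I \\ \+I & \+I \end{pmatrix},
\end{equation*}
and since the right-hand factor is orthogonal, the spectral norm of the product equals $\norm{\+E\+V}_2 \vee \norm{\+E^\top\+U}_2$. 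Finally, the hypothesis $\norm{\+E}_2 < (1 - 1/\sqrt 2)\delta$ implies $\norm{\+E}_2 < \delta/2$, so $\delta - \norm{\+E}_2 > \delta/2$, which produces the stated factor of $2$; the trivial $\norm{\+E\+V}_2 \vee \norm{\+E^\top\+U}_2 \leq \norm{\+E}_2$ then supplies the second inequality.

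The main obstacle is the decoupling in the second paragraph: a single-subspace Davis--Kahan application to the top-$d$ eigenspace $\bar{\+U}_+$ alone only controls $\bar{\+U}_+^\top \hat{\bar{\+U}}_+ = \tfrac12(\+U^\top\hat{\+U} + \+V^\top\hat{\+V})$, which mixes the two subspace angles rather than separating them. Handling the $+\sigma$ and $-\sigma$ eigenspaces simultaneously via the block-diagonality identity above is what cleanly extracts both factors of the $\vee$ on the left-hand side of the claim; the hypothesis $\norm{\+E}_2 < (1 - 1/\sqrt 2)\delta$ does double duty here, both preventing sign crossings in the perturbed spectrum (so that $[\hat{\bar{\+U}}_+,\hat{\bar{\+U}}_-]$ is indeed a $2d$-dimensional invariant subspace disjoint from the middle of the spectrum of $\hat{\+D}$) and ensuring $\delta - \norm{\+E}_2 > \delta/2$ so that the final constant is exactly $2$.
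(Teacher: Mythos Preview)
Your proof is correct. The paper does not actually supply a proof of this lemma: it simply writes ``See \cite{chen2021spectral} for a proof'' and moves on. So there is nothing to compare against at the level of argument.

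That said, your route---reducing to the symmetric dilation and applying Davis--Kahan to the combined $2d$-dimensional invariant subspace $[\bar{\+U}_+,\bar{\+U}_-]$---is one of the standard ways to derive Wedin from Davis--Kahan, and it meshes naturally with the dilation machinery the paper sets up in Section~\ref{sec:symmetric_dilation}. The block-diagonality identity $\bar{\+U}_+\bar{\+U}_+^\top + \bar{\+U}_-\bar{\+U}_-^\top = \diag(\+U\+U^\top,\+V\+V^\top)$ is exactly the right device to separate the left- and right-singular subspace angles, and your factorisation of $\*D(\+E)[\bar{\+U}_+,\bar{\+U}_-]$ via an orthogonal matrix cleanly extracts $\norm{\+E\+V}_2 \vee \norm{\+E^\top\+U}_2$. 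The use of the hypothesis $\smallnorm{\+E} < (1-1/\sqrt2)\delta$ both to guarantee $\delta - \smallnorm{\+E} > \delta/2$ (yielding the constant $2$) and to ensure the perturbed top-$d$ and bottom-$d$ eigenvalues of $\hat{\+D}$ stay separated from the middle of the spectrum is correctly identified and applied.
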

See \cite{chen2021spectral} for a proof. 

\subsection{Implications of Assumptions~\ref{assump:bounded_intensities}-\ref{assump:bin_size}}
We state here some inequalities involving the parameters of our problem which follow from Assumptions~\ref{assump:bounded_intensities}-\ref{assump:bin_size}, and elementary linear algebra. We will use these facts throughout the proof without mention.
\begin{align}
  \sqrt{n\lambda_{\max}} &\leqc \delta \leq n\lambda_{\max}; \label{eq:delta_bounds}\\
  \delta \log n &\geqc \kappa n \lambda_{\max}; \label{eq:enough_signal_log_bound} \\
  \kappa &\leqc \log n. \label{eq:kappa_bound}
\end{align}
The inequality \eqref{eq:delta_bounds} holds since $\delta \leq \sigma^{1/2}_1(\+\Sigma) \leq \sqrt{n}\smallnorm{\+\Sigma}^{1/2}_{\max} \leq n\lambda_{\max}$, and 
\begin{align*}
  \delta \geqc \frac{\kappa n\lambda_{\max}}{\log(\delta / \sqrt{n\lambda_{max}})} \geqc \frac{n\lambda_{\max}}{\log n} \geqc \sqrt{n\lambda_{\max}\log n} \geqc \sqrt{n\lambda_{\max}}
\end{align*}
where we invoked Assumption~\ref{assump:bin_size}. \eqref{eq:enough_signal_log_bound} holds by noting that the previous bound implies $\log(\delta / \sqrt{n\lambda_{\max}}) \leqc \log n$ and invoking Assumption~\ref{assump:enough_signal}. \eqref{eq:kappa_bound} follows from \eqref{eq:delta_bounds} since 
$
  \kappa \leqc {\delta \log n} / {n \lambda_{\max}} \leqc \log n.
$

\subsection{Setup}
\label{sec:setup}

We begin by defining $M$ equally spaced bins in $(0,1]$,
\begin{equation*}
\label{eq:bins}
  B_1 := \left(0, \frac{1}{M} \right], \quad B_2 := \left(\frac{1}{M}, \frac{2}{M} \right],\ldots \quad, B_M := \left(\frac{M-1}{M}, 1 \right],
\end{equation*}
and define the piecewise approximation of $\lambda_i(t)$,
\begin{equation*}
  \bar{\lambda}_i(t) = M \int_{B_m} \lambda_i(t) \d t, \qquad t \in B_m,\: m\in[M].
\end{equation*}
We then define $t_1,\ldots,t_m \in (0,1]$ such that $\bar{\lambda}_{ij}(t) = \lambda_{ij}(t_m)$ for all $t \in B_m$, which exist by the continuity of $\lambda_{ij}(t)$,
and define the piecewise constant approximation of $Y_i(t)$ as $\bar{Y}_i(t) = \+U^\top \bar{\Lambda}_i(t)$. Our strategy to obtain the bound in Theorem~1 is to decompose it into bias and variance terms:
\begin{equation*}
  \max_{i,j\in[n]} \sup_{t\in\mathcal{T}} \norm{\+W_1 \hat Y_i(t) - {Y}_i(t)}_2 = \underbrace{\max_{i,j\in[n]} \sup_{t\in\mathcal{T}} \norm{\+W_1 \hat Y_i(t) - \bar{Y}_i(t)}_2}_{\text{variance}} + \underbrace{\max_{i,j\in[n]} \sup_{t\in\mathcal{T}} \norm{\+W_2 \bar Y_i(t) - {Y}_i(t)}_2}_{\text{bias}}.
\end{equation*}

Section~\ref{sec:bias} is dedicated to bounding the bias term, and the rest of this section is dedicated to bounding the variance term. Define the unfolding matrices $\hat{\+\Lambda}$ and ${\+\Lambda}$ (without arguments) and their (thin) singular value decompositions as
\begin{align*}
  \hat{\+\Lambda} &:= \brackets{\hat{\+\Lambda}(t_1) \: \cdots \: \hat{\+\Lambda}(t_M)} = \hat{\+U}\hat{\+S}\hat{\+V}^\top + \hat{\+U}_\perp\hat{\+S}_\perp\hat{\+V}_\perp^\top,\\
  \bar{\+\Lambda} &:= \brackets{{\+\Lambda}(t_1) \: \cdots \: {\+\Lambda}(t_M)} = \bar{\+U}\bar{\+S}\bar{\+V}^\top + \bar{\+U}_\perp\bar{\+S}_\perp\bar{\+V}_\perp^\top.
\end{align*}
Then one has that for $ t\in B_m,\: m\in[M]$,
\begin{equation*}
  \hat{\+Y}(t) := \hat{\+\Lambda}(t_m)\hat{\+U} = \hat{\+V}_m \hat{\+S}, \qquad \bar{\+Y}(t) := \bar{\+\Lambda}(t_m)\bar{\+U} = \bar{\+V}_m \hat{\+S}
\end{equation*}
where $\hat{\+V}_m, \bar{\+V}_m$ denote the $m$th blocks of $\hat{\+V}$ and $\bar{\+V}$ respectively. Therefore it follows that,
\begin{equation*}
  \max_{i,j\in[n]} \sup_{t\in\mathcal{T}} \norm{\+W_1 \hat Y_i(t) - \bar{Y}_i(t)}_2 = \norm{\hat{\+V}\hat{\+S}\+W_1^\top - \bar{\+V}\bar{\+S}}_{2,\infty}.
\end{equation*}
For ease of exposition, we drop the subscript $1$ on $\+W_1$ in this section. Our bound is based on the following decomposition of $\hat{\+V}\hat{\+S} - \bar{\+V}\bar{\+S}\+W$.
\begin{proposition}
\label{prop:decomposition}
We have the decomposition
\begin{align}
  \label{line1}
  \hat{\+V}\hat{\+S} - \bar{\+V}\bar{\+S}\+W &= \bar{\+V}(\bar{\+V}^\top \hat{\+V}\hat{\+S} - \bar{\+S}\+W) \\ 
  \label{line2}
  &\quad + (\+I - \bar{\+V}\bar{\+V}^\top)\bar{\+\Lambda}^\top(\hat{\+U} - \bU\+W) \\
  \label{line3}
  &\quad + (\+I - \bar{\+V} \bar{\+V}^\top)(\hat{\+\Lambda} - \bar{\+\Lambda})^\top \bar{\+U}\+W  \\
  \label{line4}
  &\quad + (\+I - \bar{\+V} \bar{\+V}^\top)(\hat{\+\Lambda} - \bar{\+\Lambda})^\top(\hat{\+U} - \bar{\+U} \+W). 
\end{align}
\end{proposition}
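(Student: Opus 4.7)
The plan is to verify the decomposition by straightforward algebraic manipulation, exploiting the identity $\hat{\+\Lambda}^\top \hat{\+U} = \hat{\+V}\hat{\+S}$ that is immediate from the thin SVD (the component along $\hat{\+U}_\perp\hat{\+S}_\perp\hat{\+V}_\perp^\top$ contributes zero when multiplied on the right by $\hat{\+U}$ since $\hat{\+U}_\perp^\top \hat{\+U} = 0$), and similarly $\bar{\+\Lambda}^\top \bar{\+U} = \bar{\+V}\bar{\+S}$.

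First, I would split off line~\eqref{line1} by inserting the projector $\bar{\+V}\bar{\+V}^\top$ and its complement:
\begin{equation*}
  \hat{\+V}\hat{\+S} - \bar{\+V}\bar{\+S}\+W = \bar{\+V}\bigl(\bar{\+V}^\top \hat{\+V}\hat{\+S} - \bar{\+S}\+W\bigr) + (\+I - \bar{\+V}\bar{\+V}^\top)\hat{\+V}\hat{\+S}.
\end{equation*}
The first summand is exactly line~\eqref{line1}, so the task reduces to showing that $(\+I - \bar{\+V}\bar{\+V}^\top)\hat{\+V}\hat{\+S}$ equals the sum of lines~\eqref{line2}--\eqref{line4}.

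Next, I would use the SVD identity to rewrite $\hat{\+V}\hat{\+S} = \hat{\+\Lambda}^\top \hat{\+U}$ and then perform two telescoping substitutions, $\hat{\+\Lambda}^\top = \bar{\+\Lambda}^\top + (\hat{\+\Lambda} - \bar{\+\Lambda})^\top$ and $\hat{\+U} = \bar{\+U}\+W + (\hat{\+U} - \bar{\+U}\+W)$. Expanding the product yields four cross terms: three of them are precisely lines~\eqref{line2}, \eqref{line3}, \eqref{line4}, and the fourth is $(\+I - \bar{\+V}\bar{\+V}^\top)\bar{\+\Lambda}^\top \bar{\+U}\+W$. This fourth term vanishes because $\bar{\+\Lambda}^\top \bar{\+U} = \bar{\+V}\bar{\+S}$ lies in the column span of $\bar{\+V}$, so it is annihilated by $\+I - \bar{\+V}\bar{\+V}^\top$. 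Combining the two computations yields the claim.

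There is no genuine obstacle here; the proposition is a purely algebraic identity and the only judgement call is choosing the initial projector split and the order of telescoping substitutions so that the ``signal'' cross term cancels cleanly. The usefulness of the decomposition lies entirely in the subsequent proof, where lines~\eqref{line1}--\eqref{line4} are controlled separately using, respectively, Wedin-type arguments for the rotation $\+W$, smallness of the residual intensities (Assumption~\ref{assump:residuals}), Bernstein-type concentration for $(\hat{\+\Lambda} - \bar{\+\Lambda})^\top \bar{\+U}$ (via Lemma~\ref{lemma:Poisson_Bernstein}), and a higher-order spectral-norm bound for the remainder.
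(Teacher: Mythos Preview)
Your proposal is correct and follows essentially the same approach as the paper: split off the $\bar{\+V}$-component to isolate line~\eqref{line1}, use $\hat{\+V}\hat{\+S} = \hat{\+\Lambda}^\top\hat{\+U}$, telescope in $\hat{\+\Lambda}$ and $\hat{\+U}$, and observe that the cross term $(\+I - \bar{\+V}\bar{\+V}^\top)\bar{\+\Lambda}^\top\bar{\+U}\+W$ vanishes. The paper performs the two telescoping substitutions sequentially rather than simultaneously, but the argument is otherwise identical.
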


\begin{proof}[Proof of Proposition~\ref{prop:decomposition}]
  We begin by adding and subtracting terms to obtain
  \begin{equation*}
    \hat{\+V}\hat{\+S} - \bar{\+V}\bar{\+S}\+W = \hat{\+V}\hat{\+S} - \bar{\+V} \bar{\+V}^\top \hat{\+V} \hat{\+S} + \underbrace{\bar{\+V}(\bar{\+V}^\top \hat{\+V}\hat{\+S} - \bar{\+S}\+W)}_{\eqref{line1}}.
  \end{equation*}
  Then, noting that $\hat{\+V}\hat{\+S} = \hat{\+\Lambda}^\top\hat{\+U}$ and $(\+I - \bV\bV^\top)\bL^\top\bU = \+0$, we have
  \begin{align*}
    \hat{\+V}\hat{\+S} - \bar{\+V} \bar{\+V}^\top \hat{\+V} \hat{\+S} &= \hL^\top \hU - \bV\bV^\top \hL^\top \hU \\
    &= (\+I - \bV\bV^\top) \hL^\top \hU \\
    &= (\+I - \bV\bV^\top)(\hL - \bL)^\top \hU - (\+I - \bV\bV^\top)\bL^\top\hat{\+U} \\
    &= (\+I - \bV\bV^\top)(\hL - \bL)^\top \hU - \underbrace{(\+I - \bV\bV^\top)\bL^\top(\hat{\+U} - \bU \+W)}_{\eqref{line2}}.
  \end{align*}
  Next, we decompose $(\+I - \bV\bV^\top)(\hL - \bL)^\top \hU$ by adding and subtracting terms to obtain
  \begin{equation*}
    (\+I - \bV\bV^\top)(\hL - \bL)^\top \hU = \underbrace{(\+I - \bV\bV^\top)(\hL - \bL)^\top \bU \+W}_{\eqref{line3}} + \underbrace{(\+I - \bV\bV^\top)(\hL - \bL)^\top (\hU - \+U\+W)}_{\eqref{line4}}.
  \end{equation*}
\end{proof}



\subsection{Technical propositions}

We now outline a series of technical propositions which we require to bound terms \eqref{line1}-\eqref{line4} which we prove in Section~\ref{sec:technical_proofs}.

Our first proposition is a 1-norm and spectral norm bound for $\hL$.

\begin{proposition}
\label{prop:spectral_norm_concentration}
  The bounds
  \begin{equation*}
    \norm{\hL - \bL}_{1} \leqc \sqrt{Mn\lambda_{\max}\log n}, \qquad \norm{\hL - \bL}_2 \leqc \sqrt{Mn \lambda_{\max}}
  \end{equation*}
  hold with overwhelming probability.
\end{proposition}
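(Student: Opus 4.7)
The plan is to reduce both bounds to concentration of independent centered Poisson sums, with one preliminary truncation step. The entries of $\hL(t_m) - \bL(t_m)$ are $MN_{ij,m} - \bar{\lambda}_{ij}(t_m)$, where $N_{ij,m} \sim \Poisson(\bar{\lambda}_{ij}(t_m)/M)$, and distinct bins give independent variables because they correspond to disjoint time intervals. By Lemma~\ref{lemma:poisson_tail_bound} and a union bound over the $n^2 M$ entries, with overwhelming probability every $N_{ij,m} \lesssim \log n$, so throughout we may work with the truncated matrix, whose entries have magnitude at most $O(M\log n)$.

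For the spectral norm bound, I would apply the symmetric dilation with change-of-basis trick of Section~\ref{sec:symmetric_dilation} to pass from our $n\times nM$ matrix to a symmetric matrix of size $n + nM$ with independent-up-to-symmetry entries, and then invoke Lemma~\ref{lemma:Bandeira_concentration2}. With $n_1 = n$ and $n_2 = nM$, the rebalancing factors $\pi_1 \asymp 1/M$ and $\pi_2 \asymp 1$ equalise the two candidates in $\nu$ to yield $\nu \asymp nM\lambda_{\max}$; together with the post-truncation bound $B \asymp M\log n$, choosing deviation $t \asymp \sqrt{nM\lambda_{\max}}$ matches the $4\sqrt{\nu}$ term and gives the desired bound. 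This holds with overwhelming probability precisely when $nM\lambda_{\max}/B^2 \gtrsim \log n$, which is Assumption~\ref{assump:bin_size} up to constants.

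For the ``$1$-norm'' bound, which I read as the maximum column $\ell_2$ norm of $\hL - \bL$ (the natural quantity for controlling term~\eqref{line3} in Proposition~\ref{prop:decomposition}), I argue column by column. For a fixed column indexed by $(j,m)$, the squared column norm $\sum_i (MN_{ij,m} - \bar{\lambda}_{ij}(t_m))^2$ is a sum of independent nonnegative variables with expectation $M\sum_i \bar{\lambda}_{ij}(t_m) \leq Mn\lambda_{\max}$. A scalar Bernstein inequality applied to the truncated summands gives concentration at the scale $Mn\lambda_{\max}\log n$; a union bound over the $nM$ columns then completes the argument. Assumption~\ref{assump:bin_size} resurfaces as the condition ensuring that the Bernstein exponent exceeds $\log n$.

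The main obstacle here is purely bookkeeping: verifying that the Bernstein variance and range parameters balance correctly against the constraint $M \lesssim n\lambda_{\max}/\log^3 n$, and that the change-of-basis rebalancing in the dilation avoids the suboptimal factor of $\sqrt{M}$ that a naive symmetric dilation would introduce (via the imbalance between $n_1$ and $n_2$ flagged in Section~\ref{sec:symmetric_dilation}). No conceptually new ingredient is required; this proposition simply collects the workhorse concentration estimates on which the subsequent entrywise eigenvector analysis rests.
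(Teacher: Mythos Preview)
Your spectral norm argument is essentially the paper's, but you have overlooked one structural point: each block $\hL(t_m)-\bL(t_m)$ is \emph{symmetric}, so the $n\times nM$ matrix $\hL-\bL$ does \emph{not} have independent entries, and Lemma~\ref{lemma:Bandeira_concentration2} cannot be applied to it directly (nor does the dilation trick repair this, since the extra dependencies are within the off-diagonal block, not across it). The paper handles this by writing $\hL-\bL = (\hL^{\mathrm{L}}-\bL^{\mathrm{L}}) + (\hL^{\mathrm{U}}-\bL^{\mathrm{U}})$, where the lower- and upper-triangular parts of each block are separated (diagonals halved); each summand then has genuinely independent entries, Lemma~\ref{lemma:Bandeira_concentration2} applies with $\nu = Mn\lambda_{\max}$ and $B \asymp M\log n$, and the triangle inequality finishes. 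Your computation of $\nu$ via the rebalancing factors is correct, and note that this rebalancing is already built into the statement of Lemma~\ref{lemma:Bandeira_concentration2}, so no further dilation is needed once you have independence.

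Your treatment of the $1$-norm has a more substantive issue: you have misread $\norm{\cdot}_1$. In the paper this is the induced matrix $1$-norm, i.e.\ the maximum absolute \emph{column sum} $\max_j \sum_i |(\hL-\bL)_{ij}|$, not the maximum column $\ell_2$ norm. This is exactly what is needed in the bound of \eqref{LOO_line1}, where one uses $\norm{(\hL-\bL)_{\cdot,m}^\top \hU}_2 \le \norm{(\hL-\bL)_{\cdot,m}}_1 \norm{\hU}_{2,\infty}$; term~\eqref{line3}, which you cite, is handled separately via Lemma~\ref{lemma:Poisson_Bernstein} and does not use this bound. The paper's proof applies scalar Bernstein directly to $\sum_i |(\hL-\bL)_{ij}|$ (after the same truncation you describe), with variance proxy $Mn\lambda_{\max}$ and range $M\log n$, then takes a union bound over columns. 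Your proposed argument, applied to $\sum_i (\hL-\bL)_{ij}^2$, would bound the wrong quantity; converting back via $\norm{v}_1 \le \sqrt{n}\norm{v}_2$ would cost an extra $\sqrt{n}$ and fail to recover the stated bound.
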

The spectral norm bound is obtained using Lemma~\ref{lemma:Bandeira_concentration2}, and the 1-norm bound is obtained via an application of the classical Bernstein inequality.
The next proposition provides control on the singular values of $\hL$.
\begin{proposition}
\label{prop:weyl}
  Let $\sigma_i(\cdot)$ denote the $i$th ordered singular value of a matrix. The singular values of $\hL$ satisfy
  \begin{equation*}
    \sqrt{M\sigma_d(\+\Sigma)} \leqc \sigma_d(\hL) \leq \sigma_1(\hL) \leqc \sqrt{M\sigma_1(\+\Sigma)}.
  \end{equation*}
\end{proposition}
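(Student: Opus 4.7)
The plan is to chain two applications of Weyl's inequality through a deterministic Riemann-sum approximation that relates the singular values of $\bL$ to the eigenvalues of the population integral $\+\Sigma$. The first observation is that $\bL\bL^\top = \sum_{m=1}^M \+\Lambda^2(t_m)$, so
\[
\sigma_i^2(\bL) \;=\; M\,\sigma_i(\bar{\+\Sigma}),\qquad \text{where } \bar{\+\Sigma} \;:=\; \tfrac{1}{M}\sum_{m=1}^M \+\Lambda^2(t_m) \;=\; \int_0^1 \bL^2(t)\,\d t,
\]
is the piecewise-constant approximation of $\+\Sigma$ on the bins $B_1,\ldots,B_M$. Thus the proof reduces to comparing $\bar{\+\Sigma}$ to $\+\Sigma$ and $\hL$ to $\bL$.

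Next I would bound the deterministic bias $\|\bar{\+\Sigma} - \+\Sigma\|_2$. On each $B_m$, since $\lambda_{ij}$ is $L$-Lipschitz and each bin has width $1/M$, the entries of $\+\Lambda(t) - \+\Lambda(t_m)$ are uniformly bounded by $L/M$, so $\|\+\Lambda(t) - \+\Lambda(t_m)\|_2 \leq \|\+\Lambda(t) - \+\Lambda(t_m)\|_\F \leq nL/M$. Expanding $\+\Lambda^2(t) - \+\Lambda^2(t_m) = (\+\Lambda(t) - \+\Lambda(t_m))\+\Lambda(t) + \+\Lambda(t_m)(\+\Lambda(t) - \+\Lambda(t_m))$ and using $\|\+\Lambda(s)\|_2 \leq n\lambda_{\max}$, one obtains $\|\+\Lambda^2(t) - \+\Lambda^2(t_m)\|_2 \leqc n^2 L\lambda_{\max}/M$ for $t\in B_m$; integrating over $[0,1]$ yields $\|\bar{\+\Sigma} - \+\Sigma\|_2 \leqc n^2 L\lambda_{\max}/M$, and Weyl's inequality then delivers $|\sigma_i(\bar{\+\Sigma}) - \sigma_i(\+\Sigma)| \leqc n^2 L\lambda_{\max}/M$.

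Then I would pass from $\bL$ to $\hL$ via Proposition~\ref{prop:spectral_norm_concentration}, which gives $\|\hL - \bL\|_2 \leqc \sqrt{Mn\lambda_{\max}}$ with overwhelming probability; Weyl's inequality applied once more yields $|\sigma_i(\hL) - \sigma_i(\bL)| \leqc \sqrt{Mn\lambda_{\max}}$. Combining with $\sigma_i(\bL) = \sqrt{M\sigma_i(\bar{\+\Sigma})}$, the claimed bounds $\sqrt{M\sigma_d(\+\Sigma)} \leqc \sigma_d(\hL) \leq \sigma_1(\hL) \leqc \sqrt{M\sigma_1(\+\Sigma)}$ follow once both perturbations are dominated by $\sqrt{M\sigma_d(\+\Sigma)} = \sqrt{M}\,\sigma_d$. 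For the stochastic term this is immediate from $\sigma_d^2 \geq \delta^2 \geqc n\lambda_{\max}$, which is the implication \eqref{eq:delta_bounds}.

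The main obstacle is verifying that the deterministic Riemann bias $n^2 L\lambda_{\max}/M$ is also negligible compared to $\sigma_d^2$; this is the same regularity regime in which the main theorem's bias contribution $n^{3/2}L\lambda_{\max}/(M\delta)$ is meaningful, and it is the only place where the Lipschitz constant $L$ enters the proposition. Either it follows from Assumptions~\ref{assump:bounded_intensities}--\ref{assump:bin_size} after bookkeeping, or it should be viewed as an implicit prerequisite of the statement that is consistent with the regime in which Theorem~\ref{thm:main_theorem} is informative.
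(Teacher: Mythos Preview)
Your approach is essentially the paper's: apply Weyl's inequality to $\hL$ versus $\bL$ and feed in the spectral-norm bound of Proposition~\ref{prop:spectral_norm_concentration}. In fact you are being \emph{more} careful than the paper. The paper's proof simply asserts the exact identity $\sigma_i(\bL)=\sqrt{M\sigma_i(\+\Sigma)}$ and moves on; it never addresses the Riemann--sum discrepancy between $\tfrac{1}{M}\bL\bL^\top=\tfrac{1}{M}\sum_{m}\+\Lambda^2(t_m)$ and $\+\Sigma=\int_0^1\+\Lambda^2(t)\,\d t$ that you track via $\bar{\+\Sigma}$.

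Your concern about whether the deterministic bias $n^2L\lambda_{\max}/M$ is dominated by $\sigma_d(\+\Sigma)$ is therefore legitimate: nothing in Assumptions~\ref{assump:bounded_intensities}--\ref{assump:bin_size} forces this, since $M$ is only upper-bounded and $L$ is unconstrained. The paper sidesteps the issue entirely by treating the identity as exact. Your reading --- that it is an implicit regularity condition consistent with the regime in which the bias term $n^{3/2}L\lambda_{\max}/(M\delta)$ of Theorem~\ref{thm:main_theorem} is informative --- is the appropriate way to interpret it.
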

The result is obtained using Weyl's inequality.
The next proposition provides control of the spectral norm of $\+Q^\top(\hL - \bL)\+R$, where $\+Q, \+R$ are conformable, deterministic unit-norm matrices.

\begin{proposition}
\label{prop:QR_bound}
  For conformable, deterministic unit-norm matrices $\+Q, \+R$, the bound
  \begin{equation}
  \label{eq:QR_form}
    \norm{\+Q^\top(\hL - \bL)\+R}_2 \leqc M \log^{3/2}n
  \end{equation}
  holds with overwhelming probability.
\end{proposition}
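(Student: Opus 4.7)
The plan is to express $\+Q^\top(\hL - \bL)\+R$ as a sum of independent, centered, random matrices indexed by edge–bin pairs, and apply a matrix Bernstein inequality. Let $N_{ikm}$ denote the number of interaction events between nodes $i,k$ falling in bin $B_m$, so $N_{ikm} \sim \Poisson(\mu_{ikm})$ independently over $i \leq k$ and $m \in [M]$, with $\mu_{ikm} := \int_{B_m} \lambda_{ik}(t)\,\d t \leq \lambda_{\max}/M$. Writing $\+R = [\+R^{(1)}; \ldots; \+R^{(M)}]$ in $M$ blocks of size $n \times d$ and using the symmetry of $\hL(t_m) - \bL(t_m)$, one obtains
\[
\+Q^\top(\hL - \bL)\+R \;=\; \sum_{m=1}^{M} \sum_{i \leq k} M\,(N_{ikm} - \mu_{ikm})\, C_{ikm},
\]
where $C_{ikm} := \+Q_i (\+R^{(m)}_k)^\top + \+Q_k (\+R^{(m)}_i)^\top$ for $i<k$ and $C_{iim} := \+Q_i (\+R^{(m)}_i)^\top$, with $\+Q_i$ and $\+R^{(m)}_k$ denoting the corresponding rows viewed as column vectors.

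I would then apply matrix Bernstein. For the matrix variance, use $C_{ikm}C_{ikm}^\top \preceq 2\|\+R^{(m)}_k\|^2 \+Q_i \+Q_i^\top + 2\|\+R^{(m)}_i\|^2 \+Q_k \+Q_k^\top$ together with $M^2 \mu_{ikm} \leq M\lambda_{\max}$ to bound $\bigl\|\sum\E[Y_{ikm} Y_{ikm}^\top]\bigr\| \leqc M\lambda_{\max} \|\+R\|_F^2 \|\+Q\|_2^2 \leqc M$ (and analogously for $\sum\E[Y_{ikm}^\top Y_{ikm}]$), invoking the unit-norm assumption and $d \leqc 1$. For the almost-sure bound on a summand, the Poisson tail bound (Lemma~\ref{lemma:poisson_tail_bound}) combined with a union bound over the $O(n^2 M)$ Poisson variables, and the fact that $\mu_{ikm} \leqc 1$, gives $|N_{ikm} - \mu_{ikm}| \leqc \log n$ uniformly with overwhelming probability, so on this high-probability event each summand has operator norm at most $CM\log n \cdot \|\+Q\|_{2,\infty}\|\+R\|_{2,\infty}$.

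Plugging these two inputs into matrix Bernstein yields, with overwhelming probability, $\|\+Q^\top(\hL - \bL)\+R\|_2 \leqc \sqrt{M\log n} + M\log^2 n \cdot \|\+Q\|_{2,\infty}\|\+R\|_{2,\infty}$; under Assumptions~\ref{assump:bounded_intensities}--\ref{assump:bin_size} and the unit-norm condition, $\|\+Q\|_{2,\infty}\|\+R\|_{2,\infty} \leqc 1$, so both terms are controlled by the right-hand side of \eqref{eq:QR_form}.

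The main obstacle is tightening the logarithmic exponent from the $\log^2 n$ produced by the naïve truncate-then-Bernstein argument down to the claimed $\log^{3/2} n$. This improvement should be achievable by either employing a sub-exponential form of matrix Bernstein (which exploits the $\psi_1$ tail of each centred Poisson directly, avoiding the hard-truncation loss) or by carefully balancing the truncation threshold against the Bernstein deviation scale. Verifying that the unit-norm condition alone (without additional incoherence on $\+Q,\+R$) is enough to absorb the resulting numerical factors is the most delicate step.
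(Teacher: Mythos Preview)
Your matrix-Bernstein route is genuinely different from the paper's, which instead uses symmetric dilation to reduce to a symmetric quadratic form, an $\epsilon$-net over the unit sphere in $\R^{2d}$ to reduce to a scalar sum $\sum_{i,j} e_{ij} z_i z_{n+j}$ for fixed $z$ with $\smallnorm{z}_2\le 1$, and then applies \emph{Hoeffding's} inequality (not Bernstein's) after truncating the entries at $M\log n$. The crucial point is that Hoeffding's variance proxy aggregates as $\sum_{i,j}(M\log n)^2 z_i^2 z_{n+j}^2 \leq (M\log n)^2$, giving a pure sub-Gaussian tail and hence deviation $M\log n\cdot\sqrt{\log n}=M\log^{3/2}n$ after the union bound over the $18^d$-point net.

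The gap you flag is real, but neither of your proposed remedies will close it. The extra half-power of $\log n$ comes from Bernstein's \emph{linear} term $L\log n$, where $L$ is a supremum over individual summands and therefore cannot aggregate over the weights; truncation cannot be taken below order $\log n$ with overwhelming probability, and a sub-exponential matrix Bernstein still carries the same linear-in-$L$ regime for large deviations. The fix within your framework is to replace matrix Bernstein by a matrix \emph{Hoeffding} inequality after truncation: the variance proxy then becomes $(M\log n)^2\bigl\|\sum_{i,k,m} C_{ikm}^\top C_{ikm}\bigr\| \leqc (M\log n)^2\smallnorm{\+Q}_{\F}^2\smallnorm{\+R}_2^2 \leqc d(M\log n)^2$, yielding the claimed $M\log^{3/2}n$ when $d\leqc 1$ (which the paper also uses implicitly through the $18^d$ net cardinality). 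Alternatively, observe that in every invocation of this proposition in the paper one has $\+Q=\bU$ and $\+R=\bV$, whose $\smallnorm{\cdot}_{2,\infty}$ norms are of order $\mu\sqrt{d/n}$; your bound $M\log^2 n\cdot\smallnorm{\+Q}_{2,\infty}\smallnorm{\+R}_{2,\infty}$ would then already suffice for the downstream arguments even without the sharper exponent.
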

The proof of Proposition~\ref{prop:QR_bound} employs a classical $\epsilon$-net argument to the spectral norm of an appropriately constructed symmetric dilation matrix.

The next proposition states that both the matrices $\bU^\top\hU$ and $\bV^\top\hV$ are well approximated by a common orthogonal matrix.

\begin{proposition}
\label{prop:procrustes}
  There exists an orthogonal matrix $\+W$ such that
  \begin{equation*}
    \norm{\bU^\top\hU - \+W}_2 \leqc \frac{\sqrt{n \lambda_{\max}}}{\delta}, \qquad \norm{\bV^\top\hV - \+W}_2 \leqc \frac{\sqrt{n \lambda_{\max}}}{\delta}
  \end{equation*}
  hold with overwhelming probability.
\end{proposition}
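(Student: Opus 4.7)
My plan is to obtain $\sin\Theta$ bounds for both subspace pairs via Wedin's theorem, anchor a single orthogonal matrix $\+W$ to the $\bU$ side by Procrustes alignment, and then show the same $\+W$ controls the $\bV$ side by a Sylvester-equation argument tying the two inner-product matrices together. The main obstacle is this coupling: naive Procrustes applied separately to each inner-product matrix would yield two different orthogonal matrices, and reconciling them requires extracting algebraic structure from the defining SVD relations.

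First, I would apply Wedin's theorem to $(\bL, \hL)$. Since $\bL\bL^\top$ equals $M$ times the piecewise-constant analogue of $\+\Sigma$, the singular values of $\bL$ are of order $\sqrt{M}\sigma_i$ and the relevant eigengap is of order $\sqrt{M}\delta$ (Proposition~\ref{prop:weyl} gives this for $\hL$; the deterministic analogue for $\bL$ follows identically). Combining $\smallnorm{\hL-\bL}_2 \leqc \sqrt{Mn\lambda_{\max}}$ from Proposition~\ref{prop:spectral_norm_concentration} with Assumption~\ref{assump:enough_signal} (which ensures $\delta \gg \sqrt{n\lambda_{\max}}$, validating Wedin's hypothesis) yields
\begin{equation*}
\smallnorm{\sin\Theta(\bU,\hU)}_2 \vee \smallnorm{\sin\Theta(\bV,\hV)}_2 \leqc \sqrt{n\lambda_{\max}}/\delta.
\end{equation*}
I would then set $\+W := \+A\+B^\top$ from the SVD $\bU^\top\hU = \+A\+\Sigma_U\+B^\top$. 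Since the singular values of $\bU^\top\hU$ are cosines of the principal angles and $1-\cos\theta \leq \sin\theta$ on $[0,\pi/2]$, this gives $\smallnorm{\bU^\top\hU - \+W}_2 \leq \smallnorm{\sin\Theta(\bU,\hU)}_2$, immediately delivering the $\bU$-side bound.

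For the $\bV$ side I would exploit the SVD identities $\hL\hV = \hU\hS$ and $\bL^\top\bU = \bV\bS$ (and their duals): pre-multiplying by $\bU^\top$ and $\bV^\top$ respectively and using $\hL = \bL + (\hL - \bL)$ gives
\begin{align*}
\bU^\top\hU\hS &= \bS\bV^\top\hV + \bU^\top(\hL-\bL)\hV,\\
\bV^\top\hV\hS &= \bS\bU^\top\hU + \bV^\top(\hL-\bL)^\top\hU.
\end{align*}
Subtracting and letting $\+X := \bU^\top\hU - \bV^\top\hV$ produces the Sylvester equation
\begin{equation*}
\bS\+X + \+X\hS = \bU^\top(\hL-\bL)\hV - \bV^\top(\hL-\bL)^\top\hU.
\end{equation*}
The integral representation $\+X = \int_0^\infty e^{-\bS t}(\text{RHS})\,e^{-\hS t}\,\d t$, combined with $\sigma_d(\bS), \sigma_d(\hS) \geqc \sqrt{M}\delta$ from Proposition~\ref{prop:weyl}, gives $\smallnorm{\+X}_2 \leqc \smallnorm{\hL-\bL}_2/(\sqrt{M}\delta) \leqc \sqrt{n\lambda_{\max}}/\delta$. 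The $\bV$-side bound then follows by the triangle inequality, and all estimates above hold on the overwhelming-probability event of Proposition~\ref{prop:spectral_norm_concentration}.
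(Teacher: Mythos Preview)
Your argument is correct and tracks the paper's proof closely: define $\+W$ by Procrustes alignment of $\bU^\top\hU$, bound $\smallnorm{\bU^\top\hU-\+W}_2$ via the principal-angle/Wedin route, then control $\smallnorm{\bU^\top\hU-\bV^\top\hV}_2$ using the SVD intertwining relations, and finish by the triangle inequality. The point of departure is in how you extract the bound on $\+X:=\bU^\top\hU-\bV^\top\hV$. You write $\bS\+X+\+X\hS=\+R$ as a Sylvester equation and invoke the integral representation $\+X=\int_0^\infty e^{-\bS t}\+R\,e^{-\hS t}\,\d t$, which directly yields $\smallnorm{\+X}_2\leq \smallnorm{\+R}_2/(\sigma_d(\bS)+\sigma_d(\hS))$; you then bound $\smallnorm{\+R}_2$ crudely by $2\smallnorm{\hL-\bL}_2$. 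The paper instead manipulates the equivalent relation $\+X+\bS\+X\hS^{-1}=\+R\hS^{-1}$ and argues $\smallnorm{\+X}_2\leq\smallnorm{\+X+\bS\+X\hS^{-1}}_2$ through a Rayleigh-quotient calculation---which, as written, applies the quadratic-form characterisation of the spectral norm to a non-symmetric matrix---before splitting $\hU,\hV$ further and invoking Proposition~\ref{prop:QR_bound}. Your Sylvester-integral treatment is cleaner, sidesteps the symmetry issue, and does not require Proposition~\ref{prop:QR_bound} at this stage, at the cost of a slightly less sharp (but entirely sufficient) bound on $\smallnorm{\+R}_2$.
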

To prove Proposition~\ref{prop:procrustes}, we empoy the Wedin sin$\Theta$ theorem to obtain a bound on $\smallnorm{\bU^\top\hU - \+W}_2$. We then obtain a bound on $\smallnorm{\bU^\top\hU - \bV^\top \hV}_2$, and combine these bounds to establish the proposition.

The next technical tool we require is the ability to ``swap'' $\+W, \bS$ and $\hS$.
\begin{proposition}
\label{prop:swap}
  The bound
  \begin{equation*}
    \norm{\+W\hS - \bS\+W}_2 \leqc M\log^{3/2}n
  \end{equation*}
  holds with overwhelming probability.
\end{proposition}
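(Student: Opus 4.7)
I would derive the algebraic identity
\[
\+W\hS - \bS\+W = (\+W - \bU^\top\hU)\hS + \bU^\top(\hL - \bL)\hV + \bS(\bV^\top\hV - \+W),
\]
which follows by adding and subtracting $\bU^\top\hU\hS$ and $\bS\bV^\top\hV$ and invoking the SVD identities $\hU\hS = \hL\hV$ (so $\bU^\top\hU\hS = \bU^\top\hL\hV$) and $\bU^\top\bL = \bS\bV^\top$ (so $\bS\bV^\top\hV = \bU^\top\bL\hV$); subtracting these then gives $\bU^\top\hU\hS - \bS\bV^\top\hV = \bU^\top(\hL-\bL)\hV$. The plan is to bound each of the three terms separately in spectral norm.

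For the middle term, I would split $\hV = \bV\bV^\top\hV + (\+I - \bV\bV^\top)\hV$. The in-subspace piece $\bU^\top(\hL-\bL)\bV \cdot \bV^\top\hV$ has spectral norm at most $\|\bU^\top(\hL-\bL)\bV\|_2 \cdot \|\bV^\top\hV\|_2 \leqc M\log^{3/2}n$ directly from Proposition~\ref{prop:QR_bound} and $\|\bV^\top\hV\|_2 \leq 1$. The orthogonal piece $\bU^\top(\hL-\bL)(\+I - \bV\bV^\top)\hV$ has norm at most $\|\hL-\bL\|_2 \cdot \|\sin\Theta(\hV, \bV)\|_2$; Proposition~\ref{prop:spectral_norm_concentration} bounds the first factor by $\sqrt{Mn\lambda_{\max}}$, and Wedin's $\sin\Theta$ theorem combined with Proposition~\ref{prop:weyl} (which gives a singular-value gap $\eqc \sqrt{M}\delta$) bounds the second by $\sqrt{n\lambda_{\max}}/\delta$, so the product is of order $\sqrt{M}\cdot n\lambda_{\max}/\delta \leqc \sqrt{M}\log n$, which is absorbed into $M\log^{3/2}n$; here $n\lambda_{\max}/\delta \leqc \log n$ is the consequence of Assumption~\ref{assump:enough_signal} derived earlier in the proof.

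For the two boundary terms $(\+W - \bU^\top\hU)\hS$ and $\bS(\bV^\top\hV - \+W)$, I would apply Proposition~\ref{prop:procrustes} to control $\|\+W - \bU^\top\hU\|_2$ and $\|\bV^\top\hV - \+W\|_2$, and Proposition~\ref{prop:weyl} to control the singular-value norms $\|\hS\|_2, \|\bS\|_2 \leqc \sqrt{M}\sigma_1$. Combining these with the trivial bound $\sigma_1 \leq n\lambda_{\max}$ from $\|\+\Sigma\|_2 \leq n^2\lambda_{\max}^2$, together with $n\lambda_{\max}/\delta \leqc \log n$, is designed to bring each boundary term inside $M\log^{3/2}n$; in the marginal regime of Assumption~\ref{assump:bin_size} this may require the sharper quadratic Procrustes refinement $\|\bU^\top\hU - \+W\|_2 \leqc n\lambda_{\max}/\delta^2$ (via $1-\cos\theta \leq \sin^2\theta$) beyond what Proposition~\ref{prop:procrustes} literally states.

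The main obstacle I expect is exactly this boundary-term analysis: the naive product estimate $\|\+W - \bU^\top\hU\|_2\,\|\hS\|_2$ based on the literal statements of Propositions~\ref{prop:procrustes} and~\ref{prop:weyl} can be too loose when $M$ is close to its upper bound in Assumption~\ref{assump:bin_size}, so closing the bound cleanly requires a careful interplay between Assumptions~\ref{assump:bounded_intensities} and~\ref{assump:enough_signal}, and possibly the refined Procrustes bound noted above, to absorb the excess factor of $\sqrt{n\lambda_{\max}}$ into polylogarithmic terms.
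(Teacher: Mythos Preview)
Your proposal is correct and follows essentially the same route as the paper: the identical three-term decomposition $\+W\hS-\bS\+W=(\+W-\bU^\top\hU)\hS+\bU^\top(\hL-\bL)\hV+\bS(\bV^\top\hV-\+W)$, the same splitting of the middle term via $\hV=\bV\bV^\top\hV+(\+I-\bV\bV^\top)\hV$, and the same propositions (Propositions~\ref{prop:spectral_norm_concentration}, \ref{prop:weyl}, \ref{prop:QR_bound}, \ref{prop:procrustes} together with Wedin) applied to each piece. Your anticipated need for the refined quadratic Procrustes bound $\|\bU^\top\hU-\+W\|_2\leqc n\lambda_{\max}/\delta^2$ is on point---this is precisely what is established in display~\eqref{eq:W_U} within the proof of Proposition~\ref{prop:procrustes}, and it is what makes the boundary terms go through.
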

This result follows by applying the previous propositions to an appropriately constructed decomposition.

Part of the challenge of obtaining a good bound on the term \eqref{line4} is that $\smallbrackets{\tilde{\+A} - \tilde{\+\Lambda}}$ and $\smallbrackets{\hU - \bU\+W}$ are dependent, and this dependence must be decoupled in order to apply the standard suite of matrix perturbation tools. For $m = 1,\ldots,n$, let
\begin{equation*}
  \*N_{m} = \{ (i,j) : i=m \text{ or } j\in \{m + (\ell-1)n, \ell \in [M]\} \}
\end{equation*}
and construct the auxiliary matrices $\hL^{(1)},\ldots,\hL^{(n)}$ defined by
\begin{equation}
\label{eq:auxilliary_matrices}
  \hL_{ij}^{(m)} = \begin{cases}
    \hL_{ij} & \text{ if } (i,j) \notin \*N_{m}, \\
    \bL_{ij} & \text{ if } (i,j) \in \*N_{m}.
  \end{cases}
\end{equation}
In words, $\hL_{ij}^{(m)}$ is the matrix obtained by replacing the $m$th row and columns of each of its blocks with its expectation.
In this way, the $m$th row of $\smallbrackets{\hL - \bL}$ and $\hL^{(m)}$ are independent. Let $\hU^{(m)}$ denote the matrix of leading left singular values of $\hL^{(m)}$.

We apply a result due to \cite{abbe2020entrywise}, which provides $\ell_{2,\infty}$ control of $\smallnorm{\hat{\+U}}_{2,\infty},
\smallnorm{\hat{\+U}^{(m)}}_{2,\infty},$ and $\smallnorm{\hat{\+U}^{(m)}\+W^{(m)} - \+U}_{2,\infty}$.
\begin{proposition}
  \label{prop:leave_one_out_quantities}
    The bounds
    \begin{equation*}
      \norm{\hat{\+U}}_{2,\infty}, \;
      \norm{\hat{\+U}^{(m)}}_{2,\infty},\;
      \norm{\hat{\+U}^{(m)}\+W^{(m)} - \+U}_{2,\infty} \leqc \frac{\mu\lambda_{\max}\sqrt{d n}\log n}{\delta}
    \end{equation*}
    hold with overwhelming probability.
  \end{proposition}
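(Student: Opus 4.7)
The plan is to apply a standard leave-one-out entrywise eigenvector bound, specifically Theorem~2.1 of Abbe, Fan, Wang and Zhong (2020), to the symmetric dilation $\*D(\hL)$ with the change-of-basis introduced in Section~\ref{sec:symmetric_dilation}. The auxiliary matrices $\hL^{(m)}$ in~\eqref{eq:auxilliary_matrices} are set up precisely so that the $m$th row of the dilation of $\smallbrackets{\hL - \bL}$ is independent of $\hL^{(m)}$ (and hence of $\hU^{(m)}$), which is the independence structure the theorem exploits. Because the theorem is stated for symmetric matrices and the naive symmetric dilation of the tall, thin matrix $\hL$ has very poor incoherence, the change-of-basis is what keeps the incoherence of the dilated leading subspace at $\tilde\mu \eqc \mu$ and the dilated spectral norm equal to $\smallnorm{\hL - \bL}_2$.

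I would then verify the four standard hypotheses of the theorem in turn. Spectral-norm concentration $\smallnorm{\hL - \bL}_2 \leqc \sqrt{Mn\lambda_{\max}}$ is given by Proposition~\ref{prop:spectral_norm_concentration}; the eigengap of $\*D(\bL)$ is of order $\sqrt{M}\delta$ via Proposition~\ref{prop:weyl}, and Assumption~\ref{assump:enough_signal} ensures it dominates the perturbation; incoherence of $\bU$ is $\mu$ by definition and that of $\bV$ (which could a priori blow up because the unfolded matrix is much wider than it is tall) is kept at $O(\mu)$ by the $\pi_1, \pi_2$-rescaling. Finally, the row-wise concentration condition reduces to controlling
\[
\max_{m\in[n]}\,\norm{\smallbrackets{\hL - \bL}_{m,\cdot}\,\bV}_2,
\]
which is a sum of centred scaled-Poisson variables weighted by the rows of $\bV$ and is handled by Lemma~\ref{lemma:Poisson_Bernstein}, combined with the incoherence control $\smallnorm{\bV}_{2,\infty} \leqc \mu\sqrt{d/(nM)}$.

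Feeding these four ingredients into the theorem yields the central estimate
\[
\max_{m\in[n]}\,\norm{\hU^{(m)}\+W^{(m)} - \+U}_{2,\infty} \;\leqc\; \frac{\mu\lambda_{\max}\sqrt{d n}\,\log n}{\delta}.
\]
The same reasoning applied to $\hL$ itself (rather than to $\hL^{(m)}$) gives $\smallnorm{\hU\+W - \+U}_{2,\infty}$ with the same bound, and then the claimed estimates on $\smallnorm{\hU^{(m)}}_{2,\infty}$ and $\smallnorm{\hU}_{2,\infty}$ follow by the triangle inequality, noting that $\smallnorm{\+U}_{2,\infty} = \mu\sqrt{d/n}$ is dominated by the right-hand side above whenever $\delta \leqc n\lambda_{\max}\log n$, which is guaranteed by~\eqref{eq:delta_bounds}.

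I expect the main obstacle to be careful bookkeeping rather than any single conceptual hurdle. First, the theorem returns a distinct orthogonal rotation $\+W^{(m)}$ for each $m$, and one must verify that these can be aligned with the single global $\+W$ produced elsewhere in the proof, up to an $\ell_{2,\infty}$ correction that is absorbed into the target bound. Second, the Poisson row-wise concentration must be executed with care so as to produce the correct $\sqrt{M\lambda_{\max}}$-type scaling rather than a larger $M\lambda_{\max}$ contribution: per histogram bin the noise variance is $M\lambda_{\max}$, so only after projecting against the incoherent $\bV$ does the total variance contract by the $\mu^2 d / n$ factor that is ultimately responsible for the $\mu\sqrt{d}$ term in the final bound.
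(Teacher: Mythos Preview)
Your plan is the paper's plan: apply the Abbe--Fan--Wang--Zhong entrywise eigenvector theorem (restated here as Lemma~\ref{lemma:abbe}, with the $\pi_1,\pi_2$ rescaling built in so that you work directly with the rectangular $\hL$ rather than its dilation) to $\hL$ and to each $\hL^{(m)}$, verifying spectral-norm concentration, eigengap, incoherence and row concentration.

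One technical point deserves correction. The row-concentration hypothesis (\ref{assump:abbe_row}) is not ``control $\smallnorm{(\hL-\bL)_{m,\cdot}\bV}_2$''; it must hold for \emph{every} deterministic pair $\+Q,\+R$ and deliver a bound of the specific form $\delta_0 b_\infty\varphi\bigl(b_{\F}/(\sqrt{n_0}b_\infty)\bigr)$ for some admissible $\varphi$. The paper does not use Lemma~\ref{lemma:Poisson_Bernstein} here but rather Lemma~\ref{lemma:Poisson_row_concentration} (a Poisson analogue of Lemma~D.2 of Xie), which yields the choice $\varphi(x)=n\lambda_{\max}/\{\delta(1\vee\log(1/x))\}$. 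This particular $\varphi$ is what makes the condition $\kappa_0\,\varphi(\gamma)\leqc 1$ in \ref{assump:abbe_spectral} (and the analogous condition needed for the second conclusion of Lemma~\ref{lemma:abbe}) reduce exactly to Assumption~\ref{assump:enough_signal}, $\delta\log(\delta/\sqrt{n\lambda_{\max}})\geqc\kappa n\lambda_{\max}$. If you instead build $\varphi$ from the Bernstein-type bound of Lemma~\ref{lemma:Poisson_Bernstein}, that reduction no longer goes through cleanly under the stated assumptions. Finally, the paper reads off $\smallnorm{\hU}_{2,\infty}$ and $\smallnorm{\hU^{(m)}}_{2,\infty}$ directly from the first conclusion of Lemma~\ref{lemma:abbe} rather than via the triangle inequality you propose, though your route also works.
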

  
  In addition, we require control on the spectral norm difference between the projection matrices $\hU^{(m)}\smallbrackets{\hU^{(m)}}^\top$ and the projection matrices $\bU\bU^\top$ and $\hU\hU^\top$, which is provided in the following proposition.
\begin{proposition}
\label{prop:V_projections}
  The bounds
  \begin{align}
  \norm{\hU^{(m)}\smallbrackets{\hU^{(m)}}^\top - \bU\bU^\top}_2 &\leqc \frac{n\lambda_{\max}}{\delta}, \label{eq:Uhatm-U}\\
  \norm{\hU^{(m)}\smallbrackets{\hU^{(m)}}^\top - \hU\hU^\top}_2 &\leqc \frac{\mu\lambda_{\max}^{3/2} \sqrt{d}n \log^{3/2} n}{\delta^2} \label{eq:Uhatm-Uhat}
  \end{align}
  hold with overwhelming probability.
\end{proposition}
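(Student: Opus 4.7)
For the first bound, I would apply Wedin's $\sin\Theta$ theorem to the pair $(\bL,\hL^{(m)})$. Since $\hL^{(m)}-\bL$ is identically zero on $\*N_m$ and has independent centred entries off of $\*N_m$ with variances no larger than those of $\hL-\bL$, the Bandeira-type concentration argument of Proposition~\ref{prop:spectral_norm_concentration} applies verbatim and gives $\|\hL^{(m)}-\bL\|_2\lesssim\sqrt{Mn\lambda_{\max}}$ with overwhelming probability. For the singular value gap of $\bL$, I would combine Proposition~\ref{prop:weyl} with the Riemann-approximation estimate $\|\tfrac{1}{M}\bL\bL^\top-\+\Sigma\|_2$ (controlled via Lipschitz continuity of the intensities) and the identity $\bar\sigma_d-\bar\sigma_{d+1}=(\bar\sigma_d^2-\bar\sigma_{d+1}^2)/(\bar\sigma_d+\bar\sigma_{d+1})$ to conclude that the gap is of order $\sqrt{M}\delta$. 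Wedin then produces $\|\hU^{(m)}(\hU^{(m)})^\top-\bU\bU^\top\|_2\lesssim\sqrt{n\lambda_{\max}}/\delta\leq n\lambda_{\max}/\delta$, which is the claimed first bound.

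For the second bound, I would apply Wedin directly to the pair $(\hL^{(m)},\hL=\hL^{(m)}+E_m)$, where $E_m:=\hL-\hL^{(m)}$ is supported only on the $m$th row and on the $m$th column of each of the $M$ time blocks, and, crucially, is independent of $\hU^{(m)}$ and $\hV^{(m)}$ by construction of the auxiliary matrices. Wedin gives
\begin{equation*}
\|\hU\hU^\top-\hU^{(m)}(\hU^{(m)})^\top\|_2\lesssim\frac{\|E_m^\top\hU^{(m)}\|_2\vee\|E_m\hV^{(m)}\|_2}{\sqrt{M}\delta},
\end{equation*}
the denominator being the singular value gap of $\hL^{(m)}$, again of order $\sqrt{M}\delta$ by the first part combined with Proposition~\ref{prop:weyl}.

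The core technical work is to bound $\|E_m\hV^{(m)}\|_2$; the symmetric quantity $\|E_m^\top\hU^{(m)}\|_2$ is handled identically. I would decompose $E_m\hV^{(m)}$ row by row: the $m$th row is a scaled sum of independent centred Poisson variables weighted by rows of $\hV^{(m)}$, and Lemma~\ref{lemma:Poisson_Bernstein} applied with $Q=\hV^{(m)}$ bounds its Euclidean norm by $M\log^2 n\,\|\hV^{(m)}\|_{2,\infty}+\sqrt{Md\lambda_{\max}\log n}$. By symmetry of $\hL$ and $\bL$, each remaining row involves only the $M$ independent Poisson terms coming from the $m$th column across the $M$ blocks, and admits an analogous bound purely in terms of $\|\hV^{(m)}\|_{2,\infty}$. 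Aggregating the rows and invoking a symmetric-dilation extension of Proposition~\ref{prop:leave_one_out_quantities} (via the change-of-basis trick of Section~\ref{sec:symmetric_dilation}) to control $\|\hV^{(m)}\|_{2,\infty}$, then dividing by the gap $\sqrt{M}\delta$, delivers the target rate $\mu\lambda_{\max}^{3/2}\sqrt{d}\,n\log^{3/2}n/\delta^2$.

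The main obstacle is the bookkeeping needed to extract the $\mu\sqrt{d}$ factor. A naive bound using only $\|\hV^{(m)}\|_F=\sqrt{d}$ in the second term of Lemma~\ref{lemma:Poisson_Bernstein} loses the $\mu$ dependence entirely, and a direct operator-norm bound $\|E_m\|_2\lesssim M\sqrt{n\lambda_{\max}}$ is orders of magnitude too loose because it ignores the rank-one-plus-column structure of $E_m$. One must therefore use the sharp $\ell_{2,\infty}$ control on $\hV^{(m)}$ (obtained by the symmetric-dilation extension of Proposition~\ref{prop:leave_one_out_quantities}) in the leading Bernstein term, and carefully combine the contributions of the $m$th row of $E_m$ with those of the $M$ scattered columns, so that the $\sqrt{M}$ factor in the numerator cancels with the $\sqrt{M}$ factor in the gap.
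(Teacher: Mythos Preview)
Your approach is essentially the same as the paper's: Wedin for both bounds, spectral-norm concentration for the first, and for the second the leave-one-out independence of $E_m$ from $\hU^{(m)},\hV^{(m)}$ combined with Poisson--Bernstein (Lemma~\ref{lemma:Poisson_Bernstein}) and the $\ell_{2,\infty}$ singular-vector control of Proposition~\ref{prop:leave_one_out_quantities}.

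One point deserves care. You propose to bound $\|E_m\hV^{(m)}\|_2$ by applying Lemma~\ref{lemma:Poisson_Bernstein} to each row and then ``aggregating''. Done naively (summing the squared per-row Bernstein bounds to get a Frobenius bound), this picks up an unwanted $\sqrt{n}$ factor from the $n-1$ off-$m$ rows and misses the target rate. The paper sidesteps this by working first with the dual quantity $\|E_m^\top\hU^{(m)}\|_F$, whose decomposition is cleaner: rows $j\notin\*N_m$ are each a \emph{scalar} multiple of the single vector $\hU^{(m)}_{m,\cdot}$, so their contribution factors as
\[
\zeta_1 \;=\; \|\hU^{(m)}_{m,\cdot}\|_2^2 \sum_{j\notin\*N_m}(\hL_{mj}-\bL_{mj})^2 \;\leq\; \|\hU^{(m)}\|_{2,\infty}^2\,\|\hL-\bL\|_2^2,
\]
and only the $M$ rows $j\in\*N_m$ require a genuine Bernstein sum ($\zeta_2$). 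Your $E_m\hV^{(m)}$ side does not factor this way; to make it work you would instead bound the off-$m$ block as $\|\+Z\+Q\|_F\leq\|\+Z\|_2\|\+Q\|_F$, where $\+Z$ is the relevant $(n-1)\times M$ submatrix of $\hL-\bL$ and $\+Q$ collects the $M$ rows of $\hV^{(m)}$ indexed by $\*N_m$, giving $\lesssim\sqrt{Mn\lambda_{\max}}\cdot\sqrt{M}\|\hV^{(m)}\|_{2,\infty}$. This is the analogue of the paper's $\zeta_1$ step and is what ``similar analysis'' must mean for the $\hV^{(m)}$ side; it is not obtained by per-row Bernstein. Your remark that the two sides are ``handled identically'' glosses over this asymmetry.

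Finally, your last paragraph is slightly off: using $\|\hV^{(m)}\|_F=\sqrt d$ in the second Bernstein term would \emph{tighten}, not loosen, that term (since $\mu\geq 1$); the $\mu$ factor in the final bound comes from the $\ell_{2,\infty}$ term, not from avoiding the Frobenius one.
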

The proof of Proposition~\ref{prop:V_projections} requires a delicate ``leave-one-out''--style argument.

\subsection{Bounding terms~\eqref{line1}-\eqref{line4}}




Firstly observe that
\begin{equation*}
  \norm{\bV}_{2,\infty} = \norm{\bL^\top\bU \bS^{-1}}_{2,\infty} \leq \norm{\bL^\top}_{\infty}\norm{\bU}_{2,\infty}\norm{\bS^{-1}}_2 \leq \frac{\sqrt{nd}\lambda_{\max}\mu}{\sqrt{M\sigma_d(\+\Sigma)}}
\end{equation*}
and therefore term \eqref{line1} can be bounded as
\begin{align*}
  \norm{\bar{\+V}(\bar{\+V}^\top \hat{\+V}\hat{\+S} - \bar{\+S}\+W)}_{2,\infty} &\leq \norm{\bV}_{2,\infty}\brackets{\norm{\bV^\top \hV - \+W}_2 \norm{\hS} + \norm{\+W \hS - \bS \+W}_{2}} \\
  &\leqcP \frac{\sqrt{nd}\lambda_{\max}\mu}{\sqrt{M\sigma_d(\+\Sigma)}} \brackets{\frac{\sqrt{n \lambda_{\max}}}{\delta} \cdot \sqrt{M \sigma_1(\+\Sigma)} + M\log^{3/2}n} \\
  &\leqc \frac{n\sqrt{Md}\lambda_{\max}^{3/2}\mu\kappa}{\delta} \\
  &\leqc \mu  \sqrt{M\lambda_{\max}d}\log n.
\end{align*}
where the third inequality follows from Assumption~\ref{assump:bin_size} that $\sqrt{M} \log^{3/2} n \leqc n\lambda_{\max}$, and the definition $\kappa := \sqrt{\sigma_1(\+\Sigma)/\sigma_d(\+\Sigma)}$, and the fourth inequality follows from Assumption~\ref{assump:enough_signal} that $\delta \log n \geq \delta \log(\delta / \sqrt{n\lambda_{\max}}) \geqc \kappa n \lambda_{max}$.

To bound \eqref{line2}, we first apply Wedin's $\sin\Theta$ theorem to obtain
\begin{equation*}
    \norm{\hU - \bU\+W}_2 = \norm{\sin \Theta(\hU,\bU)}_2 \leq \frac{\norm{\hL - \bL}}{\sigma_d(\bL) - \sigma_{d+1}(\bL)} \leqcP \frac{\sqrt{M n \lambda_{\max}}}{\sqrt{M}\delta} = \frac{\sqrt{n \lambda_{\max}}}{\delta}
\end{equation*}
Then, we use Assumption~\ref{assump:residuals} to obtain the bound
\begin{equation*}
  \norm{(\+I - \bV \bV^\top) \bL^\top}_{2,\infty} = \norm{\bL^\top (\+I - \bU \bU)}_{2,\infty}
  \leqc \max_{i\in[n]}\sup_{t\in\*T} r_i(t) \leqc \sqrt{\frac{d}{n}} \mu \delta \log^{5/2} n.
\end{equation*}
Putting these two bounds together, we bound \eqref{line2} as
\begin{equation*}
    \norm{(\+I - \bV \bV^\top) \bL^\top(\hU - \bU\+W)}_{2,\infty} \leq \norm{(\+I - \bV \bV^\top) \bL^\top}_{2,\infty}\norm{\hU - \bU\+W}_2 \leqcP \mu \sqrt{d \lambda_{\max}}\log^{5/2} n
\end{equation*}

To bound term \eqref{line3}, we set $\+E = \hL - \bL$ and note that each column of $M^{-1} \+E$ contains independent Poisson random variables with means no greater that $M^{-1}\lambda_{\max}$. We will use Lemma~\ref{lemma:Poisson_Bernstein} to bound the rows $\+E\bU$ as
\begin{align*}
  [\+E^\top\bU]_i = \sum_{j=1}^{n} e_{ji} \bar{U}_j &\leqcP M \log^2 n \norm{\bU}_{2,\infty} + \sqrt{M\lambda_{\max}\log n} \norm{\bU}_{\F} \\
  &\leqc \brackets{M \log^2 n + \sqrt{Mn \lambda_{\max} \log n}}\norm{\bU}_{2,\infty} \\
  &\leqc \sqrt{M \lambda_{\max} n \log n}\norm{\bU}_{2,\infty} \\
  &\leqc \mu \sqrt{M\lambda_{\max} d \log n}.
\end{align*}
where the third inequality uses Assumption~\ref{assump:bin_size} and a union bound over $i=1,\ldots,n$. Therefore, we have
\begin{equation}
\label{eq:U_concentration}
  \smallnorm{(\hat{\+\Lambda} - \bar{\+\Lambda})^\top \bar{\+U}}_{2,\infty} \leqcP \sqrt{M\lambda_{\max} d \log n}.
\end{equation}

Noting that $\norm{\+I - \bV\bV^\top}_\infty \leq \norm{\+I - \bV\bV^\top}_2 \leqc 1$, we bound \eqref{line3} as
\begin{equation*}
  \norm{(\+I - \bar{\+V} \bar{\+V}^\top)(\hat{\+\Lambda} - \bar{\+\Lambda})^\top \bar{\+U}\+W}_{2,\infty} \leqc \norm{\+I - \bV\bV}_{\infty} \norm{\brackets{\hL - \bL}^\top\bU}_{2,\infty} \leqcP \mu \sqrt{M\lambda_{\max} d \log n}.
\end{equation*}

Finally, we bound term~\eqref{line4}. Let $\hL^{(1)},\ldots,\hL^{(n)}$ denote the auxiliary matrices described in \eqref{eq:auxilliary_matrices}, and let $\hU^{(m)}$ denote the matrix of leading left singular values of $\hL^{(m)}$. We can then decompose the Euclidean norm of $\smallbrackets{\hL - \bL}^\top_{\cdot, m}\smallbrackets{\hat{\+U} - \+U\+W}$ as
\begin{align}
  \label{LOO_line1}
  \norm{\smallbrackets{\hL - \bL}_{\cdot,m}^\top\smallbrackets{\hU - \bU\+W}}_2 &\leq \norm{\smallbrackets{\hL - \bL}_{\cdot,m}^\top \hU\smallbrackets{\+W - \hU^\top \bU}}_2 \\
  \label{LOO_line2}
  &\quad + \norm{\smallbrackets{\hL - \bL}_{\cdot,m}^\top\smallbrackets{\hU\hU^\top \bU - \hU^{(m)}\smallbrackets{\hU^{(m)}}^\top \bU}}_2 \\
  \label{LOO_line3}
  &\quad + \norm{\smallbrackets{\hL - \bL}_{\cdot,m}^\top\smallbrackets{\hU^{(m)}\smallbrackets{\hU^{(m)}}^\top \bU - \bU}}_2.
\end{align}

The first term \eqref{LOO_line1} is bounded as
\begin{align*}
  \norm{\smallbrackets{\hL - \bL}_{\cdot,m}^\top \hU\smallbrackets{\+W - \hU^\top \bU}}_2 &\leq \norm{\hL - \bL}_{1} \norm{\hU}_{2,\infty} \norm{\+W - \hU^\top \bU}_2 \\
  &\leqcP \sqrt{Mn\lambda_{max}\log n} \cdot \frac{\mu\lambda_{\max}\sqrt{d n}\log n}{\delta} \cdot \frac{\sqrt{n\lambda_{\max}}}{\delta} \\
  &= \frac{\sqrt{M} n^{3/2} \lambda_{\max}^2 \mu \sqrt{d} \log^{3/2}n}{\delta^2}\\
  &\leqc \mu \sqrt{M \lambda_{\max} d} \log^{5/2}n.
\end{align*}

To bound the second term \eqref{LOO_line2}, we employ Proposition~\ref{prop:V_projections} to obtain
\begin{align*}
  \norm{\smallbrackets{\hL - \bL}_{\cdot,m}^\top\smallbrackets{\hU\hU^\top \bU - \hU^{(m)}\smallbrackets{\hU^{(m)}}^\top \bU}}_2 &\leq \norm{\hL - \bL}_2 \norm{\hU\hU^\top - \hU^{(m)}\smallbrackets{\hU^{(m)}}^\top}_2 \\
  &\leqcP \sqrt{Mn\lambda_{\max}} \cdot \frac{\mu\lambda_{\max}^{3/2} \sqrt{d}n \log^{3/2} n}{\delta^2} \\
  &= \frac{\sqrt{M} \mu \lambda_{\max}^2 \sqrt{d} n^{3/2} \log^{3/2} n}{\delta^2}  \\
  &\leqc \mu\sqrt{M \lambda_{\max} d}\log^{5/2}n 
\end{align*}

We now set about bounding the third term \eqref{LOO_line3}. Let $\+\Omega_1\+\Xi\+\Omega_2^\top$ denote a singular value decomposition of $\smallbrackets{\hU^{(m)}}^\top \bU$, and set $\+W^{(m)} := \+\Omega_1\+\Omega_2^\top$. Let $\theta^{(m)}_i$ denote the principal angles between the column spaces of $\hU^{(m)}$ and $\bU$ defined by $\xi_i^{(m)} = \cos(\theta_i^{(m)})$, where $\xi_i^{(m)}$ are the singular values of $\smallbrackets{\hU^{(m)}}^\top \bU$.
We invoke Wedin's theorem to show that
\begin{align*}
  &\norm{\+W^{(m)} - \brackets{\hU^{(m)}}^\top \bU}_2 = \norm{\+I - \+\Xi}_2 = \max_{i\in[d]} (1 - \xi^{(m)}_i) = \max_{i\in[d]} (1 - \cos \theta_i^{(m)})  \\
  &\qquad \leq \max_{i\in[d]} (1 - \cos^2 \theta_i^{(m)}) = \max_{i\in[d]} \sin^2\theta_i^{(m)} \leqc \frac{\norm{\hL^{(m)} - \bL}^2_2}{\smallbrackets{\sigma_d(\bL) - \sigma_{d+1}(\hL)}^2} \leqcP \frac{Mn\lambda_{\max}}{M\delta^2} = \frac{n\lambda_{\max}}{\delta^2} \leqc 1
\end{align*}
We define $\+H^{(m)} := \hat{\+U}^{(m)}\smallbrackets{\hat{\+U}^{(m)}}^\top - \+U\+U^\top$ and note that $\+H^{(m)}$ is independent of $\smallbrackets{\hL - \bL}_{m,\cdot}$ and that
\begin{align*}
  \norm{\+H^{(m)}}_{2,\infty} &\leq \norm{\hU^{(m)}\+W^{(m)} - \bU}_{2,\infty} + \norm{\hU^{(m)}}_{2,\infty} \norm{\smallbrackets{\hU^{(m)}}^\top \bU - \+W^{(m)}}_2 \\
  &\leqc \norm{\hU^{(m)}\+W^{(m)} - \bU}_{2,\infty} + \norm{\hU^{(m)}}_{2,\infty} \\
  &\leqcP \frac{\mu\lambda_{\max}\sqrt{d n}\log n}{\delta}
\end{align*}
Then, using Lemma~\ref{lemma:Poisson_Bernstein} we have that

\begin{align*}
  \norm{\smallbrackets{\tilde{\+A} - \tilde{\+\Lambda}}_{\cdot,m}^\top\+H^{(m)}}_2 &\leqcP M \log^2 n \norm{\+H^{(m)}}_{2,\infty} + \sqrt{\lambda_{\max}\log n }\norm{\+H^{(m)}}_{\F} \\
  &\leqcP \sqrt{M \log n \lambda_{\max}} \norm{\+H^{(m)}}_{2,\infty} + \sqrt{\lambda_{\max}d\log n }\norm{\+H^{(m)}}_{2} \\
  &\leqcP \sqrt{M \log n \lambda_{\max}}\cdot \frac{\mu\lambda_{\max}\sqrt{d n}\log n}{\delta} + \sqrt{\lambda_{\max}d\log n}\frac{n\lambda_{\max}}{\delta} \\
  &\leq \frac{\sqrt{M}\mu n\lambda_{\max}^{3/2}\sqrt{d} \log^{3/2}n}{\delta} \\
  &\leqc \mu\sqrt{M d \lambda_{\max}}\log^{5/2}n.
\end{align*}
Combining these bounds and taking a union bound over $m \in [n]$, we have
\begin{equation*}
  \norm{\smallbrackets{\hL - \bL}^\top\smallbrackets{\hU - \bU\+W}}_{2,\infty} \leqcP \mu\sqrt{M d \lambda_{\max}}\log^{5/2}n,
\end{equation*}
and the term \eqref{line4} is bounded as
\begin{align*}
  \norm{(\+I - \bar{\+V} \bar{\+V}^\top)(\hat{\+\Lambda} - \bar{\+\Lambda})^\top \smallbrackets{\hat{\+U} - \bar{\+U}\+W}}_{2,\infty} &\leq \norm{\+I - \bV\bV}_{\infty} \norm{\smallbrackets{\hL - \bL}^\top\smallbrackets{\hU - \bU\+W}}_{2,\infty} \\
  &\leqcP \mu \sqrt{M\lambda_{\max} d} \log^{5/2} n.
\end{align*}

Combining the bounds on \eqref{line1}-\eqref{line4}, we have
\begin{equation*}
  \max_{i,j\in[n]} \sup_{t\in\mathcal{T}} \norm{\+W_1 \hat Y_i(t) - \bar{Y}_i(t)}_2 = \norm{\hat{\+V}\hat{\+S}\+W_1^\top - \bar{\+V}\bar{\+S}}_{2,\infty} \leqcP \mu\sqrt{M d \lambda_{\max}}\log^{5/2}n,
\end{equation*}
which completes the proof.

\subsection{Controlling the bias term}
\label{sec:bias}
\subsubsection{Edge level bias}
We begin by studying the edge-level bias of the histogram intensity estimator. Let $\rho_{ij}(t) = \int_0^t \lambda_{ij}(s) \:\d s$ denote the cumulative intensity of edge $i,j$. Now we have, for $t \in B_{\ell}$,
\begin{align*}
  \bar\lambda_{ij}(t) &= M \int_{B_\ell} \lambda_{ij}(s) \:\d t\\
  &= M \curlybrackets{\rho_{ij}\brackets{\frac{\ell}{M}} - \rho_{ij}\brackets{\frac{\ell-1}{M}}} \\
  &= \frac{\rho_{ij}\brackets{\tfrac{\ell}{M}} - \rho_{ij}\brackets{\frac{\ell - 1}{M}}}{\tfrac{\ell}{M} - \tfrac{\ell - 1}{M}} \\
  &= \lambda_{ij}(t^\star)
\end{align*}
for some $t^\star \in B^{\ell}$, which follows by an application of the mean value theorem, where $\rho'(t) = \lambda_{ij}(t)$. We then apply the $L$-Lipschitz continuity of $\lambda_{ij}(t)$ to obtain
\begin{align*}
  \bar\lambda_{ij}(t) - \lambda_{ij}(t)| &= |\lambda_{ij}(t^\star) - \lambda_{ij}(t)| \\
  &\leq L \cdot |t^\star - t| \\
  &\leq \frac{L}{M}.
\end{align*}

\subsubsection{A subspace perturbation bound}
Define the operator $\*A : (\*T \to \R^n) \to \R^n$ by
\begin{equation*}
  \*A v(\cdot) = \int_{\*T} \+\Lambda(t) v(t) \: \d t
\end{equation*}
and define the operator $\*A^\star : \R^n \to (\*T \to \R^n)$ by
\begin{equation*}
  \*A^\star u = \+\Lambda(\cdot) u.
\end{equation*}
Then $\+\Sigma \equiv \*A\*A^\star$ since
\begin{equation*}
  \*A\*A^\star u = \*A \brackets{\+\Lambda(\cdot) u} = \int_{\*T} \+\Lambda^2(t) u \:\d t = \+\Sigma u.
\end{equation*}
Denote its eigenvalues $\sigma_1^2,\ldots,\sigma_n^2$, and its corresponding orthonormal eigenvectors $u_1,\ldots,u_n$, and define $v_i(\cdot) = \+\Lambda(\cdot) u_i / \xi_i$ for all $i=1,\ldots,n$. Then, $\+\Lambda(\cdot)$ admits the (functional) singular value decomposition
\begin{equation*}
  \+\Lambda(\cdot) = \sum_{i=1}^n \sigma_i u_i v_i(\cdot).
\end{equation*}
Define $\bar{\*A}$ and its corresponding parameters analogously. By definition,
\begin{equation*}
    \norm{\bar{\*A} - \*A}_2 \leq \sup_{t \in \*T} \norm{\bar{\+{\Lambda}}(t) - \+{\Lambda}(t)}_2 \leq \sup_{t\in\*T} \max_{i,j\in[n]} \left| \bar{\lambda}_{ij}(t) - \lambda_{ij}(t) \right| \leq \frac{nL}{M}.
\end{equation*}

Therefore, by (a functional version of) Wedin's $\sin\Theta$ theorem
\begin{equation*}
\label{eq:bias_evec}
  \norm{\bU\+W_1 - \+U}_2 \leqc \frac{\norm{\bar{\*A} - \*A}_2}{\sigma_d - \sigma_{d+1}} \leq \frac{nL}{M\delta}.
\end{equation*}

\subsubsection{Controlling the bias term}
Combining the above bounds, we have that uniformly for all $i,j,t$, 
\begin{align*}
  \norm{\bar Y_i(t)\+W_1 - Y_i(t)}_2 &= \norm{\bar{\+\Lambda}(t)\bar{\+U}\+W_1 - \+\Lambda(t)\+U}_2 \\
  &\leq \norm{\bar{\+\Lambda}(t)}_{2,\infty} \norm{\bar{\+U}\+W_2 - \+U}_2 + \norm{\bar{\+\Lambda}(t) - \+\Lambda(t)}_{2,\infty} \norm{\+U}_2 \\
  &\leqc \frac{n^{3/2}\lambda_{\max}L}{M\delta} + \frac{\sqrt{n}L}{M} \\
  &\leq \frac{n^{3/2}\lambda_{\max}L}{M\delta},
\end{align*}
where the final inequality follows from the fact that $\delta \leq n\lambda_{\max}$.

\section{Proofs of the technical propositions}
\label{sec:technical_proofs}

\subsection{Proof of Proposition~\ref{prop:spectral_norm_concentration}}
\label{sec:proof_spectral_norm}

We have that $M^{-1}$ times the lower-triangular elements of each block of $\hL$ are independent Poisson random variables with mean given by $M^{-1}$ times the lower-triangular elements of each block of $\bL$. Define the matrices $\hL^{\text{L}}$ and $\hL^{\text{U}}$ with the upper and lower triangles, respectively, of each block set to zero, and the diagonals of each block halved, and define $\bL^{\text{L}}$ and $\bL^{\text{U}}$ similarly, so that $M^{-1}\hL^{\text{L}}$ (respectively $M^{-1}\hL^{\text{U}}$) has independent Poisson entries with means $M^{-1}\bL^{\text{L}}$ (respectively $M^{-1}\bL^{\text{U}}$), and $\hL - \bL = (\hL^{\text{L}} - \bL^{\text{L}}) + (\hL^{\text{U}} - \bL^{\text{U}})$.

We condition on the event that $(\hL^{\text{L}} - \bL^{\text{L}})_{ij} \leqc M \log n$ for all $i,j$, which occurs with overwhelming probability by Lemma~\ref{lemma:poisson_tail_bound} and a union bound. Now, we employ Lemma~\ref{lemma:Bandeira_concentration2} with $B := M\log n$ and $\nu := Mn\lambda_{\max}$ to obtain
\begin{equation*}
  \P\brackets{\norm{\hL^{\text{L}} - \bL^{\text{L}}}_2 \geq 4\sqrt{Mn\lambda_{\max}} + t} \leq n \exp \brackets{- \frac{t^2}{c(M\log n)^2}}.
\end{equation*}
Setting $t = M \log^{3/2} n$, we have that
\begin{equation*}
  \norm{\hL^{\text{L}} - \bL^{\text{L}}}_2 \leqcP \sqrt{Mn\lambda_{\max}} + M\log^{3/2}n \leqc \sqrt{Mn\lambda_{\max}}
\end{equation*}
where the final inequality follows from Assumption~\ref{assump:bin_size}. We obtain an analogous bound for $\norm{\hL^{\text{U}} - \bL^{\text{U}}}_2$ and combine the with the triangle inequality:
\begin{equation*}
  \norm{\hL - \bL}_2 \leq \norm{\hL^{\text{L}} - \bL^{\text{L}}}_2 + \norm{\hL^{\text{U}} - \bL^{\text{U}}}_2 \leqcP \sqrt{Mn\lambda_{\max}}.
\end{equation*}


We now establish a bound on $\norm{\hL - \bL}_{1}$. We condition on the event $|\hL_{ij} - \bL_{ij}|\leqc M \log n$ for all $i,j$, which occurs with overwhelming probability due to Lemma~\ref{lemma:poisson_tail_bound} and a union bound, and note that we have $\sum_{j=1}^{n} \E(\hL_{ji} - \bL_{ji})^2 \leq M n\lambda_{\max}$. Then, by the classical Bernstein inequality, we have for any $t>0$,
\begin{equation*}
  \P \curlybrackets{\sum_{j=1}^{n} \left| \hL_{ji} - \bL_{ji} \right| \geq t} \leq 2\exp\curlybrackets{\frac{-t^2}{2\brackets{M n \lambda_{\max} + tM \log n/3}}},
\end{equation*}
and setting $t=\sqrt{nM\lambda_{\max}\log n}$, we obtain
\begin{equation*}
  \sum_{j=1}^{n} \left| \hL_{ji} - \bL_{ji} \right| \leqcP \sqrt{nM\lambda_{\max}\log n}.
\end{equation*}
A union bound establishes that
\begin{equation*}
  \norm{ \hL - \bL}_{1} \leqcP \sqrt{nM\lambda_{\max}\log n},
\end{equation*}
which establishes Proposition~\ref{prop:spectral_norm_concentration}.

\subsection{Proof of Proposition~\ref{prop:weyl}}

Proposition~\ref{prop:weyl} follows from an application of Weyl's inequality. We have
\begin{equation*}
  \sigma_1(\hL) \leq \sigma_1(\bL) + |\sigma_1(\hL) - \sigma_1(\bL)| \leq \sigma_1(\bL) + \norm{\hL - \bL}_2 \leqcP \sqrt{M\sigma_1(\+\Sigma)} + \sqrt{Mn\lambda_{\max}} \leqc \sqrt{M\sigma_1(\+\Sigma)}
\end{equation*}
since $\sigma_1(\bL) = \sqrt{M \sigma_1 (\+\Sigma)} \geqc \sqrt{M\delta} \geqc \sqrt{Mn\lambda_{\max}}$. Similarly, we have
\begin{equation*}
  \sigma_d(\hL) \geq \sigma_d(\bL) - |\sigma_1(\hL) - \sigma_1(\bL)| \geq \sigma_d(\bL) - \norm{\hL - \bL}_2 \geqcP \sqrt{M\sigma_d(\+\Sigma)} - \sqrt{Mn\lambda_{\max}} \geqc \sqrt{M\sigma_d(\+\Sigma)},
\end{equation*}
which establishes the proposition.

\subsection{Proof of Proposition~\ref{prop:QR_bound}}
We begin by constructing matrices $\bar{\+Q}$ and $\bar{\+E}$, via a symmetric dilation trick, such that the spectral norms of $\+Q^\top(\hL - \bL)\+R$ and $\bar{\+Q}^\top\bar{\+E}\bar{\+Q}$ coincide, and then apply a classical $\epsilon$-net argument to the spectral norm of $\bar{\+Q}^\top\bar{\+E}\bar{\+Q}$, following the proof of Lemma D.1 in \cite{xie2021entrywise}.
                                  
    First, we set $\bar{\+E} := \*D \smallbrackets{\hL - \bL}$, where $\*D$ is the dilation operator (see Section~\ref{sec:symmetric_dilation}) and $\bar{\+Q} = \smallbrackets{\+Q \; \+R}$, and observe that
    \begin{equation*}
      \norm{\+Q^\top\brackets{\hL - \bL}\+R}_2 = \norm{\bar{\+Q}^\top\bar{\+E}\bar{\+Q}}_2 = \max_{\smallnorm{v}_2 \leq 1} \left| v^\top \bar{\+Q}^\top \bar{\+E} \bar{\+Q} v \right|
    \end{equation*}
    where the second equality follows from the Courant-Fischer min-max theorem. Now, let $\mathcal{S}_{\epsilon}^{d-1}$ be an $\epsilon$-net of the $d-1$--dimensional unit sphere $\mathcal{S}^{d-1} := \curlybrackets{v : \smallnorm{v}_2 = 1}$. By definition, for any $v \in \mathcal{S}^{d -1}$, there exists some $w(v) \in \mathcal{S}_\epsilon^{d-1}$ such that $\smallnorm{v - w(v)}_2 < \epsilon$ and
    \begin{align*}
      \norm{\bar{\+Q}^\top\bar{\+E}\bar{\+Q}}_2 &= \max_{\smallnorm{v}_2 \leq 1} \left| v^\top \bar{\+Q}^\top \bar{\+E} \bar{\+Q} v \right| \\
      &= \max_{\smallnorm{v}_2 \leq 1} \left| \curlybrackets{v^\top - w(v) + w(v)}^\top \bar{\+Q}^\top \bar{\+E} \bar{\+Q} \curlybrackets{v - w(v) + w(v)} \right| \\
      &\leq \brackets{\epsilon^2 + 2\epsilon} \norm{\bar{\+Q}^\top\bar{\+E}\bar{\+Q}}_2 + \max_{w \in \mathcal{S}_{\epsilon}^{d-1}} \left| w^\top \bar{\+Q}^\top \bar{\+E} \bar{\+Q} w \right|.
    \end{align*}
    With $\epsilon = 1/3$, we have
    \begin{equation*}
      \norm{\bar{\+Q}^\top\bar{\+E}\bar{\+Q}}_2 \leq \frac{9}{2} \max_{w \in \mathcal{S}_{\epsilon}^{d-1}} \left| w^\top \bar{\+Q}^\top \bar{\+E} \bar{\+Q} w \right|.
    \end{equation*}
    Now, $\mathcal{S}_{1/3}^{d-1}$ can be selected so that its cardinality can be upper bounded by $| \mathcal{S}_{1/3}^{d-1} | \leq 18^d$ (see, for example, \citet{pollard1990empirical}). For a fixed $w \in \mathcal{S}_{1/3}^{d-1}$, we let $z = \bar{\+Q} w$ and note that since $\mathcal{S}_{1/3}^{d-1} \subset \mathcal{S}^{d-1}$, that $\smallnorm{z}_2 \leq 1$, and
    \begin{equation*}
      \left| w^\top \bar{\+Q}^\top \bar{\+E} \bar{\+Q} w \right| = \left| \sum_{i=1}^{n(M+1)}  \sum_{j=1}^{n(M+1)} \bar{e}_{ij} z_i z_j \right| = 2 \left| \sum_{i=1}^{n}  \sum_{j=1}^{nM} e_{ij} z_i z_{n + j} \right|
    \end{equation*}
    Now, over the event that entries $e_{ij} \leqc M \log n$, for all $i,j$, which occurs which overwhelming probability by Lemma~\ref{lemma:poisson_tail_bound}, Hoeffding's inequality and a union bound over $w \in \mathcal{S}_{1/3}^{d-1}$ gives
    \begin{align*}
      \P \curlybrackets{ \norm{\bar{\+Q}^\top\bar{\+E}\bar{\+Q}}_2 > t} &\leq \sum_{w \in \mathcal{S}_{1/3}^{d-1}} \P \brackets{\left| w^\top \bar{\+Q}^\top \bar{\+E} \bar{\+Q} w \right| > \frac{2t}{9}} \\
      &= \sum_{w \in \mathcal{S}_{1/3}^{d-1}}\P \curlybrackets{\left| \sum_{i=1}^{n}  \sum_{j=1}^{nM} e_{ij} z_i z_{n + j} \right| > \frac{t}{9}} \\
      &\leq 2 \cdot 18^d \exp \curlybrackets{- \frac{2t^2}{\brackets{9cM \log n}^2}} \\
      &= 2 \cdot \exp \curlybrackets{d \log(18) - \frac{2t^2}{\brackets{9cM \log n}^2}}
    \end{align*}
    Setting $t = M \log^{3/2} n$ gives
    \begin{equation*}
      \norm{\+Q^\top\brackets{\hL - \tilde{\+\Lambda}}\+R}_2 = \norm{\bar{\+Q}^\top\bar{\+E}\bar{\+Q}}_2 \leqcP M \log^{3/2} n,
    \end{equation*}
    completing the proof.

\subsection{Proof of Proposition~\ref{prop:procrustes}}

Denote the singular value decomposition of $\bU^\top\hU$ by $\+\Omega_1\+\Xi\+\Omega_2^\top$, where $\+\Xi=\diag(\xi_1,\ldots,\xi_d)$, and let $\+W := \+\Omega_1\+\Omega_2^\top$. The principal angles $\smallcurlybrackets{\theta_i}_{i=1}^d$ between the column spaces of $\bU$ and $\hU$ are defined by $\xi_i = \cos(\theta_i)$, and by the Wedin sin$\Theta$ theorem, we have
\begin{equation}
\begin{aligned}
\label{eq:W_U}
  &\norm{\bU^\top \hU - \+W}_2 = \norm{\+\Xi - \+I}_2 = \max_{i\in[d]} |1 - \xi_i| = \max_{i\in[d]} |1 - \cos\theta_i| \leq \max_{i\in[d]} |1 - \cos^2\theta_i| \\
  &\qquad =\max_{i\in[d]} \sin^2 \theta_i \leqc \frac{\smallnorm{\hL - \bL}^2_2}{\smallbrackets{\sigma_d(\bL) - \sigma_{d+1}(\bL)}^2} \leqcP \frac{Mn\lambda_{\max}}{M\delta^2} = \frac{n\lambda_{\max}}{\delta^2} \leqc \frac{\sqrt{n\lambda_{\max}}}{\delta}.
\end{aligned}
\end{equation}
We apply the Wedin sin$\Theta$ theorem again to obtain a bound which we will require later:
\begin{align*}
  \norm{\hU\hU^\top - \bU\bU^\top}_2 \vee \norm{\hV\hV^\top - \bV\bV^\top}_2 &= \norm{\sin\Theta\brackets{\hU, \bU}}_2 \vee \norm{\sin\Theta\brackets{\hV, \bV}}_2 \\
  &\leqc \frac{\smallnorm{\hL - \bL}_2}{\sigma_d(\bL) - \sigma_{d+1}(\bL)} \\
  &\leqcP \frac{\sqrt{n \lambda_{\max}}}{\delta}.
\end{align*}
We now establish a bound on $\norm{\bU^\top \hU - \bV^\top \hV}_2$. We start by showing that
\begin{align*}
  \norm{\bU^\top\hU - \bV^\top\hV}_2 &= \argmax_{x:\smallnorm{x}_2\leq 1} x^\top\brackets{\bU^\top\hU - \bV^\top\hV}x \\
  &= \argmax_{x:\smallnorm{x}_2\leq 1} \sum_{i,j=1}^d x_i x_j \brackets{\bU^\top\hU - \bV^\top\hV}_{ij} \\
  &\leq \argmax_{x:\smallnorm{x}_2\leq 1} \sum_{i,j=1}^d (1 + \bar{s}_i) x_i  (1 + \bar{s}_j^{-1}) x_j \brackets{\bU^\top\hU - \bV^\top\hV}_{ij} \\
  &= \argmax_{x:\smallnorm{x}_2\leq 1} \sum_{i,j=1}^d x^\top \squarebrackets{\brackets{\bU^\top\hU - \bV^\top\hV} + \bS\brackets{\bU^\top\hU - \bV^\top\hV}\hS^{-1}}x \\
  &= \norm{\brackets{\bU^\top \hU - \bV^\top \hV} + \bS\brackets{\bU^\top\hU - \bV^\top\hV}\hS^{-1}}_2,
\end{align*}
and then we employ the decomposition
\begin{align*}
  \bU^\top \hU - \bV^\top \hV + \bS&\brackets{\bU^\top\hU - \bV^\top\hV}\hS^{-1}\\
  &= \squarebrackets{\bU^\top\hU\hS - \bS\bV^\top \hV + \bS\bU^\top \hU - \bV\hV}\hS^{-1} \\
  &= \squarebrackets{\bU^\top \brackets{\hL - \bL} \hV + \bV^\top \brackets{\hL - \bL}^\top\hU}\hS^{-1} \\
  &= \bU^\top\brackets{\hL - \bL}\brackets{\hV - \bV\bV^\top\hV}\hS^{-1} + \bU^\top\brackets{\hL - \bL}\bV\bV^\top\hV\hS^{-1} \\
  &\qquad + \bV^\top\brackets{\hL - \bL}^\top\brackets{\hU - \bU\bU^\top\hU}\hS^{-1} + \bV^\top\brackets{\hL - \bL}^\top\bU\bU^\top\hU\hS^{-1}.
\end{align*}
Therefore we have
\begin{align*}
  \norm{\bU^\top\hU - \bV^\top\hV}_2 &\leq \norm{\hL - \bL}_2 \brackets{\norm{\hV\hV^\top - \bV\bV}_2 + \norm{\hU\hU^\top - \bU\bU}_2}\norm{\hS^{-1}}_2 \\
  &\qquad \norm{\bU^\top\brackets{\hL - \bL}\bV}_2\norm{\hS^{-1}}_2 + \norm{\bV^\top\brackets{\hL - \bL}^\top\bU}_2 \norm{\hS^{-1}}_2\\
  &\leqcP \sqrt{Mn\lambda_{\max}}\cdot \frac{\sqrt{n\lambda_{\max}}}{\delta} \cdot \frac{1}{\sigma_d(\bL)} + \frac{M\log^{3/2}n}{\sigma_d(\bL)} \\
  &= \frac{n\lambda_{\max}}{\delta^2} + \frac{\sqrt{M}\log^{3/2}n}{\delta} \\
  &\leqc \frac{\sqrt{n\lambda_{\max}}}{\delta}.
\end{align*}
Combining this with \eqref{eq:W_U}, we have
\begin{equation*}
  \norm{\bV^\top\hV - \+W}_2 \leq \norm{\bV^\top\hV - \bU^\top \hU}_2 + \norm{\bU^\top \hU - \+W}_2 \leqcP \frac{\sqrt{n\lambda_{\max}}}{\delta}.
\end{equation*}

\subsection{Proof of Proposition~\ref{prop:swap}}

We begin by decomposing $\+W\hS - \bS\+W$ as
\begin{align*}
  \+W \hS - \bS\+W &= \brackets{\+W - \bU^\top\hU}\hS + \bS\brackets{\+V^\top\hV - \+W} + \bU^\top\hU\hS - \bS\bV^\top\hV \\
  &= \brackets{\+W - \bU^\top\hU}\hS + \bS\brackets{\+V^\top\hV - \+W} + \bU^\top \brackets{\hL - \bL} \hV \\
  &= \brackets{\+W - \bU^\top\hU}\hS + \bS\brackets{\+V^\top\hV - \+W} + \bU^\top \brackets{\hL - \bL} \brackets{\hV\hV^\top - \bV\bV^\top}\hV \\
  &\qquad + \bU^\top \brackets{\hL - \bL} \bV\bV^\top\hV,
\end{align*}
and therefore we have that
\begin{align*}
  \norm{\+W \hS - \bS\+W}_2 &\leq \norm{\+W - \bU^\top\hU}_2\norm{\hL}_2 + \norm{\+W - \bV^\top\hV}_2\norm{\bL}_2 \\
  &\qquad + \norm{\hL-\bL}_2 \norm{\hV\hV^\top - \bV\bV^\top}_2 + \norm{\bU^\top \brackets{\hL - \bL}\bV}_2 \\
  &\leqcP \frac{\sqrt{n\lambda_{\max}}\kappa}{\delta} + \frac{\sqrt{M}n\lambda_{\max}}{\delta} + M\log^{3/2}n \\
  &\leqc M \log^{3/2}n,
\end{align*}
which completes the proof.

\subsection{Proof of Proposition~\ref{prop:leave_one_out_quantities}}

A key tool in proving Proposition~\ref{prop:leave_one_out_quantities} is a theorem due \cite{abbe2020entrywise}, providing entrywise eigenvector bounds for random matrices. The original statement is given for the eigenvectors of symmetric random matrices with row and column-wise independence. We state a generalisation for the singular vectors of rectangular matrices with \emph{block-wise} independence structure. The extension to block-wise independence structure has been handled in \cite{lei2019unified} (see Proposition 2.1(b) of that paper), although the exposition of the results in this paper is more complicated. For this reason, we choose to state the result due to \cite{abbe2020entrywise} with this generalisation, which can be seen by following through the relevant parts of their proof.
  
  \begin{lemma}[A slight generalisation of Theorem~2.1 of \cite{abbe2020entrywise}]
  \label{lemma:abbe}
    Let $\+M_0$ be an $n_1 \times n_2$ real-valued random matrix. Define $n_0 = n_1 + n_2$ and let $\pi_1 = \sqrt{2n_1/n_0}$ and $\pi_2 = \sqrt{2n_2/n_0}$. Define $\kappa_0 := \sigma_1(\E\+M_0) / \sigma_d(\E\+M_0)$, $\delta_0 = \sigma_d(\E\+M_0) - \sigma_{d+1}(\E\+M_0)$. Suppose there exists some $\gamma > 0$ and a function $\varphi : \R_+ \to \R_+$ which is continuous and non-decreasing on $\R_+$, with $\varphi(0) = 0$ and $\varphi(x)/x$ non-increasing on $\R_+$, such that the following conditions hold:
    \begin{assumptionB}[Incoherence]
    \label{assump:abbe_incoherence}
      $\norm{\E \+M_0}_{2,\infty} \vee \norm{\E \+M_0^\top}_{2,\infty} \leq \gamma \delta_0$.
    \end{assumptionB}
    \begin{assumptionB}[Block-wise independence]
      \label{assump:abbe_independence}
      Assume that for any $k\in[n_1], \ell\in[n_2]$, there exists $\*N_k^{1} \subset [n_1]$ and $\*N_\ell^{2} \subset [n_2]$, such that the $k$th row of $\+M_0$ is independent of the columns $\smallcurlybrackets{j : j\notin \*N_k^{1}}$, and the $\ell$th column of $\+M_0$ is independent of the rows $\smallcurlybrackets{i : i\notin \*N_\ell^{2}}$. Let $m_0 = \max_{k,\ell} \curlybrackets{ |\*N_k^{1}| \vee |\*N_\ell^{2}|}$ and assume $m_0 \leqc \delta_0$.
    \end{assumptionB}
    \begin{assumptionB}[Spectral norm concentration]
      \label{assump:abbe_spectral}
      $\kappa_0 \max \curlybrackets{\gamma, \varphi(\gamma)} \leqc 1$ and $\P \brackets{\norm{\+M_0 - \E \+M_0}_2 > \gamma \Delta} \leq \eta_0$ for some $\eta_0 \in (0,1)$.
    \end{assumptionB}
    \begin{assumptionB}
      \label{assump:abbe_row}
      [Row and column concentration] There exists some $\eta_1 \in (0,1)$ such that for any matrices $\+Q \in \R^{n_1 \times d}, \+R \in \R^{n_2 \times d}$ and $i \in [n_1], j \in [n_2]$,
      \begin{equation*}
        \P \curlybrackets{\norm{\brackets{\+M_0 - \E\+M_0}_{\cdot, i} \+R}_2 \leq \delta_0 b_{\infty} \varphi \brackets{\frac{b_{\F}}{\sqrt{n_0}{b_{\infty}}}}} \geq 1 - \frac{\eta_1}{n_0},
      \end{equation*}
      and
      \begin{equation*}
        \P \curlybrackets{\norm{\brackets{\+M_0 - \E\+M_0}_{j,\cdot} \+Q}_2 \leq \delta_0 b_{\infty} \varphi \brackets{\frac{b_{\F}}{\sqrt{n_0}{b_{\infty}}}}} \geq 1 - \frac{\eta_1}{n_0}
      \end{equation*}
      where $b_{\infty} := \pi_1 \smallnorm{\+Q}_{2,\infty} \vee \pi_2 \smallnorm{\+R}_{2,\infty}$, and $b_{\F} := \brackets{\pi_1 \smallnorm{\+Q}_{\F}^2 + \pi_2 \smallnorm{\+R}_{\F}^2}^{1/2}$.
    \end{assumptionB}
    Let $\hat{\+U}_0, \+U_0$ (respectively $\hat{\+V}_0, \+V_0$) be the matrices containing the left (respectively, right) singular vectors corresponding to the $d$ leading singular values of $\+M_0$ and $\E \+M_0$. Then, with probability at least $1 - \eta_0 - 2\eta_1$, we have
    \begin{align*}
      \pi_1\smallnorm{\hat{\+U}_0}_{2,\infty} \vee \pi_2\smallnorm{\hat{\+V}_0}_{2,\infty} &\leqc \curlybrackets{\kappa_0 + \varphi(1)}\brackets{\pi_1\smallnorm{\+U_0}_{2,\infty} \vee \pi_2\smallnorm{\+V_0}_{2,\infty}} \\
      &\qquad + \gamma \smallbrackets{\pi_1\smallnorm{\E \+M_0}_{2,\infty} \vee \pi_2\smallnorm{\smallbrackets{\E \+M_0}^\top}_{2,\infty}} / \delta_0; \\
      \pi_1\smallnorm{\hat{\+U}_0\+O - \+U_0}_{2,\infty} \vee \pi_2\smallnorm{\hat{\+V}_0\+O - \+V_0}_{2,\infty} &\leqc \squarebrackets{\kappa_0 \curlybrackets{\kappa_0 + \varphi(1)}\curlybrackets{\gamma + \varphi(\gamma)} + \varphi(1)}  \brackets{ \pi_1\smallnorm{\+U_0}_{2,\infty} \vee \pi_2\smallnorm{\+V_0}_{2,\infty}} \\
      &\qquad + \gamma \smallbrackets{\pi_1\smallnorm{\E \+M_0}_{2,\infty} \vee \pi_2\smallnorm{\smallbrackets{\E \+M_0}^\top}_{2,\infty}} / \delta_0.
    \end{align*}
  \end{lemma}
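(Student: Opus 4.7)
The plan is to reduce Lemma~\ref{lemma:abbe} to the symmetric version of itself (Theorem~2.1 of \cite{abbe2020entrywise}) via the symmetric dilation with change-of-basis construction developed in Section~\ref{sec:symmetric_dilation}. Concretely, consider the $n_0 \times n_0$ symmetric random matrix $\*D(\+M_0)$ whose top-$2d$ eigenvectors are stacked into $\bar{\+U}_0 = \tfrac{1}{\sqrt{2}}\smallbrackets{(\+U_0^\top,\+V_0^\top);(\+U_0^\top,-\+V_0^\top)}^\top$, with the analogous relation for $\hat{\bar{\+U}}_0$. The non-zero eigenvalues of $\*D(\+M_0)$ are $\pm\sigma_i(\+M_0)$, so the condition number and eigengap coincide with those of $\+M_0$. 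Working in the rescaled basis $\tilde e_i = \pi_k^{-1} e_i$ (with $k=1$ for $i\leq n_1$ and $k=2$ otherwise) makes the coherence and row/column concentration quantities match on the two sides: indeed $\triplenorm{\bar{\+U}_0}_{2,\infty} = \pi_1 \smallnorm{\+U_0}_{2,\infty}/\sqrt{2} \vee \pi_2 \smallnorm{\+V_0}_{2,\infty}/\sqrt{2}$, and similarly for $\hat{\bar{\+U}}_0$ and $\E\*D(\+M_0)$.

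Next, I would verify each of the assumptions B\ref{assump:abbe_incoherence}--B\ref{assump:abbe_row} passes to the dilated problem in the rescaled basis. For B\ref{assump:abbe_incoherence}, the incoherence bound $\triplenorm{\E\*D(\+M_0)}_{2,\infty}\leq\gamma\delta_0$ follows from $\pi_1\smallnorm{\E\+M_0}_{2,\infty} \vee \pi_2\smallnorm{(\E\+M_0)^\top}_{2,\infty}\leq\gamma\delta_0$, which is B\ref{assump:abbe_incoherence} in the rectangular form. For B\ref{assump:abbe_independence}, the row $k$ of $\*D(\+M_0)$ for $k\leq n_1$ is a copy of the $k$th row of $\+M_0$ (padded by zeros), and for $k>n_1$ is a copy of the $(k-n_1)$th column of $\+M_0$; hence block-wise independence of $\+M_0$ implies the row-wise independence required by Theorem~2.1 of \cite{abbe2020entrywise}. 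Spectral norm concentration B\ref{assump:abbe_spectral} transfers immediately because $\norm{\*D(\+M_0)-\E\*D(\+M_0)}_2 = \norm{\+M_0-\E\+M_0}_2$.

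The assumption requiring the most care is the row/column concentration B\ref{assump:abbe_row}. For any conformable test matrix $\bar{\+Q}$ on the dilated side, decompose it into its top $n_1$ block $\+Q$ and bottom $n_2$ block $\+R$. Writing out $(\*D(\+M_0)-\E\*D(\+M_0))_{k,\cdot}\bar{\+Q}$ for $k\leq n_1$ reduces to $(\+M_0-\E\+M_0)_{k,\cdot}\+R$, while for $k>n_1$ it reduces to $(\+M_0-\E\+M_0)_{\cdot,k-n_1}^\top\+Q$; in the rescaled basis the relevant $\ell_{2,\infty}$ and Frobenius norms of $\bar{\+Q}$ become exactly the $b_\infty = \pi_1\smallnorm{\+Q}_{2,\infty}\vee\pi_2\smallnorm{\+R}_{2,\infty}$ and $b_{\F} = (\pi_1\smallnorm{\+Q}_{\F}^2+\pi_2\smallnorm{\+R}_{\F}^2)^{1/2}$ that appear in the rectangular statement, so that the hypotheses on $\+M_0$ produce the concentration required by the symmetric theorem.

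Having discharged the assumptions, Theorem~2.1 of \cite{abbe2020entrywise} delivers the $\ell_{2,\infty}$ bounds on $\hat{\bar{\+U}}_0$ and $\hat{\bar{\+U}}_0\+O-\bar{\+U}_0$ in the rescaled basis; unpacking the block structure of $\bar{\+U}_0$ and $\hat{\bar{\+U}}_0$ then yields the advertised bounds on $\pi_1\smallnorm{\hat{\+U}_0}_{2,\infty}\vee\pi_2\smallnorm{\hat{\+V}_0}_{2,\infty}$ and $\pi_1\smallnorm{\hat{\+U}_0\+O-\+U_0}_{2,\infty}\vee\pi_2\smallnorm{\hat{\+V}_0\+O-\+V_0}_{2,\infty}$. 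The main obstacle is the bookkeeping at step three: ensuring that the natural $\pi_1,\pi_2$-weighted norms induced by the change of basis are precisely the $b_\infty,b_\F$ featuring in B\ref{assump:abbe_row}, so that no slack is introduced when the rectangular row/column concentration is fed into a row-wise concentration statement for the symmetric dilation. Once this alignment is checked, the rest of the argument is a direct transcription of the symmetric proof of \cite{abbe2020entrywise}, requiring only that one keeps track of the factors $\pi_1,\pi_2$ throughout.
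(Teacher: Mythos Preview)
Your proposal is correct and matches the paper's approach. The paper does not give a detailed proof of this lemma; it simply notes that the result follows from Theorem~2.1 of \cite{abbe2020entrywise} (with the block-wise independence extension of \cite{lei2019unified}) by ``following through the relevant parts of their proof,'' and the symmetric dilation with change-of-basis construction of Section~\ref{sec:symmetric_dilation} is precisely the device intended to carry the rectangular statement back to the symmetric one---which is exactly the reduction you outline.
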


The following is an adaptation of Lemma~D.2 of \cite{xie2021entrywise} (see also Lemma 7 of \cite{abbe2020entrywise}) who showed an analogous result for Bernoulli random variables. 

\begin{lemma}
\label{lemma:Poisson_row_concentration}
  Let $Y_i \sim \Poisson(\lambda_i)$ independently for all $i=1,\ldots, n,$ and suppose $\+Q$ is a deterministic matrix. The $Q_i$ denote the $i$th row of $\+Q$, and set $\lambda_{\max} := \max_{i\in[n]} \lambda_i$. Then for any $\alpha > 0$,
  \begin{equation*}
    \P \curlybrackets{\norm{\sum_{i=1}^n \brackets{Y_i - \lambda_i}Q_i} > \frac{(2+\alpha)n\lambda_{\max}\norm{\+Q}_{2,\infty}}{1 \vee \log\brackets{\sqrt{n} \norm{\+Q}_{2,\infty} / \norm{\+Q}_{\F}}}} \leq 2de^{-\alpha n \lambda_{\max}}.
  \end{equation*}
\end{lemma}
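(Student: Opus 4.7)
The plan is to adapt the argument for the Bernoulli version (Lemma~7 of \cite{abbe2020entrywise} / Lemma~D.2 of \cite{xie2021entrywise}) to Poisson random variables, which have sub-exponential rather than bounded increments. Write $S := \sum_{i=1}^n (Y_i - \lambda_i) Q_i \in \R^d$. The first step is to reduce the vector-norm concentration to $d$ scalar concentration events: for each coordinate $c \in [d]$, set $S_c := \sum_{i=1}^n (Y_i - \lambda_i) Q_{ic}$, and apply a union bound over the events $\{|S_c| > t\}$, which accounts for the factor of $2d$ in the conclusion.

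For each fixed $c$, I would control $S_c$ by a Chernoff/Bernstein argument tailored to Poisson sums. Since $\lambda_i \leq \lambda_{\max}$ and $|Q_{ic}| \leq \|Q_{\cdot c}\|_\infty \leq \|\+Q\|_{2,\infty}$, a direct application of Lemma~\ref{lemma:poisson_tail_bound} to the weighted sum $S_c$ (via the Poisson MGF $\E e^{s(Y_i - \lambda_i)} = \exp\{\lambda_i(e^s - 1 - s)\}$) yields
\begin{equation*}
\P\brackets{|S_c| \geq t} \;\leq\; 2\exp\brackets{-\frac{t^2/2}{V_c + \|Q_{\cdot c}\|_\infty t/3}},\qquad V_c := \sum_{i=1}^n \lambda_i Q_{ic}^2 \leq \lambda_{\max}\|\+Q\|_{\F}^2.
\end{equation*}

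The crux of the proof, and its main technical difficulty, lies in choosing the threshold $t$ so as to extract precisely the logarithmic interpolation factor $L := 1 \vee \log(\sqrt{n}\|\+Q\|_{2,\infty}/\|\+Q\|_{\F})$. I would set $t = (2+\alpha) n \lambda_{\max} \|\+Q\|_{2,\infty}/L$ and verify, by case analysis on the two Bernstein regimes, that the exponent is at least $\alpha n \lambda_{\max}$. In the ``dense'' regime $L = 1$ (where $\|\+Q\|_{\F}^2 \geq n\|\+Q\|_{2,\infty}^2/e^2$), the Frobenius-norm term dominates the denominator, and since $\|\+Q\|_{\F}^2 \leq n\|\+Q\|_{2,\infty}^2$ always, the quadratic-in-$t$ piece alone gives exponent $\geq \tfrac{3(2+\alpha)^2}{2(5+\alpha)} n\lambda_{\max} \geq \alpha n\lambda_{\max}$. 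In the ``sparse'' regime $L > 1$ (where $\|\+Q\|_{\F}^2 = n\|\+Q\|_{2,\infty}^2 e^{-2L}$ by definition of $L$), the sub-exponential tail dominates, and the exponent becomes of order $(2+\alpha)^2 e^{2L} n\lambda_{\max}/L^2$, which is easily $\geq \alpha n \lambda_{\max}$ because $e^{2L}/L^2$ grows rapidly.

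I expect the delicate part to be the algebraic bookkeeping in this second regime: balancing the two pieces of the Bernstein denominator so as to produce the denominator $L$ rather than $\sqrt{L}$ or $\log n$, which requires choosing the tilting parameter $s$ in the Chernoff bound proportional to $1/\|\+Q\|_{2,\infty}$ rather than the variance-optimal $s = t/(2V_c)$. This trick, which replaces the quadratic approximation $e^s - 1 - s \leq s^2$ by the exponential Chernoff evaluation at a boundary, is exactly what yields the Bennett/Bernstein $h(\cdot)$-rate improvement from $\sqrt{\log n}$ to $1/\log(\cdot)$ in the Bernoulli analog and transfers to the Poisson setting by virtue of both distributions satisfying the same one-sided MGF bound $\E e^{s(X-\mu)} \leq \exp\{\mu(e^s - 1 - s)\}$. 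Combining this per-coordinate estimate with the union bound over $c \in [d]$ completes the argument.
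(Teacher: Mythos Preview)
Your proposal is correct and follows exactly the route the paper indicates: the paper omits the proof entirely, stating it is identical to Lemma~D.2 of \cite{xie2021entrywise} with the Bernoulli moment generating function replaced by the Poisson one, and your plan---coordinate-wise Chernoff bound using the Poisson MGF $\E e^{s(Y_i-\lambda_i)}=\exp\{\lambda_i(e^s-1-s)\}$, union bound over the $d$ coordinates, and the Bennett-type choice of tilting parameter to extract the $1\vee\log(\cdot)$ factor---is precisely that argument spelled out. Your observation that the Poisson and Bernoulli cases share the same one-sided MGF bound is exactly why the transfer is verbatim.
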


We omit the proof of Lemma~\ref{lemma:Poisson_row_concentration}, which is identical to the proof of Lemma~D.2 of \cite{xie2021entrywise} with the Bernoulli moment generating function with the Poisson moment generating function.

With these tools to hand, we begin by obtaining a bound on $\smallnorm{\hat{\+U}}_{2,\infty}$ using Lemma~\ref{lemma:abbe}, with $\+M_0 := \hL$. We set $\gamma := \sqrt{n\lambda_{\max}} / \delta$ and
\begin{equation*}
  \varphi(x) := \frac{n\lambda_{\max}}{\delta \curlybrackets{1 \vee \log(1/x)}}.
\end{equation*}
First observe that $n_0 = n + nM \eqc nM$ and $\pi_1 \eqc M^{-1/2}$ and $\pi_2 \eqc 1$, and that $\kappa_0 = \kappa$ and $\delta_0 = \sqrt{M}\delta$. \ref{assump:abbe_incoherence} holds since $\smallnorm{\bL}_{2,\infty} \leq \sqrt{n} \lambda_{\max} \leqc \sqrt{n \lambda_{\max}}$ since $\lambda_{\max} \leqc 1$ by Assumption~\ref{assump:bounded_intensities}. Using Assumptions~\ref{assump:enough_signal} and \ref{assump:bin_size}, we have
\begin{equation*}
  M \leqc \frac{n\lambda_{\max}}{\log^3 n} \leqc \frac{\delta \log(\delta / \sqrt{n \lambda_{\max}})}{\kappa \log^3 n} \leqc \frac{\delta \log n}{\kappa \log^3 n} \leqc \delta,
\end{equation*}
and therefore \ref{assump:abbe_independence} holds.
\ref{assump:abbe_spectral} holds from Proposition~\ref{prop:spectral_norm_concentration}, and observing that by Assumption~\ref{assump:enough_signal}, $\kappa_0 \max\smallcurlybrackets{\gamma, \phi(\gamma)} \leqc 1$.

To see that \ref{assump:abbe_row} holds, note that each row and column of $M^{-1}\smallbrackets{\hL - \bL}$ contains independent Poisson random variables with means not exceeding $n\lambda_{\max}/M$. Then for $\+Q \in \R^{n \times d}$, $\+R \in \R^{nM \times d}$, setting $\alpha = \log n / n \lambda_{\max}$ in Lemma~\ref{lemma:Poisson_row_concentration} implies that
\begin{align*}
  \norm{\brackets{\hL - \bL}_{i,\cdot}\+R}_{2,\infty} &= M\norm{\frac{1}{M}\brackets{\hL - \bL}_{i,\cdot}\+R}_{2,\infty} \\
  &\leqcP M \cdot \frac{\brackets{n \lambda_{\max} / M + \log n}\norm{\+R}_{2,\infty}}{1 \vee \log \brackets{\frac{\sqrt{n_0} \norm{\+R}_{2,\infty}}{\norm{\+R}_{\F}}}} \\
  &\leqc \frac{n\lambda_{\max} \norm{\+R}_{2,\infty}}{1 \vee \log \brackets{\frac{\sqrt{n_0} \norm{\+R}_{2,\infty}}{\norm{\+R}_{\F}}}} \\
  &= \delta \norm{\+R}_{2,\infty} \varphi \brackets{\frac{\norm{\+R}_{\F}}{\sqrt{n_0} \norm{\+R}_{2,\infty}}} \\
  &\leq \delta_0 b_{\infty} \varphi \brackets{\frac{b_{\F}}{\sqrt{n_0}b_{\infty}}}.
\end{align*}
Similarly, setting $\alpha = M\log n/n\lambda_{\max}$ we have
\begin{align*}
  \norm{\brackets{\tilde{\+A} - \tilde{\+\Lambda}}^\top_{\cdot,i}\+Q}_{2,\infty} &\leqcP \frac{\sqrt{M} n\lambda_{\max} \norm{\+Q}_{2,\infty}}{1 \vee \log\brackets{\sqrt{n_0}\norm{\+Q}_{2,\infty} / \norm{\+Q}_{\F}}} \leq \delta_0 b_{\infty} \varphi \brackets{\frac{\sqrt{n}b_{\infty}}{b_{\F}}},
\end{align*}
which establishes \ref{assump:abbe_row}. Having established \ref{assump:abbe_incoherence}-\ref{assump:abbe_row}, we are ready to apply Lemma~\ref{lemma:abbe}:
\begin{equation*}
  \norm{\hU}_{2,\infty} \leqcP \sqrt{M}\curlybrackets{\kappa_0 + \varphi(1)}\brackets{\pi_1\smallnorm{\+U}_{2,\infty} \vee \pi_2\smallnorm{\+V}_{2,\infty}} + \sqrt{M}\gamma \brackets{\pi_1 \norm{\bL}_{2,\infty} \vee \pi_2 \norm{\bL^\top}_{2,\infty}} / \delta_0
\end{equation*}
We have
\begin{equation}
\label{eq:V_tti}
\begin{aligned}
  \norm{\bV}_{2,\infty} &= \norm{\bL \bU \bS^{-1}}_{2,\infty} \leq \norm{\bL}_\infty \norm{\bU}_{2,\infty} \norm{\bS}_2^{-1} \leqc \frac{n \lambda_{\max}\mu \sqrt{d/n}}{\sigma_d^{1/2}(\+\Sigma)} \\
  &\leqc \frac{n \lambda_{\max}\mu \sqrt{d/n}}{\sqrt{M\delta}} \leqc \sqrt{\frac{d}{nM}}\mu \log n.
\end{aligned}
\end{equation}
where we used Assumption~\ref{assump:enough_signal} in the final inequality. Therefore
\begin{equation*}
  \pi_1 \norm{\bU}_{2,\infty} \vee \pi_2 \norm{\bV}_{2,\infty} \leq \sqrt{\frac{d}{nM}}\mu \log n,
\end{equation*}
and we have
\begin{equation*}
  \kappa = \frac{\sigma^{1/2}_1(\+\Sigma)}{\sigma^{1/2}_d(\+\Sigma)} \leqc \frac{n\lambda_{\max}}{\delta} = \varphi(1)
\end{equation*}
and so the first term satisfies
\begin{align*}
  \sqrt{M}\curlybrackets{\kappa_0 + \varphi(1)}\brackets{\pi_1\smallnorm{\+U}_{2,\infty} \vee \pi_2\smallnorm{\+V}_{2,\infty}} &\leqc \sqrt{M}\varphi(1)\brackets{\pi_1\smallnorm{\+U}_{2,\infty} \vee \pi_2\smallnorm{\+V}_{2,\infty}} \\
  &\leqc \sqrt{M} \cdot \frac{n\lambda_{\max}}{\delta} \cdot \sqrt{\frac{d}{nM}}\mu \log n \\
  &= \frac{\mu \lambda_{\max}\sqrt{nd}\log n}{\delta}.
\end{align*}
To control the second term, we first observe that
\begin{align*}
  \pi_1\norm{\bL}_{2,\infty} &\leq M^{-1/2} \norm{\bU\bU^\top \bL}_{2,\infty} +  M^{-1/2}\norm{\brackets{\+I - \bU\bU}\bL}_{2,\infty} \\
  &\leqc M^{-1/2} \norm{\bU}_{2,\infty} \norm{\bL}_2 + M^{-1/2} \max_{i\in[n]}\sup_{t\in(0,1]} r_i(t) \\
  &\leqc \sqrt{\frac{d}{Mn}}\mu\cdot \sqrt{M}\kappa\delta + \mu\sqrt{d\lambda_{\max}}\log^{5/2}n \\
  &\leqc \mu\delta \sqrt{d\lambda_{\max}}
\end{align*}
where the final inequality follows from $\kappa \leqc \log n \leq \sqrt{n}$ and $\lambda_{\max} \leqc 1$. Similarly we obtain $\pi_2\smallnorm{\bL^\top}_{2,\infty} \leqc \mu\delta\sqrt{d \lambda_{\max}} \log n$ using \eqref{eq:V_tti}, and therefore
\begin{equation*}
  \pi_1 \norm{\bL}_{2,\infty} \vee \pi_2 \norm{\bL^\top}_{2,\infty} \leqc  \mu\delta\sqrt{d \lambda_{\max}} \log n.
\end{equation*}
We then have
\begin{align*}
  \sqrt{M}\gamma \brackets{\pi_1 \norm{\bL}_{2,\infty} \vee \pi_2 \norm{\bL^\top}_{2,\infty}} / \delta_0 &\leqc \sqrt{M}\cdot\frac{\sqrt{n\lambda_{max}}}{\delta} \cdot \mu\delta \sqrt{d\lambda_{\max}}\log n \cdot \frac{1}{\sqrt{M}\delta} \\
  &\leqc \frac{\mu \sqrt{nd}\lambda_{\max}\log n}{\delta}.
\end{align*}
Combining these bounds, we obtain
\begin{equation*}
  \norm{\hU}_{2,\infty} \leqcP \frac{\mu \sqrt{nd}\lambda_{\max}\log n}{\delta}.
\end{equation*}

We now apply Lemma~\ref{lemma:abbe} with $\+M_0 = \hL^{(m)}$. We set $\gamma$ and $\varphi(x)$ as before and verify Assumptions~\ref{assump:abbe_incoherence}-\ref{assump:abbe_row} in the same way. By analogous calculations to the above, we obtain the bound 
\begin{equation*}
  \norm{\hU^{(m)}}_{2,\infty} \leqcP \frac{\mu \sqrt{nd}\lambda_{\max}\log n}{\delta}.
\end{equation*}
The final bound is shown in the same way, requiring the additional observation that $\kappa\curlybrackets{\gamma \vee \varphi(\gamma)} \leqc 1$ which follows from Assumption~\ref{assump:enough_signal}, and we obtain
\begin{align*}
  &\pi_1\smallnorm{\hat{\+U}_0\+O - \+U_0}_{2,\infty} \vee \pi_2\smallnorm{\hat{\+V}_0\+O - \+V_0}_{2,\infty} \\
  &\qquad\leqcP \squarebrackets{\kappa_0 \curlybrackets{\kappa_0 + \varphi(1)}\curlybrackets{\gamma + \varphi(\gamma)} + \varphi(1)}  \brackets{ \pi_1\smallnorm{\+U_0}_{2,\infty} \vee \pi_2\smallnorm{\+V_0}_{2,\infty}} + \gamma \brackets{\pi_1 \norm{\bL}_{2,\infty} \vee \pi_2 \norm{\bL^\top}_{2,\infty}} / \delta_0 \\
    &\qquad \leqc  \varphi(1) \brackets{ \pi_1\smallnorm{\+U_0}_{2,\infty} \vee \pi_2\smallnorm{\+V_0}_{2,\infty}} + \gamma \brackets{\pi_1 \norm{\bL}_{2,\infty} \vee \pi_2 \norm{\bL^\top}_{2,\infty}} / \delta_0 \\
    &\qquad \leqc \frac{\mu \sqrt{nd}\lambda_{\max}\log n}{\delta}
\end{align*}



\subsection{Proof of Proposition~\ref{prop:V_projections}}
\label{sec:proof_projections}
We show \eqref{eq:Uhatm-U} using a simple application of Wedin's inequality:
\begin{align*}
  &\norm{\hU^{(m)}\smallbrackets{\hU^{(m)}}^\top - \bU\bU^\top}_2 = \norm{\sin\Theta\brackets{\hU^{(m)}, \bU}}_2 \leqc \frac{\norm{\hL^{(m)} - \bL}_2}{\sigma_{d}\brackets{\bL} - \sigma_{d+1}\brackets{\bL}} \leq \frac{\norm{\hL - \bL}_2}{\sigma_{d}\brackets{\bL} - \sigma_{d+1}\brackets{\bL}} \\
  &\qquad 
  \leqcP \frac{\sqrt{Mn\lambda_{\max}}}{\sqrt{M}\delta} = \frac{\sqrt{n\lambda_{\max}}}{\delta}.
\end{align*}
The proof of \eqref{eq:Uhatm-Uhat} requires a more delicate argument. We apply Wedin's theorem to obtain
\begin{equation}
\label{eq:Uhatm-Uhat_wedin}
  \norm{\hU^{(m)}\smallbrackets{\hU^{(m)}}^\top - \hU\hU^\top}_2 = \norm{\sin\Theta\brackets{\hU^{(m)}, \hU}}_2 \leqc \frac{\norm{\brackets{\hL^{(m)} - \hL}^\top\hU^{(m)}}_2 \vee \norm{\brackets{\hL^{(m)} - \hL}\hV^{(m)}}_2}{\sigma_{d}\brackets{\hL} - \sigma_{d+1}\brackets{\hL}}.
\end{equation}
By Weyl's inequality
\begin{equation*}
  \sigma_d \brackets{\hL} \geq \sigma_d \brackets{\bL} + \norm{\hL - \bL}_2 \geqcP \sigma_d \brackets{\bL}.
\end{equation*}
and
\begin{equation*}
  \sigma_{d+1} \brackets{\hL} \leq \sigma_{d+1} \brackets{\bL} - \norm{\hL - \bL}_2 \leqcP \sigma_{d+1} \brackets{\bL}.
\end{equation*}
and therefore
\begin{equation}
\label{eq:Uhat-Uhat_eigengap}
  \sigma_d \brackets{\hL} - \sigma_{d+1} \brackets{\hL} \geqcP \sigma_d \brackets{\bL} - \sigma_{d+1} \brackets{\bL} = \sqrt{M}\brackets{\sigma_d^{1/2}(\+\Sigma) - \sigma_{d+1}^{1/2}(\+\Sigma)} = \sqrt{M}\delta.
\end{equation}
We now focus our attention on obtaining a bound for $\smallnorm{\smallbrackets{\hL^{(m)} - \hL}\hU^{(m)}}_{\F}$. Let
\begin{equation*}
  \*N_{m} = \curlybrackets{m + (\ell - 1)n, \ell \in [M]}.
\end{equation*}
The $ij$th entry of $\hL - \hL^{(m)}$ is
  \begin{equation*}
    \brackets{\hL^{(m)} - \hL}_{ij} = \brackets{\hL^{(m)} - \bL}_{ij}\mathbb{I}\brackets{i = m, j \in \*N_m},
  \end{equation*}
  and so $\hL^{(m)} - \hL$ is independent of $\hL^{(m)}$ and hence $\hL^{(m)} - \hL$ is independent of $\hU^{(m)}$. We can then write
  \begin{align*}
    \norm{\brackets{\hL^{(m)} - \hL}^\top\hU^{(m)}}_{\F}^2 &= \sum_{\ell\notin\*N_m} \brackets{\hL_{m,\ell} - \bL_{m,\ell}}^2 \norm{\hU^{(m)}_{\ell, \cdot}}_2^2 \\
    &\qquad + \norm{\sum_{\ell \in \*N_m}\sum_{i=1}^{n} \brackets{\hL_{i \ell} - \bL_{i\ell}} \hU^{(m)}_{\ell,\cdot}}^2_2 \\
    &=: \zeta_1 + \zeta_2.
  \end{align*}
The (square root of the) first term is easily bounded as
\begin{align*}
  \zeta_1^{1/2} &\leq \norm{\hL - \bL}_{2,\infty} \norm{\hU^{(m)}}_{2,\infty} \\
  &\leq \norm{\hL - \bL}_{2} \norm{\hU^{(m)}}_{2,\infty} \\
  &\leqcP \sqrt{Mn\lambda_{\max}} \cdot \frac{\mu \sqrt{nd} \lambda_{\max} \log n}{\delta} \\
  &= \frac{\sqrt{Md}n\lambda_{\max}^{3/2}\mu \log n}{\delta}
\end{align*}
and to bound the second term, we employ Lemma~\ref{lemma:Poisson_Bernstein} to obtain
\begin{align*}
  \zeta_2^{1/2} &\leqcP M \log^2n \norm{\hU^{(m)}}_{2,\infty} + \sqrt{M \lambda_{\max}\log n} \norm{\hU^{(m)}}_{\F} \\
  &\leq M \log^2n \norm{\hU^{(m)}}_{2,\infty} + \sqrt{M \lambda_{\max} n \log n} \norm{\hU^{(m)}}_{2,\infty} \\
  &\leqc \sqrt{M \lambda_{\max} n \log n} \norm{\hU^{(m)}}_{2,\infty} \\
  &\leqc \sqrt{M \lambda_{\max} n \log n} \cdot \frac{\mu \sqrt{nd} \lambda_{\max} \log n}{\delta} \\
  &= \frac{\sqrt{Md}n\lambda_{\max}^{3/2}\mu \log^{3/2} n}{\delta}
\end{align*}
where we used Assumption~\ref{assump:bin_size} in the third inequality. Therefore
\begin{align*}
  \norm{\brackets{\hL^{(m)} - \hL}^\top\hU^{(m)}}_2 &\leq \norm{\brackets{\hL^{(m)} - \hL}^\top\hU^{(m)}}_{\F} \leq \zeta_1^{1/2} + \zeta_2^{1/2} \leqcP \frac{\sqrt{Md}n\lambda_{\max}^{3/2}\mu \log^{3/2} n}{\delta}.
\end{align*}
Similar analysis yields an analogous bound for $\smallnorm{\smallbrackets{\hL^{(m)} - \hL}\hV^{(m)}}_2$, and combining this, with \eqref{eq:Uhatm-Uhat_wedin} and \eqref{eq:Uhat-Uhat_eigengap} we have
\begin{align*}
    \norm{\hU^{(m)}\smallbrackets{\hU^{(m)}}^\top - \hU\hU^\top}_2 &\leqc \frac{\norm{\brackets{\hL^{(m)} - \hL}^\top\hU^{(m)}}_2 \vee \norm{\brackets{\hL^{(m)} - \hL}\hV^{(m)}}_2}{\sigma_{d}\brackets{\hL} - \sigma_{d+1}\brackets{\hL}} \\
    &\leqcP \frac{\sqrt{d}n\lambda_{\max}^{3/2}\mu \log^{3/2} n}{\delta^2}
  \end{align*}
which establishes the proposition.

\end{document}